\documentclass{article}



\usepackage[final]{neurips_2023}


\usepackage[dvipsnames]{xcolor}
\usepackage[utf8]{inputenc} 
\usepackage[T1]{fontenc}    
\usepackage{hyperref}       
\usepackage{url}            
\usepackage{booktabs}       
\usepackage{amsfonts}       
\usepackage{nicefrac}       
\usepackage{microtype}      
\usepackage{subfigure}
\usepackage{adjustbox}
\bibliographystyle{unsrtnat}
\usepackage{multirow}
\usepackage{xcolor}
\usepackage{amsfonts,amssymb}
\usepackage{amsmath}
\usepackage{amsthm}
\usepackage[switch]{lineno}
\usepackage{enumitem}
\usepackage{xspace}
\usepackage{amsmath}  
\usepackage{wrapfig}
\usepackage{tikz}
\usepackage{pgfplots}
\pgfplotsset{compat = 1.3}  
\usepackage{colortbl}

\colorlet{graphcl}{blue!40}
\colorlet{graphcltdl}{blue}
\colorlet{simgrace}{red!40}
\colorlet{simgracetdl}{red}
\colorlet{graphlog}{ForestGreen!40}
\colorlet{graphlogtdl}{ForestGreen}
\colorlet{joao}{orange!40}
\colorlet{joaotdl}{orange}
\colorlet{pipcontextpred}{red}
\colorlet{contextpred}{red!40}
\colorlet{SVM}{blue}
\colorlet{ADGCL}{blue}

\colorlet{cgood}{Green}
\colorlet{cbad}{Peach}
\colorlet{csota}{NavyBlue}

\newcommand{\filt}[1]{\texttt{#1}}

\usetikzlibrary{
    arrows.meta,
}
\pgfplotsset{
    compat=1.15,
basic axis style/.style={
    ylabel shift = -.8pt,
    tick label style={font=\small},
    legend cell align=left, 
    legend style={
    font=\small, 
    at={(1.00,1.05)}, 
    anchor=south east
    }, %
    xtick =data,
    every axis plot/.append style={thick}
    }
}

\definecolor{darkred}{rgb}{0.55, 0.0, 0.0}
\definecolor{deepgreen}{rgb}{0, 0.85, 0}

\newcommand{\pip}{TAE\xspace}

\newcommand{\TDL}{TDL\xspace}
\newcommand{\tdl}{TDL\xspace}
\newcommand{\tdli}{\ensuremath{\text{TDL}_n}\xspace}

\newcommand{\topfp}{\ensuremath{I_G}\xspace}
\newcommand{\G}{\ensuremath{G}\xspace}
\newcommand{\Gi}{\ensuremath{G_i}\xspace}
\newcommand{\hG}{\ensuremath{h_G}\xspace}
\newcommand{\hGi}{\ensuremath{h_{G_i}}\xspace}
\newcommand{\hv}{\ensuremath{h_v}\xspace}

\newcommand{\Lmc}{\ensuremath{\mathcal{L}}\xspace}
\newtheorem{theorem}{Lemma}

%

\title{Improving Self-supervised Molecular Representation Learning using Persistent Homology}


\author{%
  Yuankai Luo \\
  Beihang University\\
  \texttt{luoyk@buaa.edu.cn} \\
  \And
  Lei Shi \\
  Beihang University \\
  \texttt{leishi@buaa.edu.cn} \\
  \AND
  Veronika Thost \\
  MIT-IBM Watson AI Lab,\\IBM Research \\
  \texttt{veronika.thost@ibm.com} \\
}

\begin{document}

\maketitle

\begin{abstract}
Self-supervised learning (SSL) has great potential for molecular representation learning given the complexity of molecular graphs, the large amounts of unlabelled data available, the considerable cost of obtaining labels experimentally, and the hence often only small training datasets. The importance of the topic is reflected in the variety of paradigms and architectures that have been investigated recently. Yet the differences in performance seem often minor and are barely understood to date. 
In this paper, we study SSL based on persistent homology (PH), a mathematical tool for modeling topological features of data that persist across multiple scales. It has several unique features which particularly suit SSL, naturally offering: different views of the data, stability in terms of distance preservation, and the opportunity to flexibly incorporate domain knowledge.
We (1) investigate an autoencoder, which shows the general representational power of PH, and (2) propose a contrastive loss that complements existing approaches. 
We rigorously evaluate our approach for molecular property prediction and demonstrate its particular features in improving the embedding space:
after SSL, the representations are better and offer considerably more predictive power than the baselines over different probing tasks; our loss 
increases baseline performance, sometimes largely; and we often obtain substantial improvements over very small datasets, a common scenario in practice. 
\end{abstract}%

\section{Introduction}
\label{sec:introduction}

Self-supervised learning (SSL) has great potential for molecular representation learning given the complexity of molecules, 
the large amounts of unlabelled data available, the considerable cost of obtaining labels experimentally, and the resulting often small datasets. The importance of the topic is reflected in the variety of paradigms and architectures that are investigated 
\citep{xia2023systematic}. 

Most existing approaches use \emph{contrastive learning} (CL) as proposed in \citep{you2020graph}. 
CL aims at learning an embedding space by comparing training samples and encouraging representations from positive pairs of samples to be close in the embedding space while representations from negative pairs are pushed away from each other. 
However, usually, each given molecule is considered as its own class, that is, a positive sample pair consists of two different views of the same molecule; and all other samples in a batch are used as the negative pairs during training. We observe that this represents a very coarse-grained comparison basically ignoring all commonalities and differences between the given molecules. Therefore also current efforts in molecular CL are put in constructing 
views that capture the possible relations between molecules: \textbf{GraphCL} \citep{you2020graph} proposes four simple augmentations (e.g., drop nodes), but fix two per dataset;
\textbf{JOAO} \citep{you2021graph} extends the former by automating the augmentation selection;
\textbf{GraphLoG} \citep{xu2021self} optimizes in a local neighborhood of the embeddings, based on similarity, and globally, using prototypes based on semantics; and 
\textbf{SimGRACE} \citep{xia2022simgrace} creates views 
using a second encoder, a 
perturbed version of the molecule's ones;
also most recent works follow this paradigm~\citep{wu2023sega}.

\textbf{Quality of Embedding Spaces is Underinvestigated.} 
Although differences in performance are often observed, they seem marginal and are not yet fully comprehended. With a few exceptions, the models are usually evaluated over the MoleculeNet benchmark \citep{wu2018moleculenet} only, a set of admittedly rather diverse downstream tasks. While those certainly provide insights into model performance,
recent evaluation papers have pointed out that the picture may be very different when the models are evaluated in more detail \citep{sun2022does, wang2022evaluating, pmlr-v206-akhondzadeh23a, deng2022taking}. In particular, 
\citep{pmlr-v206-akhondzadeh23a} propose to use linear probing to analyse the actual power of the representations instead of a few, specific downstream tasks. This type of evaluation is also common in other areas of DL \citep{chen2020simple}. \citep{deng2022taking} consider other, smaller datasets, and compare to established machine learning (ML) approaches. 

\begin{figure*}[t]   \center{\includegraphics[height=2.4cm]  {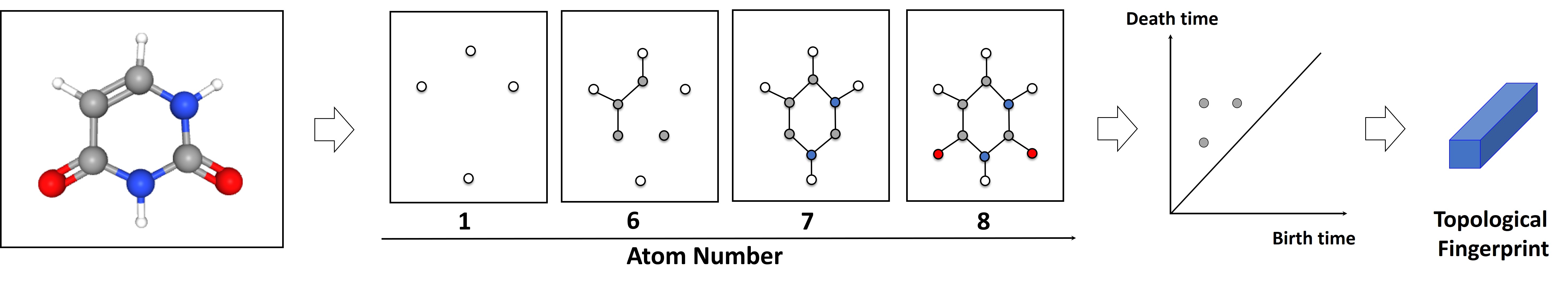}}   \caption{Topological fingerprints are constructed using a filter (e.g., atom number) and recording resulting topological structures in a PD 
which is then vectorized. } \label{fig:ph} \end{figure*}
\textbf{We study molecular SSL based on persistent homology (PH).} 
PH is a mathematical tool for modeling topological features of data that persist across multiple scales. 
In a nutshell, the molecule graph can be constructed sequentially using a custom filter, such that atom nodes and the corresponding bond edges only appear once they meet a given criterion (e.g., based on atomic mass); see Figure~\ref{fig:ph}. 
PH then captures this sequence in a persistence diagram (PD), and the area has developed various methods to translate these diagrams into \emph{topological fingerprints}, which can be used for ML \citep{ali2022survey}. 
{  
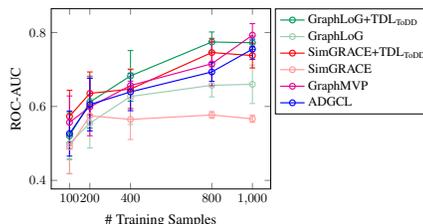
\begin{wrapfigure}{h}{0.4\linewidth}
\centering
\begin{tikzpicture}[scale=0.54,line width=100pt]
\begin{axis}[
width  = 0.5\textwidth,
xlabel = {\# Training Samples}, 
ylabel = {ROC-AUC},
ylabel shift = -.8pt,
tick label style={font=\small},
legend entries = {GraphLoG+TDL\textsubscript{ToDD},GraphLoG,SimGRACE+TDL\textsubscript{ToDD},SimGRACE,
GraphMVP,ADGCL}, 
legend cell align=left, 
legend style={font=\small, 
at={(1.02,1.0)}, 
anchor=north west}, 
xtick =data,
every axis plot/.append style={thick}
]
\addplot [color=graphlogtdl, mark=o,]
 plot [error bars/.cd, y dir = both, y explicit]
 table[x =x, y =GraphLoGTDLvr, y error =GraphLoGTDLstdvr,col sep=comma]{csvsv/clintox.csv};
 \addplot [color=graphlog, mark=o,]
 plot [error bars/.cd, y dir = both, y explicit]
 table[x =x, y =GraphLoG, y error =GraphLoGstd,col sep=comma]{csv/clintox.csv};
\addplot [color=simgracetdl, mark=o,]
 plot [error bars/.cd, y dir = both, y explicit]
 table[x =x, y =SimGRACETDLvr, y error =SimGRACETDLstdvr,col sep=comma]{csvsv/clintox.csv};
 \addplot [color=simgrace, mark=o,]
 plot [error bars/.cd, y dir = both, y explicit]
 table[x =x, y =SimGRACE, y error =SimGRACEstd,col sep=comma]{csv/clintox.csv};
   \addplot [color=Magenta, mark=o,]
 plot [error bars/.cd, y dir = both, y explicit]
 table[x =x, y =GraphMVP, y error =GraphMVPstd,col sep=comma]{csv/mgssl_graphmvp_clintox.csv};
 \addplot [color=blue, mark=o,]
 plot [error bars/.cd, y dir = both, y explicit]
 table[x =x, y =adgcl, y error =adgclstd,col sep=comma]{csvsv/clintox.csv};
\end{axis}
\end{tikzpicture}
\caption{\tdl may yield great improvements in the low-data scenario; ClinTox.}
\label{fig:small-data-intro}
\end{wrapfigure}%
Topological fingerprints have been used for \emph{supervised} ML over molecules by  chemists \citep{krishnapriyan2021machine, demir2022todd} 
and recent work in AI has shown promising results compared to the commonly used ECFP embeddings \citep{rogers2010extended}. 
Moreover, many of these fingerprints have been shown to be \emph{stable}, in the sense that distances between the PDs are reflected in the distances between the corresponding fingerprints.
\emph{We point out that the unique features of topological fingerprints particularly suit molecular SSL}, naturally offering: different views of the data (i.e., by switching the filter function), stability in terms of distances, and the opportunity to flexibly incorporate domain knowledge. 

}

\textbf{Our Idea.} We consider the topological fingerprints of the molecules in the pre-training data as views and exploit their stability to model the distances between the \emph{given} molecules (i.e., instead of between a molecule and/or its views). In particular, we use them for fine-grained supervision in SSL.

\textbf{Contributions.} 
\begin{itemize}[leftmargin=*,topsep=0pt,noitemsep]
\item
We propose a \textbf{topological distance contrastive loss (\tdl)} which, as outlined above, shapes the embedding space by providing supervision for the relationships between molecules, especially the ones in the data. \tdl is complementary and 
\emph{can be flexibly and efficiently applied to complement any CL approach}.
We extensively evaluate it combining \tdl with four established CL models. 
\item 
We also consider \emph{alternative paradigms for comparison}: we study 
a straightforward \textbf{topological fingerprints autoencoder (\pip)}. 
\item Our evaluation particularly focuses on the \emph{potential in improving the embedding space} and we demonstrate the gain in representation power in detail empirically (see Section~\ref{sec:eval-pt}): \tdl enables the models to learn, in a sense, calibrated distances; considerably improves linear probing performance; and the fine-grained supervision may mitigate deficiencies of individual models. 
\item Over downstream data (see Section~\ref{sec:eval-ft}) the performance increases depend more on the model and data. Notably, \tdl is able to considerably improve GraphCL and make it competitive with SOTA.
Moreover, \tdl \emph{strongly improves various models in the low-data scenario} (see Figure~\ref{fig:small-data-intro}).
\end{itemize}
Our implementation is available at \url{https://github.com/LUOyk1999/Molecular-homology}.

\section{Background and Related Works}
\label{sec:preliminaries}\label{sec:related}


\textbf{Graph Homology.} 
 Molecules are 
 graphs $G = (V, E)$ with nodes $V$, the atoms, and bond edges $E$.
In algebraic topology, graph homology considers such a graph $G$ as  
a topological space. 
We focus on \emph{simplices}:
every node is a 0-simplex, and every edge is a 1-simplex.
In the context of graphs, we obtain a 1-dimensional \emph{simplicial complex} $X=V\cup E$ in an easy way, by considering the simplices induced by $G$; the dimension is determined by the maximal dimension of the contained simplices. 

\textbf{Persistent Homology (PH).} 
We introduce the most important concepts and, for more details, refer to \citep{dey2022computational,edelsbrunner2022computational}. 
Persistent homology is a mathematical tool for modeling topological features of data that persist across multiple scales, comparable to different resolutions. These features are captured in persistent diagrams which, in turn, can be vectorized in fingerprints. We outline the process simplified below and in Figure~\ref{fig:ph}; see Appendix~\ref{app:exp_settings} for details. 
%

First, the goal is to construct a nested sequence of subgraphs $G_1 \subseteq  ... \subseteq G_N = G$ ($1\le i\le N$). As described above, these graphs can be considered as simplicial complexes, hence we have simplices which we can record in a persistence diagram.
To this end, we consider one of the most common types of filtration methods: sublevel/superlevel filtrations. A \emph{sublevel filtration} is a  function $f:X \rightarrow \mathbb{R}$ over all simplices. A simple such function $f$ can be a node-valued function (e.g., map atom nodes to their atomic number) that is expanded to the edges as $f(u,v) = max(f(u), f(v))$. Denote by $X_a$ the sublevel set of $X$, consisting of simplices whose filtration function values $\le f(a)$, $X_a = \{x \in X|f(x) \le f(a)\}$. As the threshold value $a$ increases from $\mathrm{min}_{v\in V} f(v)$ to $\mathrm{max}_{v\in V} f(v)$,  let $G_a$ be the subgraph of $G$ induced  by $X_a$; i.e., $G_a = (V_a, E_a)$ where $V_a = \{v \in X_a\}$ and $E_a = \{e_{rs} \in X_a\}$. This process yields a nested sequence of subgraphs $G_1 \subseteq G_2 \subseteq ... \subseteq G_N = G$. As $X_a$ grows to $X$, new topological structures 
gradually appear (born) and disappear (die).

Second, a \emph{persistence diagram} (PD) is obtained as follows. For each topological structure $\sigma$, PH records its first appearance and its first disappearance in the filtration sequence. And this is represented by a unique pair $(b_\sigma, d_\sigma)$, where $1 \le b_\sigma \le d_\sigma \le N$. We call $b_\sigma$ the birth time of $\sigma$, $d_\sigma$ the death time of $\sigma$ and $d_\sigma - b_\sigma$ the persistence of $\sigma$.
PH records all these birth and death times of the topological
structures in persistence diagram $PD(G) = \{(b_\sigma, d_\sigma) | \sigma \in H_k(G_i)\}$, where \( H_k(G_i) \) denotes the \( k \)-th homology group of \( G_i \), and in practice, \( k \) typically takes values 0 or 1. This step is rather standard and corresponding software is available \citep{otter2017roadmap}.

\textbf{PH in ML.} The vectorization of PDs for making them applicable in ML has been studied extensively \citep{ali2022survey}, 
and proposals range from simple statistical descriptions to more complex \emph{persistence images} (PIs)
\citep{adams2017persistence}. 
In a nutshell, for PIs, the PD is considered as a 2D surface, tranformed using a Gaussian basis function, and finally discretized into a vector.
Furthermore, the Euclidean distance between PIs is \emph{stable} with respect to the 1-Wasserstein distance between PDs \citep{adams2017persistence}, which essentially means that the former is bounded by a constant multiple of the latter.
We focus on PIs based on the promising results reported in supervised settings \citep{demir2022todd, krishnapriyan2021machine}. 
Our study was inspired by the ToDD framework, 
applying pre-trained vision transformers to custom 2D topological fingerprints \citep{demir2022todd}. 
Interestingly, they use the distances between the transformer's molecule representations to construct a suitable dataset for subsequent \emph{supervised} training using triplet loss.
Our focus is on SSL 
and we apply the ToDD fingerprints as more complex, expert fingerprints in comparison to the ones based on atomic mass only. 
Recently, various other approaches of integrating PH into ML are explored, but these are only coarsely related to our work (e.g., \citep{horn2022topological, yan2022neural}). 



\textbf{Molecular SSL.} Since the foundational work of \citep{Hu*2020Strategies}, who have proposed several 
effective methods, such as the node context prediction task 
{ContextPred},  the area is advancing at great pace
\citep{xia2023systematic,xie2022self}. 
Our proposal falls into the category of contrastive learning as introduced in Section~\ref{sec:introduction}. 
Our study focuses on the potential of \emph{PH to complement existing models} such as \citep{you2020graph} and follow-up approaches. 
There are other related CL approaches to which we do not aim to compare to directly; e.g., 
works including 3D geometry \citep{liu2022pretraining,stark20223d} 
or considerably scaling up the pre-training data \citep{ross2022molformer,zhou2023unimol}. 
In contrast, 
our focus is on improvement through exploiting unused facets of the data. 


\textbf{SSL based on Distances.} We found only few works that explicitly incorporate distances into SSL. 
With regard to graphs, \citep{kim2022graph} exploit the graph edit distance between a graph and views to similarly represent the actual distance in the embedding space inside the loss function (i.e., vs. invariance to the view transformation). This distance computation is feasible since it can be easily obtained based on the transformation. Our work focuses on exploring in how far distances in terms of PH can be applied towards the same goal; moreover, this makes it feasible to 
model the distances between given samples.
\citep{wang2022improving} consider a loss very similar to our proposal based on the more coarse-grained ECFPs, instead of PIs, but they focus on chemistry aspects instead of SSL more generally.
Also \citep{zha2022supervised} model distances between {given samples} in the context of CL in a similar way, but in the context of a supervised scenario, where the distances are the differences between given regression labels; note that they also list other coarser related works. 
\citep{taghanaki2021robust} apply custom distances inside a triplet loss but similarly exploit label information, in order to select the positive and negative samples.
Beyond that, distances have been applied to obtain hard negatives \citep{zhangcontrastive,demir2022todd}, 
and are exploited in  various other ways rather different from our method. 
For instance, 
several recent approaches aim to structure the embedding space by exploiting correlations between samples which are, in turn, obtained using 
nearest neighbors 
\citep{caron2020unsupervised, dwibedi2021little, ge2023soft}. 
Also methods considering equivariance between view transformations implicitly model distances  \citep{chuang2022diffcse,devillers2023equimod}.

\textbf{Others.} 
While our focus on CL and distances hints at a close relationship to {deep metric learning} \citep{sym11091066}, 
these works usually do not have explicit distances for supervision but, for instance, exploit labels. 
Observe that this indicates the unique nature of the distances PH provides us with. Lastly, {SSL more generally} \citep{rethmeier2023primer} 
has naturally been inspiration for molecular SSL and is certainly one reason why the field was able to advance so fast. We use its insights by putting focus on linear probing and investigating dimensional collapse
\citep{hua2021feature}.

\section{Methodology}
\label{sec:approach}


The main goal of SSL is to learn an embedding space that faithfully reflects the complexity of molecular graphs, captures the topological nature of the molecular representation space overall, and whose representations can be effectively adapted given labels during fine-tuning.
We propose methods based on persistent homology and show that they naturally suit SSL.  
First, persistence images (or comparable topological fingerprints) offer great versatility in that they are able to flexibly represent knowledge about the graphs and allow for incorporating domain knowledge.
Second, they capture this knowledge based on persistence diagrams, which is very different from - and hence likely complementary to - the common graph representation methods in deep learning. Third, and most importantly, their stability represents a unique feature, which makes them ideal views for SSL. 

We study two SSL approaches based on PH and evaluate them in detail in terms of both their impact on representation power 
(see Section~\ref{sec:eval-pt}) and on downstream performance (see Section~\ref{sec:eval-ft}):%
\begin{itemize}[leftmargin=*,topsep=0pt,noitemsep]
\item In order to study the impact of PH on SSL in general, we consider a simple \emph{autoencoder} architecture.
\item Since we consider topological fingerprints to represent information that is complementary to that used in existing approaches (we also show that they are not ideal alone) and because 
their topological nature can be used to improve the latter in unique ways, we developed a loss function based on \emph{contrastive learning} that complements existing approaches. 
\end{itemize}
In this paper, our focus is on obtaining initial insights about the potential PH offers for molecular SSL, hence we chose one basic solution and one providing unique impact. There are certainly other promising ways to be investigated in the future.

\subsection{Topological Fingerprints AutoEncoder (\pip)}

Autoencoders are designed to reconstruct certain inputs given context information for the input graph~\G. We consider 
topological fingerprints \topfp as the reconstruction targets, specifically, PIs. 

{
\begin{wrapfigure}{h}{0.6\linewidth}
\centering%
\centering%
\includegraphics[width=1.62in]{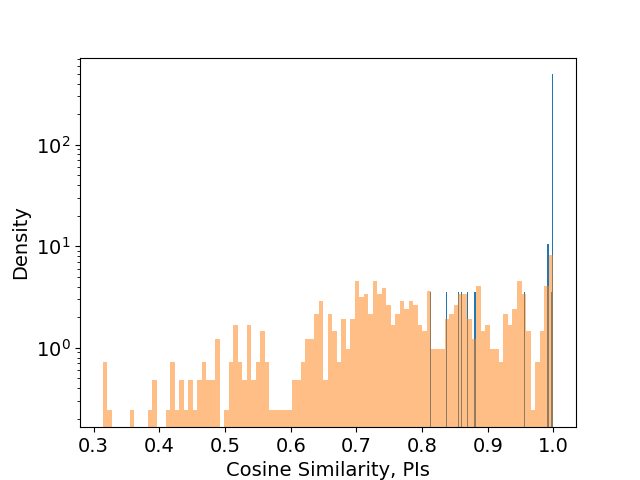}
\includegraphics[width=1.62in]{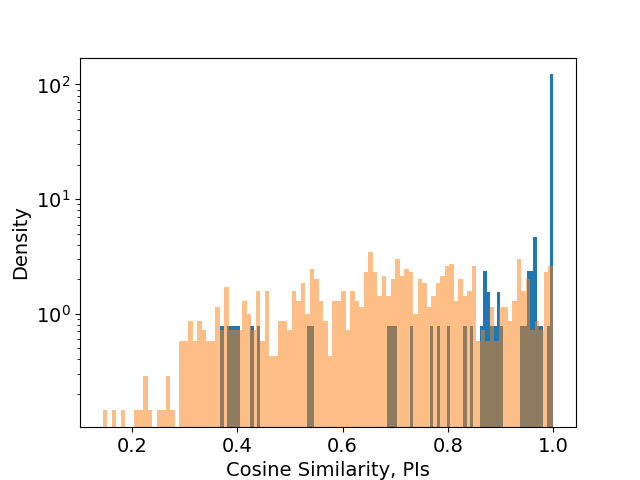}
\centering
\caption{Comparison of molecule similarity based on PIs
to similarity between corresp. ECFPs (blue: 20\% most similar).}
\label{fig:pi-space-uselful-fp-sim}
\end{wrapfigure}

The model itself employs a typical graph encoder~$\varepsilon(\cdot)$ 
for computing 
embeddings \hv 
for the individual nodes $v\in V$; for simplicity, we write  $h_V=\varepsilon \left( G \right) $, where $h_V = \{h_{v}|v \in V\}$ represents all node representations. Next, we pass it through a projection head $g(\cdot)$ and readout function $R(\cdot)$ (e.g., mean pooling) to obtain the graph-level representation $\hG=R(g(h_V))$. 
Hence the context is the particular knowledge the graph encoder was designed to exploit; in our evaluation, we consider the commonly used GIN \citep{xupowerful} in order to ease comparison with related works. As our loss formula for this \emph{topological fingerprints autoencoder} (TAE) we use the regular mean squared error:
\begin{linenomath*}
$
\Lmc_{\text{\pip}}= \sum_\G{\text{MSE}\left( \hG,\topfp\right).}
$
\end{linenomath*}

}  

\textbf{Observations.} 
Given stable fingerprints such as PIs, one main usual criticism for autoencoders, the fact that they fail to capture inter-molecule relationships, does not apply if the model is able to reliably learn the PIs, which we show \pip does; in particular, we observe a strong correlation between PIs and their reconstructions (see Table~\ref{tab:PI-pearson}).
The successful application of topological fingerprints in supervised learning and the simplicity of the 
model justify the study of \pip for analysis, however, we note that the PIs employed may not necessarily capture all information which is critical for a particular downstream task. For example, Figure~\ref{fig:pi-space-uselful-fp-sim} shows that PIs may offer a generally fitting embedding space, in that randomly chosen molecule pairs with similar standard fingerprints based on substructures (blue, Tanimoto similarity of ECFPs) 
have rather similar PIs (x-axis, cosine similarity). 
However, 
there is a clear difference for the two depicted datasets, sometimes PIs as shown here, based on the ToDD filtration \cite{demir2022todd},
fail to fully capture structural similarity; and this is directly reflected in performance (see Table~\ref{tab:benchmark-results-ours}). While optimization on a case-by-case basis w.r.t.\ the choice of fingerprints is possible, this is not in the spirit of foundational SSL, requires expert knowledge or extensive tuning, and may still not be sufficient. 
For that reason, we suggest to apply them together with existing approaches and show that they offer unique benefits.

\subsection{Topological Distance Contrastive Loss (\tdl)}\label{subsec:tdl}

\begin{figure*}[t]   \center{\includegraphics[width=12cm]  {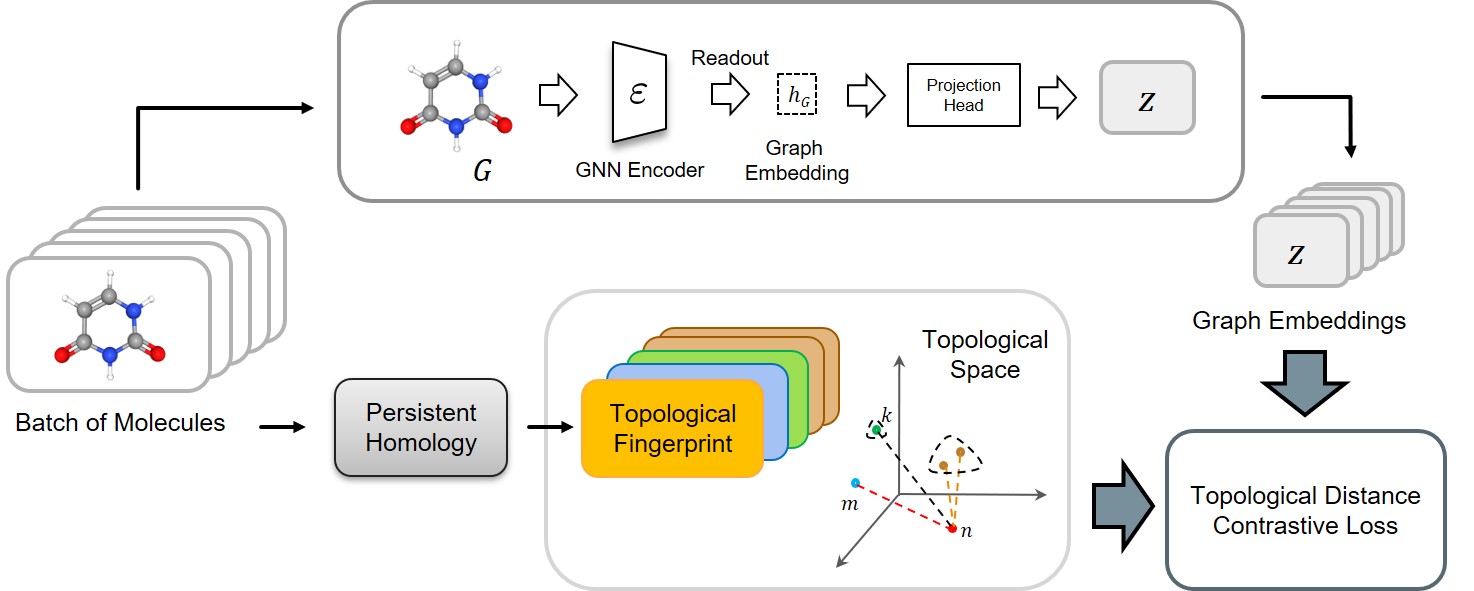}}   \caption{\label{1} Overview of Topological Distance Contrastive Learning.} \label{fig:tdcl} \end{figure*} 

Contrastive learning aims at learning an embedding space by comparing training samples and encouraging representations from positive pairs of
examples to be close in the embedding space while representations from negative pairs are pushed away from each other. 
Current approaches usually consider each sample as its own class, that is, a positive pair consists of two different views  of it; and all other samples in a batch are used as the negative pairs during training. We observe that this represents a very coarse-grained comparison basically ignoring all commonalities and differences between the given molecules; this is also why current efforts in graph CL focus on constructing views that capture the possible relations between molecules.

\textbf{Our Idea, Figure~\ref{fig:tdcl}.} We exploit the stability of topological fingerprints such as PIs to model the distances between the \emph{given} molecules (i.e., instead of just views of the same molecule) and to use them for fine-grained supervision in SSL (recall that stability means the distances between PIs reflect those between the topological persistence diagrams). This is very different from and complementary to related works in that it \emph{structures the embedding space in a different way}. The fingerprints are usually constructed in a way so that they capture information about the molecular graph structure; even if they do not capture the entire complexity of the molecules, they represent some, probably important aspects. Moreover, based on the topological nature, they \emph{may capture aspects not represented by the commonly used graph embeddings}. In particular, they offer a \emph{way to flexibly integrate expert knowledge} into molecular SSL; observe that the usually employed GNNs have dimensions based on the available molecule information, hence including additional information, even if available only for some of the data, requires architecture changes.


We consider a batch of $N$ graphs, $\{\Gi\}_{i \in [1,N]}$. Similar as above, we first extract graph-level representation vectors \hGi using a graph encoder $\varepsilon(\cdot)$, followed by a readout function  $R(\cdot)$.
Here, we apply the projection head $g(\cdot)$ later, to map the graph representations to another latent space and obtain the final graph embedding $z_i$. Specifically, we apply a two-layer MLP 
and hence a non-linear transformation, known to enhance performance \citep{chen2020simple}:
\begin{linenomath*}
$
z_i=g\left(R\left( \varepsilon \left( \Gi \right) \right)  \right) 
$
\end{linenomath*}
\\
Let $G_n$ be the sample under consideration.
Instead of constructing an artificial view, we consider all possible positive pairs of samples $(G_n, G_{m})$ together with a set of stable, topological fingerprints $\{I_i\}_{i \in [1,N]}$ for all graphs. Note that these can be computed rather efficiently, and they have to be computed only once for the given pre-training data.
In a nutshell, our loss adapts the regular NT-Xent \citep{sohn2016improved,oord2018representation,wu2018unsupervised} by considering only those negative pairs $(G_n,G_k)$ of samples where the Euclidean distance $dis\left( I_n,I_{k} \right)$ between the corresponding fingerprints is greater than the one between $I_n$ and $I_{m}$. We compute the similarity score as
$sim\left( z_n,z_{m} \right) ={z_n}^{\top}z_{m}/\left\| z_n \right\| \left\| z_{m} \right\|$, and consider a temperature parameter~$\tau$ and indicator function $\mathbb{I}_{[\cdot]} \in \{0,1\}$ as usual.
Our \emph{topological~distance~contrastive~loss}~(TDL) is defined as follows, for the n-th sample:
\begin{linenomath*}
$$
\Lmc_{\tdli}=\frac{1}{N-1}\sum_{\substack{m \in \left[ 1,N \right] ,\\m \ne n\phantom{...-}}}{-\log}\frac{e^{sim\left( z_n,z_{m } \right) /\tau}}{\sum_{\substack{k\in \left[ 1,N \right] ,\\k\ne n}}{\mathbb{I}_{\left[ dis\left( I_n,I_k \right) \ge dis\left( I_n,I_{m} \right) \right]}}\cdot e^{sim\left( z_n,z_k \right) /\tau}}
$$
\end{linenomath*}

\tdl \emph{can be flexibly and efficiently applied to complement any graph CL framework}, e.g., in the form \begin{linenomath*}
$
\Lmc_n=\Lmc_{\mathit{others}}+ \lambda \Lmc_{\tdli}$\end{linenomath*}, where $\lambda$ determines its impact. In our evaluation, we used $\lambda=1$.

\textbf{Further Intuition.} Essentially, \tdl provides a form of regularization. It encourages the model to push molecule representations less far away from the sample under consideration if they are similar to it in terms of the topological fingerprints. Given the stability of those, we hence obtain an embedding space with better calibrated distances (i.e., distances in terms of PH, between persistence diagrams of the molecule graphs). This can be partly observed theoretically, in the directions of the gradients; see Appendix~\ref{app:analysis} for an initial analysis.
We show this empirically by calculating the correlation between distances between the molecule representations after pre-training and the 
PIs (see Table~\ref{tab:linear-corr}), and by visualizing the distances in Figure~\ref{fig:alignment-example}. Our evaluation further shows that the fine-grained supervision may solve deficiencies of CL models in that it forces them to capture crucial features of the input.
Further, the improved embedding space particularly suits low-data downstream scenarios.

\textbf{On Views.} While we consider the modeling of the sample relationships to be most unique and offer great potential to complement other models, we note that TDL can be similarly applied over views. 


\section{Evaluation}
\label{sec:evaluation}

\begin{itemize}[leftmargin=*,topsep=0pt,noitemsep]
\item Do TAE and TDL lead to, in a sense, \textbf{calibrated distances in the representation space}?
\item Do we obtain \textbf{improved representations} more generally, based on established SSL metrics?
\item What impact do we see on \textbf{downstream performance}, and does it justify our proposal?
\end{itemize}

\textbf{Our Models.} We apply both TAE and TDL with different filtration functions, marked by a subscript. First, we apply a most simple filtration \filt{atom} based on atomic mass. This allows showing that, even by considering less information than the baseline GNNs, which additionally apply atom chirality and connectivity, topological modeling may exploit additional facets of the data. Since TAE is a standalone model, atomic mass is not enough to let it fully capture the molecular nature. For that reason, we consider three filtrations (atomic mass, heat kernel signature, node degree) and concatenate the corresponding PIs, denoted by \filt{ahd}. Finally, to show the real potential of PH, we include the \filt{ToDD} filtration \cite{demir2022todd}, which combines atomic mass with additional domain knowledge, partial charge and bond type, inside a more complex multi-dimensional filtration. See Appendix~\ref{app:exp_settings}.

\textbf{Baselines \& Datasets.}
For a comprehensive evaluation, we apply \tdl on a variety of existing {CL approaches}: GraphCL, JOAO, GraphLoG, and SimGRACE 
(see Section~\ref{sec:introduction}). Note that this goes far beyond other CL extensions which are often evaluated with a single approach only \citep{you2021graph, xia2023mole}. We also study \pip on top of the established ContextPred \citep{Hu*2020Strategies}, to get an idea of its complementary nature. Model configurations and experimental settings are described in Appendix~\ref{app:exp_settings}.
For pre-training, we considered the most common {dataset} following \citep{Hu*2020Strategies}, 
2 million unlabeled molecules sampled from the ZINC15 database \citep{sterling2015zinc}. 
For downstream evaluation, we focus on the MoleculeNet benchmark \citep{wu2018moleculenet} here, the appendix contains experiments on several other datasets. 

\begin{table}[t]
  \caption{Pearson correlation coefficients (\%) between distances in embedding space and distances between corresponding PIs, for samples from various MoleculeNet datasets. Highlighted are \textcolor{cbad}{clear decreases} and \textcolor{cgood}{clear increases} (i.e., considering standard deviation).}
  \label{tab:linear-corr}
  \centering
  \resizebox{\linewidth}{!}{
  \begin{tabular}{lcccccccc}
    \toprule
    &Tox21 & ToxCast &Sider &ClinTox &MUV &HIV &BBBP &Bace \\
    \midrule
    ContextPred &{31.2} (0.4) &{2.5} (0.0) &61.6 (0.2) &15.6 (0.5) &{37.2} (0.1) &{20.6} (0.1) &3.7 (0.3) &{12.7} (0.2) \\
    + TAE\textsubscript{ahd} 
    &\textcolor{cgood}{35.6} (0.3) &\textcolor{cgood}{12.2} (0.6) &\textcolor{cbad}{60.6} (0.3) &\textcolor{cbad}{6.2} (1.6) &\textcolor{cgood}{55.9} (0.1) &\textcolor{cgood}{30.9} (0.1)
    &\textcolor{cbad}{2.7} (0.7) &\textcolor{cgood}{25.9} (0.2) \\
     \midrule
    GraphCL &{15.6} (0.4) &{7.4} (0.8) &{52.6} (0.2) &{16.7} (0.8) &{35.4} (0.2) &{18.3} (0.2) &{6.2} (1.8) &{10.5} (0.2) \\ 
     + TDL\textsubscript{atom}
    &\textcolor{cgood}{55.4} (0.2) &\textcolor{cgood}{14.5} (0.4) &\textcolor{cgood}{66.6} (0.3) &\textcolor{cgood}{21.9} (0.7) &\textcolor{cgood}{65.2} (0.3) &\textcolor{cgood}{41.5} (0.1) &7.0 (1.4) &\textcolor{cgood}{34.1} (0.3) \\
    \midrule
    JOAO &{25.1} (0.8) &{3.2} (1.7) &{62.4} (0.3) &{20.2} (1.3) &{48.0} (0.2) &{29.5} (0.3) &{2.7} (1.4) &{24.8} (0.3) \\
    + TDL\textsubscript{atom}
    &\textcolor{cgood}{49.3} (0.3) &\textcolor{cgood}{14.6} (2.1) &\textcolor{cgood}{65.2} (0.1) &\textcolor{cgood}{26.2} (1.0) &\textcolor{cgood}{60.2} (0.2) &\textcolor{cgood}{40.6} (0.4) &\textcolor{cgood}{6.7} (0.8) &\textcolor{cgood}{36.0} (0.4) \\
     \midrule
    SimGRACE &{10.3} (2.9) &{9.8} (0.6) &{59.9} (1.1) &{4.8} (1.6) &{26.5} (0.3) &{21.6} (0.2) &9.5 (2.1) &{4.4} (0.1) \\
    + TDL\textsubscript{atom} 
    &\textcolor{cgood}{48.8} (0.9) &\textcolor{cgood}{12.8} (1.7) &\textcolor{cgood}{61.5} (0.2) &\textcolor{cgood}{20.2} (0.5) &\textcolor{cgood}{71.9} (0.1) &\textcolor{cgood}{49.1} (0.2) &{8.5} (2.0) &\textcolor{cgood}{38.9} (0.1) \\

     \midrule
    GraphLoG &{16.8} (0.2) &{2.9} (0.5) &{34.2} (0.2) &{3.6} (0.8) &{20.4} (0.1) &{9.4} (0.2) &3.3 (0.9) &{12.6} (0.2) \\
    + TDL\textsubscript{atom}
    &\textcolor{cgood}{44.2} (0.4) &\textcolor{cgood}{11.8} (1.3) &\textcolor{cgood}{50.5} (0.4) &\textcolor{cgood}{22.4} (1.2) &\textcolor{cgood}{63.9} (0.1) &\textcolor{cgood}{46.2} (0.2) & \textcolor{cbad}{1.9} (0.4) &\textcolor{cgood}{40.3} (0.1) \\
    \bottomrule
  \end{tabular}}
\end{table}

\subsection{Analysis of Representations after Pre-training}
\label{sec:eval-pt}
\textbf{Calibrated Distances in Embedding Space, Tables~\ref{tab:linear-corr}, \ref{tab:PI-pearson}, \ref{tab:rogi}, Figure~\ref{fig:alignment-example}.}
{  

\begin{wrapfigure}{h}{0.4\linewidth}

\centering%
\centering%
\includegraphics[width=2in]{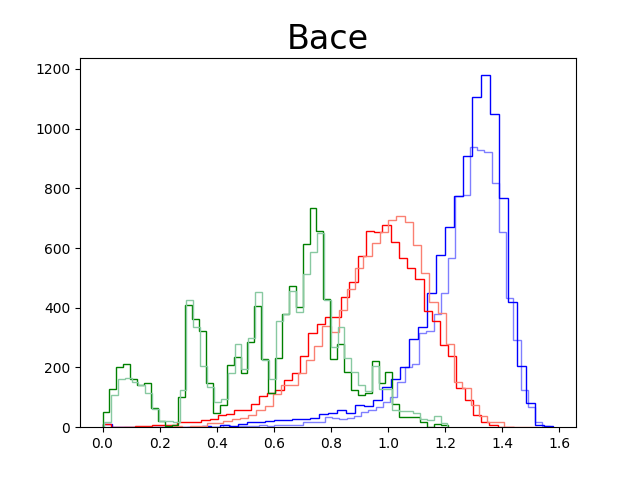}
\centering
\caption{Normalized Euclidean {distances} for pairs of embeddings after pre-training of same/different Bace class, dark/light: 
PI\textsubscript{ToDD}, green; GraphCL, blue; GraphCL+TDL\textsubscript{ToDD}, red.}
\centering
\label{fig:alignment-example} 

\end{wrapfigure}

\pip successfully learns the PI space for the molecules in MoleculeNet in the sense that there is a strong correlation between the PIs and their reconstructions (Table~\ref{tab:PI-pearson}, appendix). 
For \tdl, we observe for all baselines considerable increases 
in terms of correlation between distances in PI space and in representation space; we use the embeddings from linear probing. There are two notable exceptions and generally smaller increases on one dataset, BBBP. We also compared the ROGI score \citep{rogi} which, in a nutshell, measures in how far (dis)similarity of molecules (i.e., we use Euclidean distance of embeddings) is reflected in label similarity (Table~\ref{tab:rogi}, appendix). 
This gives us some idea in terms of downstream labels and the mixed results reflect the variety of the data.
Finally, the alignment figure on the right clearly visualizes that the PI embedding space is much more fine-grained in terms of distances than the GNN space, and that the distribution of GraphCL seems to get correctly adapted in that distances between ``positive'' pairs generally shrink below the ones of ``negative'' pairs.
\begin{figure}[t]%
\centering%
\begin{tikzpicture}[scale=0.53,line width=100pt]
\begin{axis}[
width  = 0.5\textwidth,
xlabel = {\# Singular Value Rank Index}, 
ylabel = {Singular Values (log)},
ylabel shift = -.8pt,
tick label style={font=\small},
legend entries = {GraphCL+TDL\textsubscript{atom},GraphCL,GraphLoG+TDL\textsubscript{atom},GraphLoG},
legend cell align=left, 
legend style={font=\small, 
at={(1.00,1.05)}, anchor=south east}, 
every axis plot/.append style={thick}
]
\addplot [color=graphcltdl]  
 plot [error bars/.cd, y dir = both, y explicit]
 table[x =x,y =y, col sep =comma]{csvsv/bace_graphcl_tdl.txt};
 \addplot [color=graphcl]  
 plot [error bars/.cd, y dir = both, y explicit]
 table[x =x,y =y, col sep =comma]{csvsv/bace_graphcl.txt};
 \addplot [color=graphlogtdl]  
 plot [error bars/.cd, y dir = both, y explicit]
 table[x =x,y =y, col sep =comma]{csvsv/bace_graphlog_tdl.txt};
 \addplot [color=graphlog]  
 plot [error bars/.cd, y dir = both, y explicit]
 table[x =x,y =y, col sep =comma]{csvsv/bace_graphlog.txt};
\end{axis}
\end{tikzpicture}
\begin{tikzpicture}[scale=0.53,line width=100pt]
\begin{axis}[
width  = 0.5\textwidth,
xlabel = {\# Singular Value Rank Index}, 
tick label style={font=\small},
legend entries = {SimGRACE+TDL\textsubscript{atom},SimGRACE,JOAO+TDL\textsubscript{atom},JOAO},
legend cell align=left, 
legend style={font=\small, 
at={(1.00,1.05)}, anchor=south east}, 
every axis plot/.append style={thick}
]
\addplot [color=simgracetdl]  
 plot [error bars/.cd, y dir = both, y explicit]
 table[x =x,y =y, col sep =comma]{csvsv/bace_simgrace_tdl.txt};
 \addplot [color=simgrace]  
 plot [error bars/.cd, y dir = both, y explicit]
 table[x =x,y =y, col sep =comma]{csvsv/bace_simgrace.txt};
 \addplot [color=joaotdl]  
 plot [error bars/.cd, y dir = both, y explicit]
 table[x =x,y =y, col sep =comma]{csvsv/bace_joao_tdl.txt};
 \addplot [color=joao]  
 plot [error bars/.cd, y dir = both, y explicit]
 table[x =x,y =y, col sep =comma]{csvsv/bace_joao.txt};
\end{axis}
\end{tikzpicture}
\begin{tikzpicture}[scale=0.53,line width=100pt]
\begin{axis}[
width  = 0.5\textwidth,
xlabel = {\# Singular Value Rank Index}, 
tick label style={font=\small},
legend entries = {GraphCL+TDL\textsubscript{atom},GraphCL,GraphLoG+TDL\textsubscript{atom},GraphLoG},
legend cell align=left, 
legend style={font=\small, 
at={(1.00,1.05)}, anchor=south east}, 
every axis plot/.append style={thick}
]
\addplot [color=graphcltdl]  
 plot [error bars/.cd, y dir = both, y explicit]
 table[x =x,y =y, col sep =comma]{csvsv/clintox_graphcl_tdl.txt};
 \addplot [color=graphcl]  
 plot [error bars/.cd, y dir = both, y explicit]
 table[x =x,y =y, col sep =comma]{csvsv/clintox_graphcl.txt};
 \addplot [color=graphlogtdl]  
 plot [error bars/.cd, y dir = both, y explicit]
 table[x =x,y =y, col sep =comma]{csvsv/clintox_graphlog_tdl.txt};
 \addplot [color=graphlog]  
 plot [error bars/.cd, y dir = both, y explicit]
 table[x =x,y =y, col sep =comma]{csvsv/clintox_graphlog.txt};
\end{axis}
\end{tikzpicture}
\begin{tikzpicture}[scale=0.53,line width=100pt]
\begin{axis}[
width  = 0.5\textwidth,
xlabel = {\# Singular Value Rank Index}, 
tick label style={font=\small},
legend entries = {SimGRACE+TDL\textsubscript{atom},SimGRACE,JOAO+TDL\textsubscript{atom},JOAO},
legend cell align=left, 
legend style={font=\small, 
at={(1.00,1.05)}, anchor=south east}, 
every axis plot/.append style={thick}
]
\addplot [color=simgracetdl]  
 plot [error bars/.cd, y dir = both, y explicit]
 table[x =x,y =y, col sep =comma]{csvsv/clintox_simgrace_tdl.txt};
 \addplot [color=simgrace]  
 plot [error bars/.cd, y dir = both, y explicit]
 table[x =x,y =y, col sep =comma]{csvsv/clintox_simgrace.txt};
 \addplot [color=joaotdl]  
 plot [error bars/.cd, y dir = both, y explicit]
 table[x =x,y =y, col sep =comma]{csvsv/clintox_joao_tdl.txt};
 \addplot [color=joao]  
 plot [error bars/.cd, y dir = both, y explicit]
 table[x =x,y =y, col sep =comma]{csvsv/clintox_joao.txt};
\end{axis}
\end{tikzpicture}
\centering
\caption{Singular values of covariance matrices of the representations; Bace (left), Clintox (right).
}
\label{fig:singular-values}
\end{figure}
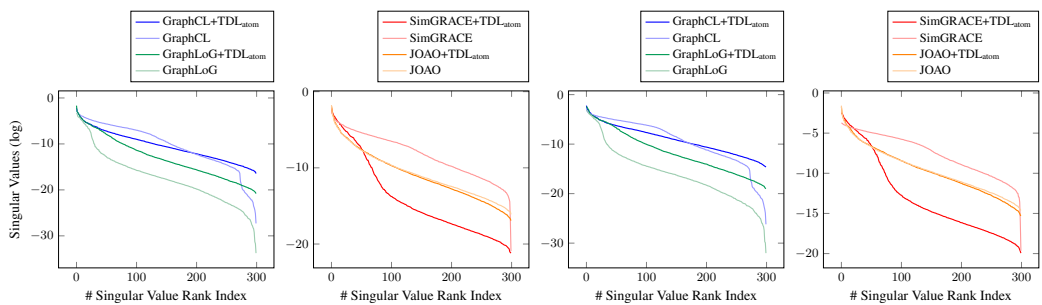
\begin{table}[t]
  \caption{Linear/MLP probing: molecular property prediction; binary classification, ROC-AUC (\%).}
  \label{tab:linear-FT-tasks}
  \centering
  \resizebox{\linewidth}{!}{
  \begin{tabular}{lccccccccc}
    \toprule
    &Tox21 & ToxCast &Sider &ClinTox &MUV &HIV &BBBP &Bace &Average \\
    \midrule
    PI\textsubscript{atom} &56.9 (0.3) &51.5 (0.4) &56.1 (0.5) &45.5 (1.1) &58.6 (0.7) &68.4 (0.6) &47.1 (0.4) &48.6 (0.7) &54.08 \\    
    PI\textsubscript{ToDD} &65.8 (0.3) &50.3 (0.3) &58.1 (0.5) &56.9 (1.3) &55.7 (0.9) &70.8 (0.4) &57.1 (0.7) &67.8 (0.8) &60.31 \\
    ECFP &68.8 (0.3) &57.0 (0.2) &62.3 (0.3) &69.2 (0.8) &66.1 (0.4) &69.4 (0.2) &63.2 (0.3) &73.6 (0.8) &66.20 \\
    ECFP $\mathbin\Vert$ PI\textsubscript{ToDD} &\textcolor{cgood}{69.6} (0.4) &\textcolor{cbad}{56.3} (0.3) &\textcolor{cbad}{60.9} (0.6) &\textcolor{cgood}{76.7} (1.1) &\textcolor{cbad}{64.0} (0.6) &\textcolor{cgood}{71.6} (0.5) &63.0 (0.4) &\textcolor{cgood}{76.8} (1.0) &\textbf{67.36} \\
    \midrule
    {PI\textsubscript{atom}}, MLP&57.2 (0.5) &52.5 (0.4) &56.6 (0.7) &49.8 (1.4)  &60.5 (1.6) &69.9 (0.4) &48.8 (1.0) &53.3 (1.1) &56.08 \\
    {PI\textsubscript{ToDD}}, MLP&66.7 (0.3) &52.5 (0.4) &58.6 (0.6) &61.8 (1.6) &60.1 (0.4) &71.6 (0.7) &57.3 (0.9) &68.2 (1.3) &62.03 \\
    {ECFP}, MLP &70.1 (0.4) &59.8 (0.4) &59.6 (0.6) &67.8 (0.9) &61.7 (0.8) &69.1 (1.0) &58.6 (1.3) &72.1 (1.7) &64.85 \\
    {ECFP $\mathbin\Vert$ PI\textsubscript{ToDD}}, MLP &\textcolor{cgood}{71.1} (0.6) &\textcolor{cbad}{57.8} (0.4) &59.2 (0.7) &\textcolor{cgood}{80.7} (2.1) &\textcolor{cgood}{64.9} (1.1) &\textcolor{cgood}{72.8} (1.7) &\textcolor{cgood}{63.1} (0.8) &\textcolor{cgood}{76.7} (0.9) &\textbf{68.28} \\
    \midrule
    TAE\textsubscript{ahd} &67.7 (0.2) &61.2 (0.2) &55.8 (0.3) &58.1 (0.7) &70.2 (0.8) &72.5 (0.5) &61.1 (0.2) &74.3 (0.2) &65.11 \\
    TAE\textsubscript{ToDD} &70.4 (0.2) &60.8 (0.1) &61.1 (0.1) &68.4 (0.7) &72.3 (0.3) &73.9 (0.2) &61.6 (0.4) &67.6 (0.6) &67.01 \\
    \midrule
    ContextPred &68.4 (0.3) &59.1 (0.2) &59.4 (0.3) &43.2 (1.7) &71.0 (0.7) &68.9 (0.4) &59.1 (0.2) &64.4 (0.6) &61.69 \\
    + TAE\textsubscript{ahd} &\textcolor{cgood}{69.7} (0.1) &59.2 (0.2) &59.5 (0.3) &\textcolor{cgood}{56.1} (1.1) &\textcolor{cgood}{76.5} (0.9) &68.9 (0.2) &\textcolor{cgood}{61.1} (0.4) &\textcolor{cgood}{65.6} (0.5) &\textbf{64.58} \\
    + TAE\textsubscript{ToDD} 
    &\textcolor{cgood}{69.0} (0.1) &\textcolor{cgood}{59.8} (0.4) &\textcolor{cgood}{60.0} (0.4) &\textcolor{cgood}{53.3} (1.3) &70.8 (0.3) &\textcolor{cgood}{70.0} (0.7) &\textcolor{cgood}{60.9} (0.5) &\textcolor{cbad}{62.7} (0.5) &\textbf{63.31} \\
    \midrule
    GraphCL &64.4 (0.5) &59.4 (0.2) &54.6 (0.3) &59.8 (1.2) &70.2 (1.0) &63.7 (2.3) &62.4 (0.7) &71.1 (0.7) &63.20 \\
    + TDL\textsubscript{atom} 
    &\textcolor{cgood}{72.0} (0.4) &\textcolor{cgood}{61.1} (0.2) &\textcolor{cgood}{59.7} (0.6) &\textcolor{cgood}{65.3} (1.3) &\textcolor{cgood}{76.1} (0.9) &\textcolor{cgood}{68.2} (1.1) &\textcolor{cgood}{65.4} (0.9) &\textcolor{cgood}{76.4} (1.1) &\textbf{68.02} \\
    + TDL\textsubscript{ToDD}  
    &\textcolor{cgood}{72.7} (0.5) &\textcolor{cgood}{60.8} (0.4) &\textcolor{cgood}{58.9} (0.8) &\textcolor{cgood}{64.1} (1.7) &\textcolor{cgood}{72.7} (1.4) &\textcolor{cgood}{69.7} (1.2) &\textcolor{cgood}{64.5} (0.8) &\textcolor{cgood}{76.1} (1.3) &\textbf{67.44} \\
    \midrule
    JOAO &70.6 (0.4) &60.5 (0.3) &57.4 (0.6) &54.1 (2.6) & 69.8 (1.9) &68.1 (0.9) &63.7 (0.3) & 71.2 (1.0) &64.42 \\
    + TDL\textsubscript{atom}
    &70.5 (0.3) &60.4 (0.2) &57.8 (1.5) &54.6 (1.3) &\textcolor{cgood}{74.2} (1.6) &68.2 (0.6) &\textcolor{cgood}{65.2} (0.3) &\textcolor{cgood}{72.7} (3.1) &\textbf{65.41} \\
    + TDL\textsubscript{ToDD}  
    &\textcolor{cgood}{71.7} (0.4) &\textcolor{cgood}{61.3} (0.3) &\textcolor{cgood}{58.9} (0.7) &52.4 (1.7) &69.6 (1.7) &\textcolor{cgood}{69.9} (0.6) &\textcolor{cgood}{64.1} (0.5) &\textcolor{cgood}{72.6} (0.9) &\textbf{65.06} \\
    \midrule
    SimGRACE &64.6 (0.4) &59.1 (0.2) &54.9 (0.6) &63.4 (2.6) &67.4 (1.2) &66.3 (1.5) &65.4 (1.2) &67.8 (1.3) &63.61 \\
    + TDL\textsubscript{atom} 
    &\textcolor{cgood}{68.6} (0.3) &\textcolor{cgood}{61.1} (0.2) &\textcolor{cgood}{59.5} (0.4) &62.2 (1.7) &\textcolor{cgood}{69.7} (2.0) &\textcolor{cgood}{69.5} (1.8) &\textcolor{cbad}{60.6} (0.5) &\textcolor{cgood}{72.1} (0.7) &\textbf{65.41} \\
    + TDL\textsubscript{ToDD}  
    &\textcolor{cgood}{70.1} (0.3) &\textcolor{cgood}{60.3} (0.3) &\textcolor{cgood}{59.1} (0.3) &\textcolor{cgood}{65.1} (1.4) &\textcolor{cgood}{71.4} (1.1) &\textcolor{cgood}{71.1} (0.7) &64.9 (0.6) &\textcolor{cgood}{73.4} (0.8) &\textbf{66.93} \\
    \midrule
    GraphLoG &67.2 (0.2) &57.9 (0.2) &57.9 (0.3) &57.8 (0.9) &64.2 (1.1) &65.0 (1.3) &54.3 (0.7) &72.3 (0.9) &62.08 \\
     + TDL\textsubscript{atom}
    &\textcolor{cgood}{72.1} (0.3) &\textcolor{cgood}{62.0} (0.2) &\textcolor{cgood}{60.7} (0.2) &{56.6} (0.8) &\textcolor{cgood}{73.0} (0.9) &\textcolor{cgood}{70.4} (0.9) &\textcolor{cgood}{61.2} (0.4) &\textcolor{cgood}{76.8} (0.7) &\textbf{66.59} \\
     + TDL\textsubscript{ToDD} 
    &\textcolor{cgood}{70.7} (0.2) &\textcolor{cgood}{60.7} (0.3) &\textcolor{cgood}{61.5} (0.3) &\textcolor{cgood}{59.5} (0.5) &\textcolor{cgood}{72.9} (1.8) &\textcolor{cgood}{71.6} (0.8) &\textcolor{cgood}{62.1} (0.3) &\textcolor{cgood}{80.1} (0.4) &\textbf{67.39} \\
    \bottomrule
  \end{tabular}}
\end{table}
\begin{table}[h!]
  \caption{Linear probing: given two molecules, predict distance between their PIs; MSE.}
  \label{tab:linear-mse-vs-pi}
  \centering
  \resizebox{\linewidth}{!}{
  \begin{tabular}{lcccccccc}
    \toprule
    &Tox21 & ToxCast &Sider &ClinTox &MUV &HIV &BBBP &Bace \\
    \midrule
    ContextPred &{6.330} (0.007) &{5.351} (0.084) &{22.227} (0.102) &4.234 (0.039) &{2.342} (0.001) &{6.127} (0.009) &{4.907} (0.072) &{3.591} (0.038) \\
    + TAE\textsubscript{ahd}   &\textcolor{cgood}{5.970} (0.002) &\textcolor{cgood}{5.263} (0.003) &\textcolor{cgood}{21.262} (0.070) &{4.255} (0.023) &\textcolor{cgood}{1.851} (0.003) &\textcolor{cgood}{5.752} (0.023) &4.901 (0.024) &\textcolor{cgood}{3.069} (0.012) \\
    \midrule
    GraphCL &{6.873} (0.011) &{5.440} (0.031) &{24.019} (0.059) &{4.537} (0.006) &{2.411} (0.006) &{5.996} (0.006) &{5.666} (0.003) &{3.858} (0.007) \\
    + TDL\textsubscript{atom} &\textcolor{cgood}{5.371} (0.031) &\textcolor{cgood}{5.191} (0.023) &\textcolor{cgood}{20.641} (0.022) &\textcolor{cgood}{4.231} (0.025) &\textcolor{cgood}{1.645} (0.004) &\textcolor{cgood}{5.001} (0.022) &\textcolor{cgood}{4.936} (0.002) &\textcolor{cgood}{2.375} (0.008) \\
    \midrule
    JOAO &{6.465} (0.032) &{5.277} (0.010) &{21.299} (0.085) &{4.277} (0.118) &{2.061} (0.001) &{5.418} (0.016) &{4.995} (0.036) &{2.782} (0.004) \\
    + TDL\textsubscript{atom} &\textcolor{cgood}{5.642} (0.011) &\textcolor{cgood}{5.187} (0.010) &\textcolor{cgood}{20.515} (0.044) &4.185 (0.027) &\textcolor{cgood}{1.749} (0.004) &\textcolor{cgood}{5.011} (0.028) &4.979 (0.015) &\textcolor{cgood}{2.377} (0.002) \\
    \midrule
    SimGRACE &{10.015} (0.050) &{13.387} (4.770) &{26.514} (0.049) &{4.669} (0.017) &{2.591} (0.005) &{6.150} (0.023) &{6.167} (0.035) &{13.054} (0.938) \\
    + TDL\textsubscript{atom} &\textcolor{cgood}{5.359} (0.005) &\textcolor{cgood}{5.263} (0.016) &\textcolor{cgood}{20.630} (0.034) &\textcolor{cgood}{4.071} (0.042) &\textcolor{cgood}{1.545} (0.007) &\textcolor{cgood}{4.714} (0.033) &\textcolor{cgood}{5.131} (0.011) &\textcolor{cgood}{2.296} (0.003) \\
    \midrule
    GraphLoG &{6.877} (0.012) &{5.279} (0.004) &{24.073} (0.038) &{4.482} (0.055) &{2.857} (0.004) &{6.591} (0.015) &{5.029} (0.079) &{3.918} (0.004) \\
     + TDL\textsubscript{atom} 
     &\textcolor{cgood}{5.814} (0.032) &\textcolor{cgood}{5.212} (0.018) &\textcolor{cgood}{21.441} (0.044) &\textcolor{cgood}{4.237} (0.015) &\textcolor{cgood}{1.719} (0.005) &\textcolor{cgood}{4.669} (0.010) &4.962 (0.015) &\textcolor{cgood}{2.235} (0.005) \\
    \bottomrule
  \end{tabular}}
\end{table}

\textbf{Mitigating Dimensional Collapse, Figure~\ref{fig:singular-values}.} One of the most interesting of our findings is the fact that GraphCL and GraphLoG suffer from dimensional collapse (i.e., the embeddings span a lower-dimensional subspace instead of the entire available embedding space) and that \tdl successfully mitigates this. This can be observed in the singular values of the covariance matrix of the representations, where vanishing values hint at collapsed dimensions.
The phenomenon has recently been observed in computer vision and there are only initial explanations to date \citep{hua2021feature}. 
However, here, we observe the collapse only over the downstream data.
Our hypothesis is that \tdl's fine-grained supervision forces the models to capture relevant features, which they might have neglected otherwise.
We also depict the graphs for SimGRACE and JOAO, which look very different, reflecting the variety of the approaches. There is basically no difference for JOAO(+\tdl), while \tdl shrinks the values for SimGRACE; the latter is less optimal and needs further investigation.


\textbf{Linear Probing, Tables \ref{tab:linear-FT-tasks}, \ref{tab:linear-mse-vs-pi}, \ref{tab:linear-ac}.}
We evaluated extensively using a linear layer on the representations of the pre-trained graph encoders over the MoleculeNet data in terms of binary classification 
and a custom distance prediction task. 
The probing, also in terms of MLPs, demonstrates that PIs possess a complementary nature to ECFP.
Furthermore, \pip, which we developed for comparison purposes only, is competitive with the baselines and improves ContextPred. 
In Appendix~\ref{app:linear}, we additionally show some results over the more challenging activity cliff data.
Apart from JOAO+TDL, where the results are mixed, \tdl yields overall impressive increases. Interestingly, the more complex ToDD-based version is not always better than the simpler one.

\textbf{Discussion.} The results demonstrate that both \pip and \tdl successfully approach the PI space in terms of distances between molecules. Yet, it also has to be noted that the PIs we chose do not always capture the main characteristics of the sample space, as seen for BBBP. Similarly, the mixed ROGI scores, which incorporate downstream labels, hint at a varied downstream performance. Nevertheless, the specifics of particular downstream datasets should not be confused with the more general molecular representation space. In that respect, especially \tdl \emph{has been proven effective overall in improving representation power in a variety of probing tasks for all baselines}. We have also shown that, on closer look, the effects can be very different with the considered baselines, depending on their individual deficiencies.
For JOAO, we have seen an improvement in distance representation but not as large improvements thereafter, suggesting that it captures similar features already. Recall, however, that JOAO ``only'' adapts GraphCL in that it chooses between views more flexibly. In this regard, TDL represents a more efficient adaptation of GraphCL, which turns out to be also more effective in terms of SSL generally, as we show next.


\begin{table}[t]
  \caption{
  Binary classification over MoleculeNet; ROC-AUC, \% Pos. is min/med/max for multi-task. 
  }
  \label{tab:benchmark-results-ours}
  \centering
  \resizebox{\linewidth}{!}{
  \begin{tabular}{lccccccccc}
    \toprule
    &Tox21 & ToxCast &Sider &ClinTox &MUV &HIV &BBBP &Bace &Average \\
    \midrule
    \# Molecules &7,831 &8,575 &1,427 &1,478 &{93,087} &41,127 &2,039 &1,513 &- \\
        \# Tasks
        &12 & 617&{27}&2&17&1&1&1
        \\
        \% Positives 
        &2.4/4.6/12.0
        &0.2/1.3/20.5
        &1.5/66.3/92.4
        &7.6/50.6/93.6
        &0.03/0.03/0.03
        &3.5
        &76.5&45.7\\
    \midrule
    No pretrain (GIN)  &74.6 (0.4) &61.7 (0.5) &58.2 (1.7) &58.4 (6.4) &70.7 (1.8) &75.5 (0.8) &65.7 (3.3) &72.4 (3.8) &67.15 \\
   \midrule
    AD-GCL \citep{suresh2021adversarial} &76.5 (0.8) &63.0 (0.7) &63.2 (0.7) &79.7 (3.5) &72.3 (1.6) &78.2 (0.9) &70.0 (1.0) &78.5 (0.8) &72.67 \\
    iMolCLR \citep{wang2022improving} &75.1 (0.7)	&63.5 (0.4)	&59.4 (1.0)	&{81.0} (2.6)	&74.7 (1.9)	&77.3 (1.2)	&69.6 (1.2)	&77.3 (1.0)	&72.24 \\
    Mole-BERT \citep{xia2023mole} &\textcolor{csota}{76.8} (0.5) &64.3 (0.2) &62.8 (1.1) &78.9 (3.0) &\textcolor{csota}{78.6} (1.8) &\textcolor{csota}{78.2} (0.8) &\textcolor{csota}{71.9} (1.6) &\textcolor{csota}{80.8} (1.4) &74.04 \\
    SEGA \citep{wu2023sega} &76.7 (0.4) &\textcolor{csota}{65.2} (0.9) &\textcolor{csota}{63.6} (0.3) &\textcolor{csota}{84.9} (0.9) &76.6 (2.4) &77.6 (1.3) &71.8 (1.0) &77.0 (0.4) &74.17 \\
    \midrule
    TAE\textsubscript{ahd} &75.2 (0.8) &63.1 (0.3) &61.9 (0.8) &80.6 (1.9) &74.6 (1.8) &73.5 (2.1) &67.5 (1.1) &{82.5} (1.1) &72.36 \\
    TAE\textsubscript{ToDD} &{76.8} (0.9) &{64.0} (0.5) &61.9 (0.8) &79.3 (3.6) &75.8 (3.2) &{75.9} (1.1) &{70.4} (0.8) &81.6 (1.4) &{73.22} \\
    \midrule
    ContextPred &75.7 (0.7) &63.9 (0.6) &60.9 (0.6) &65.9 (3.8) &75.8 (1.7) &77.3 (1.0) &68.0 (2.0) &79.6 (1.2) &70.89 \\
    + TAE\textsubscript{ahd}  &\textcolor{cgood}{76.4} (0.5) &63.2 (0.4) &\textcolor{cgood}{62.0} (0.7) &\textcolor{cgood}{74.6} (4.4) &{76.7} (1.6) &{77.7} (1.2) &{68.9} (1.1) &{80.7} (1.6) &\textbf{72.53} \\
    + TAE\textsubscript{ToDD} &75.7 (0.4) &63.1 (0.3) &61.3 (0.5) &\textcolor{cgood}{72.1} (1.3) &77.2 (1.8) &77.6 (1.1) &69.6 (0.9) &80.1 (1.4) &\textbf{72.09} \\
    \midrule
    GraphCL &73.9 (0.7) &62.4 (0.6) &60.5 (0.9) &76.0 (2.7) &69.8 (2.7) &{78.5} (1.2) &69.7 (0.7) &75.4 (1.4) &70.78 \\
    + TDL\textsubscript{atom} 
    &\textcolor{cgood}{75.3} (0.4) &\textcolor{cgood}{64.4} (0.3) &61.2 (0.6) &\textcolor{cgood}{83.7} (2.7) &\textcolor{cgood}{75.7} (0.8) &78.0 (0.9) &\textcolor{cgood}{70.9} (0.6) &\textcolor{cgood}{80.5} (0.8) &\textbf{73.71} \\
    + TDL\textsubscript{ToDD}  &\textcolor{cgood}{75.2} (0.7) &\textcolor{cgood}{64.2} (0.3) &\textcolor{cgood}{61.5} (0.4) &\textcolor{cgood}{85.2} (1.8) &\textcolor{cgood}{75.9} (2.1) &77.9 (0.8) &69.9 (0.9) &\textcolor{cgood}{81.2} (1.9) &\textbf{73.88} \\
     \midrule
       JOAO &75.0 (0.3) &62.9 (0.5) &60.0 (0.8) &81.3 (2.5) &71.7 (1.4) &76.7 (1.2) &70.2 (1.0) &77.3 (0.5) &71.89 \\
    + TDL\textsubscript{atom} 
     &\textcolor{cgood}{75.5} (0.3) &\textcolor{cgood}{63.8} (0.2) &60.6 (0.5) &\textcolor{cbad}{76.8} (1.5) &\textcolor{cgood}{73.8} (1.9) &\textcolor{cgood}{78.3} (1.2) &70.3 (0.5) &\textcolor{cgood}{78.7} (0.6) &\textbf{72.22} \\
    + TDL\textsubscript{ToDD}  
     &75.2 (0.3) &\textcolor{cgood}{63.6} (0.2) &\textcolor{cgood}{61.6} (0.6) &80.7 (3.3) &\textcolor{cgood}{74.6} (1.6) &77.4 (0.9) &\textcolor{cgood}{71.3} (0.8) &\textcolor{cgood}{81.0} (2.2) &\textbf{73.18} \\
    \midrule
    SimGRACE &74.4 (0.3) &62.6 (0.7) &60.2 (0.9) &75.5 (2.0) &75.4 (1.3) &75.0 (0.6) &71.2 (1.1) &74.9 (2.0) &71.15 \\
    + TDL\textsubscript{atom} 
    &74.7 (0.5) &63.0 (0.3) &59.5 (0.4) &73.7 (1.5) &75.9 (1.6) &\textcolor{cgood}{77.3} (1.1) &\textcolor{cbad}{69.5} (0.9) &\textcolor{cgood}{79.1} (0.5) &\textbf{71.59} \\
    + TDL\textsubscript{ToDD}  
    &\textcolor{cgood}{75.6} (0.4) &63.3 (0.5) &59.9 (0.8) &\textcolor{cgood}{82.4} (2.5) &75.6 (2.0) &\textcolor{cgood}{76.1} (1.3) &\textcolor{cbad}{69.9} (0.8) &\textcolor{cgood}{78.9} (1.6) &\textbf{72.71} \\
    \midrule
    GraphLoG &75.0 (0.6) &63.4 (0.6) &59.3 (0.8) &70.1 (4.6) &75.5 (1.6) &76.1 (0.8) &69.6 (1.6) &82.1 (1.0) &71.43 \\
    + TDL\textsubscript{atom} 
    &\textcolor{cgood}{76.1} (0.7) &63.7 (0.4) &59.9 (1.0) &\textcolor{cgood}{75.7} (3.5) &75.7 (1.2) &76.2 (1.8) &69.6 (1.2) &82.2 (1.5) &\textbf{72.39} \\
    + TDL\textsubscript{ToDD} 
    &\textcolor{cgood}{75.9} (0.8) &63.5 (0.7) &\textcolor{cgood}{63.4} (0.3) &\textcolor{cgood}{79.8} (1.9) &75.6 (1.1) &76.2 (1.6) &70.7 (0.9) &82.1 (1.9) &\textbf{73.39} \\
    \bottomrule
  \end{tabular}}
  \vspace{-0.05 in}
\end{table}

\subsection{Evaluation on Downstream Tasks}
\label{sec:eval-ft}

\textbf{Transfer Learning Benchmarks, Tables~\ref{tab:benchmark-results-ours}, \ref{tab:ac}, Figure~\ref{fig:benchmark-indiv-tasks}.}
First, note that the MoleculeNet benchmark contains various types of data. We have already observed the strong differences between Bace and BBBP in terms of our PIs. Here we additionally have some very large datsets and some with very many tasks; in both we expect the labels and their correlation to have increasing impact and reduce the one of the original embedding space.
We report some most recent works that used the same pre-training data to provide a context for interpreting our results.
As expected, \pip alone only reaches decent average performance, yet surpasses SOTA on ClinTox and Bace and also considerably improves ContextPred there. We see more variability than above in the effects \tdl has on the individual baselines. 
We do not observe a general impact of dataset balance.
To learn about the impact of labels the influence of multiple tasks, we ran some models also on individual tasks of the multi-task data and observe larger increases for \TDL there (see figures in Appendix~\ref{app:benchmark}). 
Altogether, the table shows mixed results for SimGRACE; some improvement
for GraphLoG and, with a single exception, also for JOAO; and large improvements for GraphCL. Notably, when using the stronger ToDD filtration, TDL demonstrates convincing improvements across all baselines and makes them competitive with SOTA.

\begin{figure}[t]
\centering
\begin{tikzpicture}[scale=0.55,line width=100pt]
\begin{axis}[
basic axis style,
width  = 0.5\textwidth,
xlabel = {\# Training Samples}, 
ylabel = {ROC-AUC},
legend entries = {GraphCL+TDL\textsubscript{ToDD},GraphCL,SimGRACE+TDL\textsubscript{ToDD},SimGRACE}
]
\addplot [color=graphcltdl, mark=o,]
 plot [error bars/.cd, y dir = both, y explicit]
 table[x =x, y =GraphCLTDLvr, y error =GraphCLTDLstdvr,col sep=comma]{csvsv/bace.csv};
 \addplot [color=graphcl, mark=o,]
 plot [error bars/.cd, y dir = both, y explicit]
 table[x =x, y =GraphCL, y error =GraphCLstd,col sep=comma]{csv/bace.csv};
\addplot [color=simgracetdl, mark=o,]
 plot [error bars/.cd, y dir = both, y explicit]
 table[x =x, y =SimGRACETDLvr, y error =SimGRACETDLstdvr,col sep=comma]{csvsv/bace.csv};
 \addplot [color=simgrace, mark=o,]
 plot [error bars/.cd, y dir = both, y explicit]
 table[x =x, y =SimGRACE, y error =SimGRACEstd,col sep=comma]{csv/bace.csv};
\end{axis}
\end{tikzpicture}\ 
\begin{tikzpicture}[scale=0.55,line width=100pt]
\begin{axis}[
basic axis style,
width  = 0.5\textwidth,
xlabel = {\# Training Samples}, 
legend entries = {GraphLoG+TDL\textsubscript{ToDD},GraphLoG,JOAO+TDL\textsubscript{ToDD},JOAO},
]
\addplot [color=graphlogtdl, mark=o,]
 plot [error bars/.cd, y dir = both, y explicit]
 table[x =x, y =GraphLoGTDLvr, y error =GraphLoGTDLstdvr,col sep=comma]{csvsv/bace.csv};
 \addplot [color=graphlog, mark=o,]
 plot [error bars/.cd, y dir = both, y explicit]
 table[x =x, y =GraphLoG, y error =GraphLoGstd,col sep=comma]{csv/bace.csv};
\addplot [color=joaotdl, mark=o,]
 plot [error bars/.cd, y dir = both, y explicit]
 table[x =x, y =JOAOTDLvr, y error =JOAOTDLstdvr,col sep=comma]{csvsv/bace.csv};
 
 \addplot [color=joao, mark=o,]
 plot [error bars/.cd, y dir = both, y explicit]
 table[x =x, y =JOAO, y error =JOAOstd,col sep=comma]{csv/bace.csv};
 
\end{axis}
\end{tikzpicture}\ 
\begin{tikzpicture}[scale=0.55,line width=100pt]
\begin{axis}[
width  = 0.5\textwidth,
xlabel = {\# Training Samples}, 
tick label style={font=\small},
legend entries = {GraphCL+TDL\textsubscript{ToDD},GraphCL,SimGRACE+TDL\textsubscript{ToDD},SimGRACE},
legend cell align=left, 
legend style={font=\small, 
at={(1.00,1.05)}, anchor=south east}, 
xtick =data,
every axis plot/.append style={thick}
]
\addplot [color=graphcltdl, mark=o,]
 plot [error bars/.cd, y dir = both, y explicit]
 table[x =x, y =GraphCLTDLvr, y error =GraphCLTDLstdvr,col sep=comma]{csvsv/clintox.csv};
 \addplot [color=graphcl, mark=o,]
 plot [error bars/.cd, y dir = both, y explicit]
 table[x =x, y =GraphCL, y error =GraphCLstd,col sep=comma]{csv/clintox.csv};
\addplot [color=simgracetdl, mark=o,]
 plot [error bars/.cd, y dir = both, y explicit]
 table[x =x, y =SimGRACETDLvr, y error =SimGRACETDLstdvr,col sep=comma]{csvsv/clintox.csv};
 \addplot [color=simgrace, mark=o,]
 plot [error bars/.cd, y dir = both, y explicit]
 table[x =x, y =SimGRACE, y error =SimGRACEstd,col sep=comma]{csv/clintox.csv};
\end{axis}
\end{tikzpicture}\ 
\begin{tikzpicture}[scale=0.55,line width=100pt]
\begin{axis}[
width  = 0.5\textwidth,
xlabel = {\# Training Samples}, 
tick label style={font=\small},
legend entries = {GraphLoG+TDL\textsubscript{ToDD},GraphLoG,JOAO+TDL\textsubscript{ToDD},JOAO},
legend cell align=left, 
legend style={font=\small, 
at={(1.00,1.05)}, anchor=south east}, 
xtick =data,
every axis plot/.append style={thick}
]
\addplot [color=graphlogtdl, mark=o,]
 plot [error bars/.cd, y dir = both, y explicit]
 table[x =x, y =GraphLoGTDLvr, y error =GraphLoGTDLstdvr,col sep=comma]{csvsv/clintox.csv};
 \addplot [color=graphlog, mark=o,]
 plot [error bars/.cd, y dir = both, y explicit]
 table[x =x, y =GraphLoG, y error =GraphLoGstd,col sep=comma]{csv/clintox.csv};
\addplot [color=joaotdl, mark=o,]
 plot [error bars/.cd, y dir = both, y explicit]
 table[x =x, y =JOAOTDLvr, y error =JOAOTDLstdvr,col sep=comma]{csvsv/clintox.csv};
 \addplot [color=joao, mark=o,]
 plot [error bars/.cd, y dir = both, y explicit]
 table[x =x, y =JOAO, y error =JOAOstd,col sep=comma]{csv/clintox.csv};
\end{axis}
\end{tikzpicture}
\centering
\caption{
Performance over smaller datasets, subsets of Bace (left) and ClinTox (right). 
}
\vspace{-0.1 in}
\label{fig:small-data}
\end{figure}
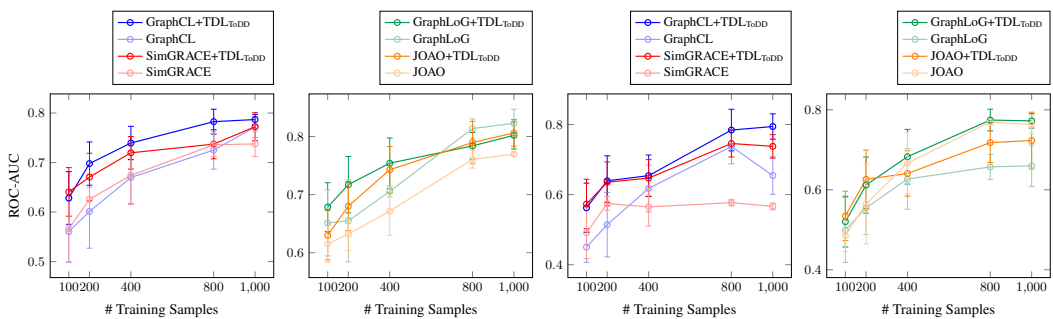

\textbf{The Low-Data Scenario, 
Figures~\ref{fig:small-data-intro}, \ref{fig:small-data}, \ref{small-data}, \ref{fig:small-data-ml-todd}.} 
Inspired by the impressive increases we see in the quality of representations after pre-training, we experimented with considerably smaller datasets. This is a scenario which is relevant in practice since proprietary data from costly experiments is often only available in small numbers \citep{tom2023calibration}. 
Here we see clearly that the improvement in representation space is reflected in downstream performance. There is still some dependence on the dataset but, overall, 
\tdl often yields remarkable improvements. 
The introduction figure shows that also other models show room for improvement; note that GraphMVP uses 3D information.


\textbf{Further Analysis.} Due to space constraints, we present only the most impactful findings in this section and the rest in the appendix, in particular: (1) \textbf{Ablation Studies.} Our approach offers great variability in terms of, for instance, the graph encoder, the construction of the PIs, or even the type of topological fingerprint more generally. (2) \textbf{Unsupervised learning.} We observe similar but less pronounced trends, likely due to the smaller size of the data and fewer features available. (3) \textbf{PIs w/o SSL.} In view of the promising results with PIs in supervised ML \citep{krishnapriyan2021machine, demir2022todd}, 
we consider the PIs with XGB and SVM, our approaches based on pre-training are better. 
(4) \textbf{Other datasets.} We conducted experiments over several other datasets, including activity-cliff data, where TDL is shown beneficial more generally.

\textbf{Discussion.} 
In this section, we have shown results very different from the ones we usually see. In the benchmark, we observe some improvements and, for GraphCL+\tdl, numbers competitive with SOTA, but they do not reach the general, large impact we saw in Section~\ref{sec:eval-pt}.
However, we have such impact over smaller data.
Altogether, \tdl offers great benefits for \emph{all} baselines we considered: 
\begin{itemize}[leftmargin=*,topsep=0pt,noitemsep]
\item 
It leads to \textbf{much better calibrated distances in the representation space} in terms of PH. Since the magnitude of increase is generally remarkable but varies with the data, this shows potential for further improvement, especially, in terms of choosing the right PIs.
\item It considerably \textbf{improves representations} in terms of established SSL metrics.
\item \textbf{Downstream performance reflects this gain of representation power specifically in the low-data scenario}. Given the importance of this kind of data in the real world and the deficiencies of the models we checked, we conclude that our proposal is not only novel and different in its distance-based modeling, but also that it advances molecular SSL in important aspects empirically.
\end{itemize}

\section{Conclusions}
\label{sec:conclusions}

In this paper, we focus on the quality of representations in molecular SSL, an important topic which has not been addressed greatly to date. We propose a novel approach of shaping the embedding space based on distances in terms of fingerprints from persistent homology. We have evaluated our topological-distance-based contrastive loss extensively, and show that it solves deficiencies of existing models and considerably improves various baselines in terms of representation power. 
In future work, we plan to analyze the relationship between specific PIs and downstream data in more detail, to further improve our approach, include external knowledge, and to put more focus on regression~tasks.
\newpage

\begin{ack}
This work was supported by National Key R\&D Program of China (2021YFB3500700), NSFC Grant 62172026, National Social Science Fund of China 22\&ZD153, the Fundamental Research Funds for the Central Universities and SKLSDE; 
correspondence to Lei Shi.
\end{ack}

\bibliography{neurips}

\clearpage

\appendix
\section{Datasets and Experimental Details}
\label{app:exp_settings}


\textbf{Pre-training Datasets.} As usual, for GNN pre-training, we use a minimal set of node and bond features that unambiguously describe the two-dimensional structure of molecules from the ZINC15 database following previous works \citep{Hu*2020Strategies}.
\begin{itemize}

\item Node features:
\begin{itemize}
\item Atom number: $[1, 118]$
\item Chirality tag: $\{$unspecified, tetrahedral cw, tetrahedral ccw, other$\}$
\end{itemize}
\item Edge features:
\begin{itemize}
\item Bond type: $\{$single, double, triple, aromatic$\}$
\item Bond direction: $\{$–, endupright, enddownright$\}$
\end{itemize}

\end{itemize}
We note that the atom number implicitly provides domain knowledge in that it captures important knowledge about the atom's charge.


\textbf{Downstream Task Datasets.} We focus on molecular property prediction, where we adopt the widely-used 8 binary classification datasets contained in MoleculeNet \citep{wu2018moleculenet}. In addition, we experimented with 4 regression tasks from various low-data domains in line with the same setting as \citep{liu2022pretraining}. We provide detailed information of these datasets in Table~\ref{mole-dataset} and more information can be found in 
\citep{liu2022pretraining}. To further evaluate the usefulness of our models, we also use the opioids-related datasets from \citep{deng2022taking}. The number (percentage) of activity cliff (AC) molecules are summarized
in Table~\ref{ac-dataset}. Notably, among MDR1, MOR, DOR and KOR, nearly half molecules are equipped with activity cliff scaffolds. Finally, scaffold-split \citep{ramsundar2019deep} is used to splits graphs into train/val/test set as 80\%/10\%/10\% which mimics real-world use cases.

\begin{table}[h]
  \caption{Summary for the molecule datasets.}
  \label{mole-dataset}
  \centering
  \begin{tabular}{ccccccc}
    \toprule
    Dataset &Task &\# Tasks &\# Molecules \\
    \midrule
    BBBP &Classification &1 &2,039  \\
    Tox21 &Classification &12 &7,831  \\
    ToxCast &Classification &617 &8,576  \\
    Sider &Classification &27 &1,427  \\
    ClinTox &Classification &2 &1,478  \\
    MUV &Classification &17 &93,087  \\
    HIV &Classification &1 &41,127  \\
    Bace &Classification &1 &1,513  \\
    \midrule
    ESOL &Regression &1 &1,128  \\
    Lipo &Regression &1 &4,200  \\
    Malaria &Regression &1 &9,999  \\
    CEP &Regression &1 &29,978  \\
    \bottomrule
  \end{tabular}
\end{table}

\begin{table}[h]
  \caption{Summary of activity cliffs in the opioids datasets.}
  \label{ac-dataset}
  \centering
  \resizebox{\linewidth}{!}{
  \begin{tabular}{ccccccc}
    \toprule
    Dataset & MDR1 &CYP2D6 &CYP3A4 &MOR &DOR &KOR \\
    \midrule
    \# Mol. AC. Scaff.  (\%) &594 (41.3) &710 (31.0) &926 (25.2) &1,627 (46.1) &1,342 (41.6) &1,502 (45.2) \\
     Task & Classification & Classification& Classification& Classification& Classification& Classification \\
     \# Tasks & 1  & 1 & 1 & 1 & 1 & 1\\
    \bottomrule
  \end{tabular}}
\end{table}

\textbf{Computing Environment.} The experiments are conducted with two RTX 3090 GPUs.

\textbf{Model Configurations.} Following \citep{Hu*2020Strategies}, we adopt a 5-layer Graph Isomorphism Network (GIN) \citep{xupowerful} with 300-dimensional hidden units as the backbone architecture. We use mean pooling as the readout function. During the pre-training stage, GNNs are pre-trained for 100 epochs with batch-size as 256 and the learning rate as 0.001. During the fine-tuning stage, we train for 100 epochs with batch-size as 32, dropout rate as 0.5, and report the test performance using ROC-AUC at the best validation epoch. To evaluate the learned representations, we follow the linear probing (linear evaluation) \citep{pmlr-v206-akhondzadeh23a}, where a linear classifier (1 linear layer) is trained on top of the frozen backbone. We probe two types of properties from the representation: the first is the molecular properties of downstream tasks, for which we use the same settings as fine-tuning and report ROC-AUC; the second is topological fingerprints (PIs) of downstream molecular graphs, for which we train for 50 epochs with batch-size as 256, dropout rate as 0.0, and report MSE and Pearson correlation coefficients between distances in embedding space and corresponding PI space. The experiments are run with 10 different random seeds, and we report mean and standard deviation.

\textbf{Performance Comparison.} We compare the proposed method with existing self-supervised graph representation learning algorithms (i.e. InfoGraph \citep{sun2019infograph}, EdgePred \citep{hamilton2017inductive}, ContextPred \citep{Hu*2020Strategies}, AttrMask \citep{Hu*2020Strategies}, GraphLoG \citep{xu2021self}, iMolCLR \citep{wang2022improving}, AD-GCL \citep{suresh2021adversarial}, JOAO \citep{you2021graph}, SimGRACE \citep{xia2022simgrace}, GraphCL \citep{you2020graph}, GraphMAE \citep{hou2022graphmae}, MGSSL \citep{zhang2021motif}, GPT-GNN \citep{hu2020gpt}, G-Contextual \citep{rong2020self}, G-Motif \citep{rong2020self}, GraphMVP \citep{liu2022pretraining}).
We report the results from their own papers with two exceptions: (1) for GraphMAE and GraphLoG, they reported the test performance of the last epoch; and (2) MGSSL and SimGRACE, they only used 3 random seeds; and (3) iMolCLR used \textasciitilde10M molecules for pre-training. We reproduce the results of these four models with the same protocol as \citep{Hu*2020Strategies}, utilizing the pre-trained models originally provided by the respective papers. 

\textbf{Topological Fingerprints.} %
We set the highest dimension in the simplicial complex to 1. This doesn't impact performance because molecular graphs, being planar, rarely possess loops of length 3; in fact, the majority of loops exhibit a length of 5 or more. 

Firstly, we use the sublevel filtration method. In terms of filtration functions, we use atomic number.
For every molecular graph, we utilize an extended persistence module \citep{cohen2009extending, yan2022neural} to compute persistent diagrams (as depicted in Figure 1 of \citep{yan2022neural}) and then vectorize them to get PIs as topological fingerprints.
The extended module incorporates a additional reverse process. Through this module, all loop features created will be killed in the end. As an example, in Figure~\ref{fig:ph}, a 1-dim extended persistence point that can be captured is $(t_4, t_2)$. This is because the loop feature is born at time $t_4$ and then dies at time $t_2$ in the reverse process. Note that we only use atomic number as filtration functions for TDL\textsubscript{atom}. In addition to the atomic number, we also investigate the heat kernel signature (hks) with temperature values $t = 0.1$ \citep{sun2009concise} and node degree as filtration functions. Observe that this is more geometric information, compared to the domain knowledge encoded in the atomic number (atom). Considering that TAE is a straightforward model and there is too little information for PI\textsubscript{atom}, we concatenated these 3 PIs as reconstruction targets for TAE\textsubscript{ahd}. 

We have also considered a specific filtration function from ToDD \citep{demir2022todd}, which incorporates atomic mass and, additionally, partial charges and bond types inside a so-called multi-VR filtration. 
In a nutshell, for each of those three types of information, we do not simply consider the given filtration, but construct a 2D filtration by filtering: in one dimension according to the above information and in the second dimension according to a VR filtration capturing the distances between atoms (Figure 5 in \citep{demir2022todd} illustrates this kind of filtration well.) Lastly, the 3 2D PIs are concatenated, and we compute distances based on the combination of PIs.
In Appendix \ref{sec-ablation}, we report ablation results for various filtrations.

In order to explore the effectiveness of PIs, we calculate the widely-used Tanimoto coefficient \citep{bajusz2015tanimoto} of the extended connectivity fingerprints (ECFPs) \citep{rogers2010extended} between two molecules as their chemical similarity on 8 downstream tasks datasets. Then, we pick the molecule pairs with top 20\% similarity as ‘similar’ ones (Blue) and the left 80\% molecule pairs in the datasets are ‘random’ pairs (Yellow). In final, we calculate the cosine similarity of these molecule pairs based on PIs. Figure~\ref{fig:pi-space-uselful-fp-sim-2} shows molecules with similar ECFPs have rather similar PIs.

It is computationally efficient to apply topological fingerprints method. Extracting and computing PI\textsubscript{atom} for 2M molecular graphs from ZINC15 dataset only spent 20 minutes in a single workstation with 8-core CPU, 64 GB of memory.

\textbf{TAE+ Models.} We hypothesize that TAE gives other pre-training approach a good initial embedding space to start from. Therefore, this pre-training strategy is to first perform TAE pre-training and then other models pre-training. We study TAE+EdgePred, TAE+AttrMask and TAE+ContextPred.

\textbf{TDL.} The temperature parameter~$\tau$ is set to 0.1. 

\textbf{TDL over Views.} Like any other contrastive loss function, TDL can be utilized as a standalone objective by considering views. Here, we show preliminary experiments using a formulation based on the loss of GraphCL \citep{you2020graph}, we also use its augmentations. A minibatch of \( N \) graphs is processed through contrastive learning, resulting in \( 2N \) augmented graphs. We denote the augmented views for the \(n\)th graph in the minibatch as \( z_{n,i} \) and \( z_{n,j} \), and modify TDL as follows:

\vspace{-0.2 in}
$$
\Lmc_{\tdli}^{views}=\frac{1}{N-1}\sum_{\substack{m\in \left[ 1,N \right]}}{-\log}\frac{\exp \left( sim\left( z_{n,i},z_{m,j} \right) /\tau \right)}{\sum_{\substack{k\in \left[ 1,N \right]}}{\mathbb{I} _{\left[ dis\left( I_{n,i},I_{k,j} \right) \ge dis\left( I_{n,i},I_{m,j} \right) \right]}}\cdot \exp \left( sim\left( z_{n,i},z_{k,j} \right) /\tau \right)}
$$


\begin{figure}[h]
\centering

\subfigure[Bace]{
\begin{minipage}[t]{0.23\linewidth}
\centering
\includegraphics[width=1.4in]{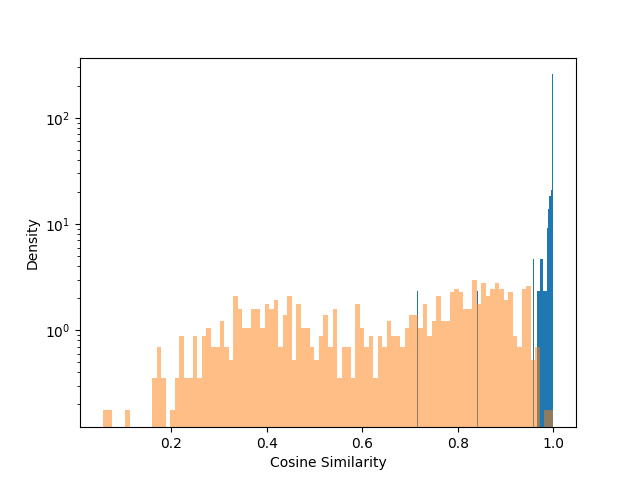}
\end{minipage}%
}%
\subfigure[BBBP]{
\begin{minipage}[t]{0.23\linewidth}
\centering
\includegraphics[width=1.4in]{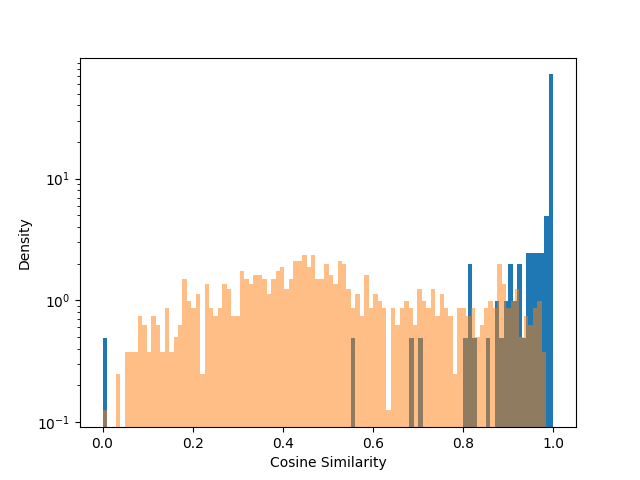}
\end{minipage}%
}%
\subfigure[ClinTox]{
\begin{minipage}[t]{0.23\linewidth}
\centering
\includegraphics[width=1.4in]{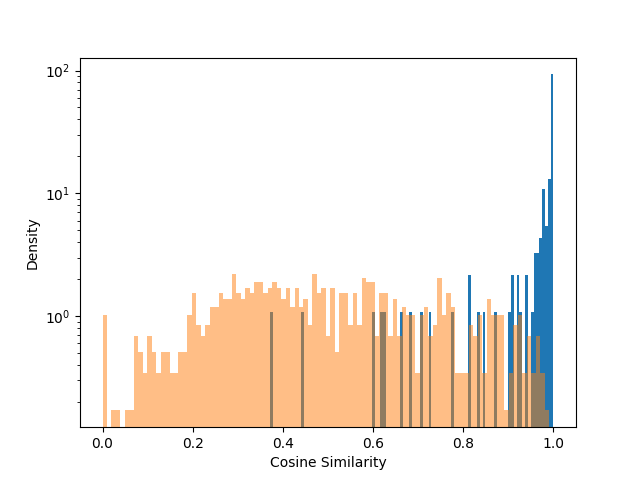}
\end{minipage}
}%
\subfigure[Sider]{
\begin{minipage}[t]{0.23\linewidth}
\centering
\includegraphics[width=1.4in]{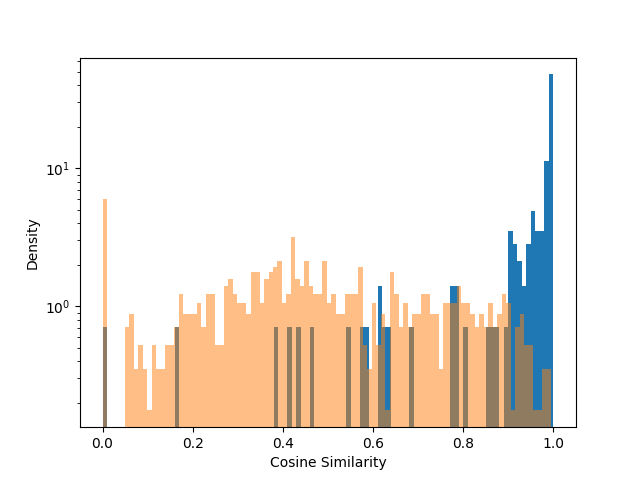}
\end{minipage}}

\subfigure[Tox21]{
\begin{minipage}[t]{0.23\linewidth}
\centering
\includegraphics[width=1.4in]{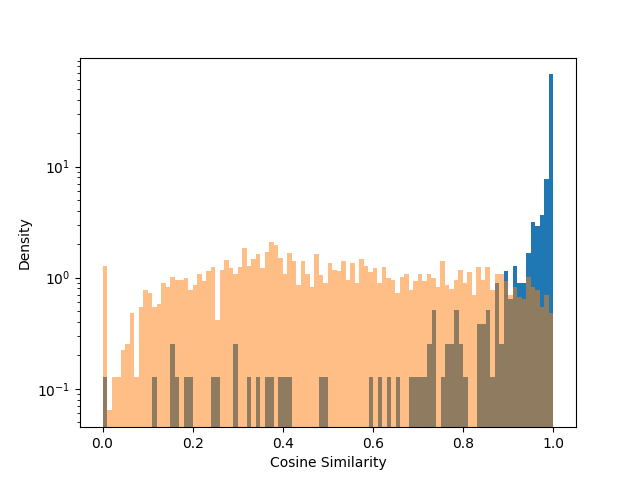}
\end{minipage}%
}%
\subfigure[ToxCast]{
\begin{minipage}[t]{0.23\linewidth}
\centering
\includegraphics[width=1.4in]{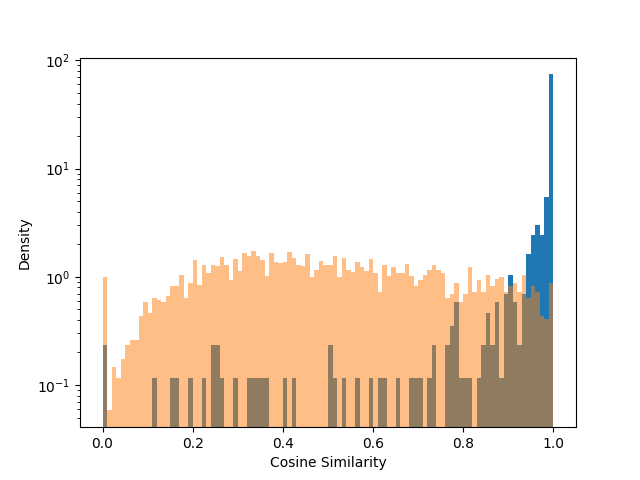}
\end{minipage}%
}%
\subfigure[HIV]{
\begin{minipage}[t]{0.23\linewidth}
\centering
\includegraphics[width=1.4in]{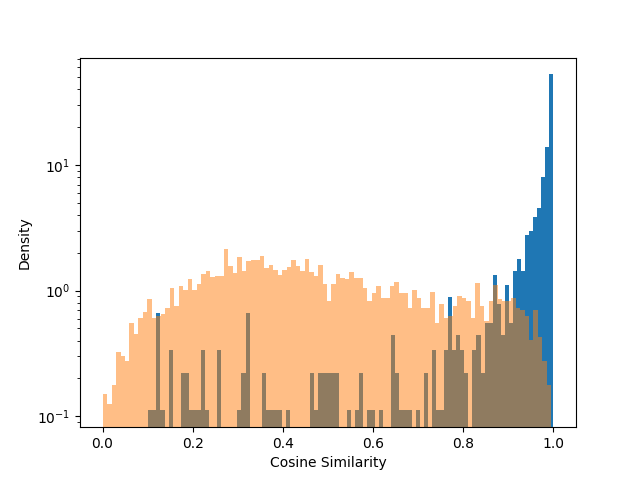}
\end{minipage}
}%
\subfigure[MUV]{
\begin{minipage}[t]{0.23\linewidth}
\centering
\includegraphics[width=1.4in]{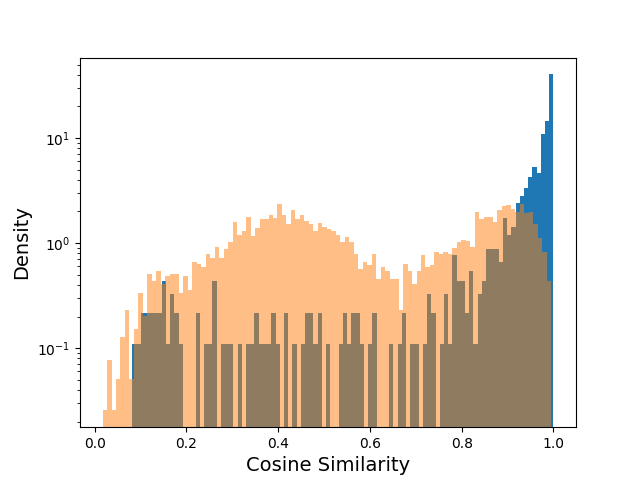}
\end{minipage}
}%
\centering
\caption{Similarity histograms of PIs on 8 downstream datasets. Cosine similarity
measures the similarity between the PIs while ‘Random’ (Yellow) and ‘Similar’ (Blue) are defined
by the similarity between ECFPs. 
}
\label{fig:pi-space-uselful-fp-sim-2}
\end{figure}

\section{Analysis of Embeddings}\label{app:embeddings}
\begin{table}[h]
  \caption{The Pearson correlation coefficient between real PIs and reconstructed PIs (i.e., \pip's embeddings) on the downstream data (the full dataset), after pre-training.
  We also report the overlap of the downstream data with the pre-training data, but do not observe considerable impact here.
}
  \label{tab:PI-pearson}
  \centering
  \resizebox{\linewidth}{!}{
  \begin{tabular}{ccccccccc}
    \toprule
    &Tox21 & ToxCast &Sider &ClinTox &MUV &HIV &BBBP &Bace \\
     \midrule
    \# Molecules &7,831 &8,575 &1,427 &1,478 &93,087 &41,127 &2,039 &1,513 \\
    \# Molecules in ZINC15 &628 (8\%) &608 (7\%)&1 (0\%)&51 (4\%)&7599 (8\%)&925 (2\%)&100 (5\%)&0 (0\%)\\
    \midrule
    TAE\textsubscript{ahd} &0.8572 &0.7744 &0.5939 &0.8642 &0.9044 &0.7359 &0.8660 &0.8514\\
    
    \bottomrule
  \end{tabular}}
\end{table}

\begin{table}[h]
  \caption{5-nearest neighbors classifier of CL methods for 8 downstream datasets, ROC-AUC (\%).}
  \label{tab:linear-knn}
  \centering
  \resizebox{\linewidth}{!}{
  \begin{tabular}{cccccccccc}
    \toprule
    &Tox21 & ToxCast &Sider &ClinTox &MUV &HIV &BBBP &Bace &Average \\
    \midrule
    PI\textsubscript{atom}	&58.8	&51.1	&58.7	&50.9	&50.2	&64.8	&55.2	&75.5 &58.15 \\
    PI\textsubscript{ToDD}	&62.7	&51.7	&56.3	&64.9	&49.7	&63.9	&56.7	&68.7 &59.33 \\
    ECFP	&63.8	&54.6	&59.1	&50.7	&54.0	&68.1	&59.3	&77.0  &60.82 \\
    \midrule
    GraphCL &65.3 &55.8 &60.6 &56.4 &52.0 &70.4 &57.9 &71.4 &61.23 \\
    GraphCL + TDL\textsubscript{atom} &67.2 &56.7 &62.6 &57.5 &54.4 &71.1 &62.0 &70.7 &\textbf{62.78} \\
    GraphCL + TDL\textsubscript{ToDD} &68.0 &56.5 &61.6 &63.5 &54.1 &70.3 &62.5 &66.9 &\textbf{62.93} \\
     \midrule
    JOAO &66.3 &55.7 &59.9 &57.2 &53.2 &72.0 &58.9 &70.2 &61.67 \\
    JOAO + TDL\textsubscript{atom} &66.9 &55.4 &58.9 &62.5 &52.7 &69.1 &60.1 &70.4 &\textbf{62.00} \\
    JOAO + TDL\textsubscript{ToDD} &67.1 &56.8 &60.5 &61.2 &51.5 &67.7 &59.5 &72.7 &\textbf{62.13} \\
    \midrule
    SimGRACE &64.2 &56.1 &58.3 &46.7 &55.1 &68.2 &58.4 &70.4 &59.67 \\
    SimGRACE + TDL\textsubscript{atom} &66.1 &56.3 &61.9 &68.4 &52.4 &71.3 &63.6 &77.3 &\textbf{64.66} \\
    SimGRACE + TDL\textsubscript{ToDD} &67.3 &58.1 &60.2 &77.5 &52.2 &68.3 &67.8 &76.7 &\textbf{66.01} \\
    \midrule
    GraphLoG &63.0 &55.9 &59.8 &68.9 &52.2 &68.2 &56.8 &79.4 &63.02 \\
    GraphLoG + TDL\textsubscript{atom} &66.7 &57.3 &60.9 &67.3 &50.0 &68.4 &63.6 &77.5 &\textbf{63.96} \\
    GraphLoG + TDL\textsubscript{ToDD} &67.4 &56.9 &61.7 &58.2 &53.1 &72.1 &63.8 &78.2 &\textbf{63.92} \\

    \bottomrule
  \end{tabular}}
\end{table}

\begin{table}[h]
\centering
\caption{ROGI scores, lower score means smoother embedding space in terms of downstream labels.}
\label{tab:rogi}
\begin{tabular}{lcccc}
\toprule
{} & {ClinTox} & {Bace} & {BBBP} & {Sider} \\
\midrule
GraphCL & 0.4496 & 0.7883 & 0.7033 & 0.6863 \\
GraphCL + TDL\textsubscript{atom} & 0.4549 & 0.7623 & 0.6973 & 0.7213 \\
JOAO & 0.4675 & 0.7900 & 0.7344 & 0.7320 \\
JOAO + TDL\textsubscript{atom} & 0.4668 & 0.7673 & 0.7284 & 0.7284 \\
SimGRACE & 0.4722 & 0.7919 & 0.7874 & 0.7308 \\
SimGRACE + TDL\textsubscript{atom} & 0.4594 & 0.7635 & 0.6989 & 0.7004 \\
GraphLoG & 0.4456 & 0.7480 & 0.6908 & 0.6815 \\
GraphLoG + TDL\textsubscript{atom} & 0.4789 & 0.8161 & 0.7167 & 0.7320 \\
\bottomrule
\end{tabular}
\end{table}

\newpage

\textbf{Alignment Analysis.} To evaluate alignment, we construct positive and negative molecule pairs. Positive pairs in a dataset are defined as those sharing identical molecular properties, while negative pairs exhibit differing properties. We randomly select 10k positive and negative pairs of molecular graphs from the Bace, BBBP, Clintox, and Sider datasets. Subsequently, we compute the normalized Euclidean distance of the embeddings and illustrate the results via a histogram, as depicted in Figure~\ref{fig:combined} and Figure~\ref{fig:combined_todd}. We apply TDL on top of GraphCL, SimGRACE, JOAO, and GraphLoG to represent the impact of TDL on the baselines.

The normalized curves demonstrate significant differences in the embedding space among different models in the distance dimension. It is observable that TDL refines the distribution of distances to more closely align with that of PIs. The distinction between TDL\textsubscript{ToDD} and TDL\textsubscript{atom} explains why TDL\textsubscript{atom} does not enhance JOAO because JOAO's curve shows that it is in fact possible to get such an atom PIs' distribution. Furthermore, JOAO reveals that additional domain knowledge beyond atom can be effectively incorporated via TDL\textsubscript{ToDD}.

\begin{figure*}[h]   \center{\includegraphics[width=14cm]  {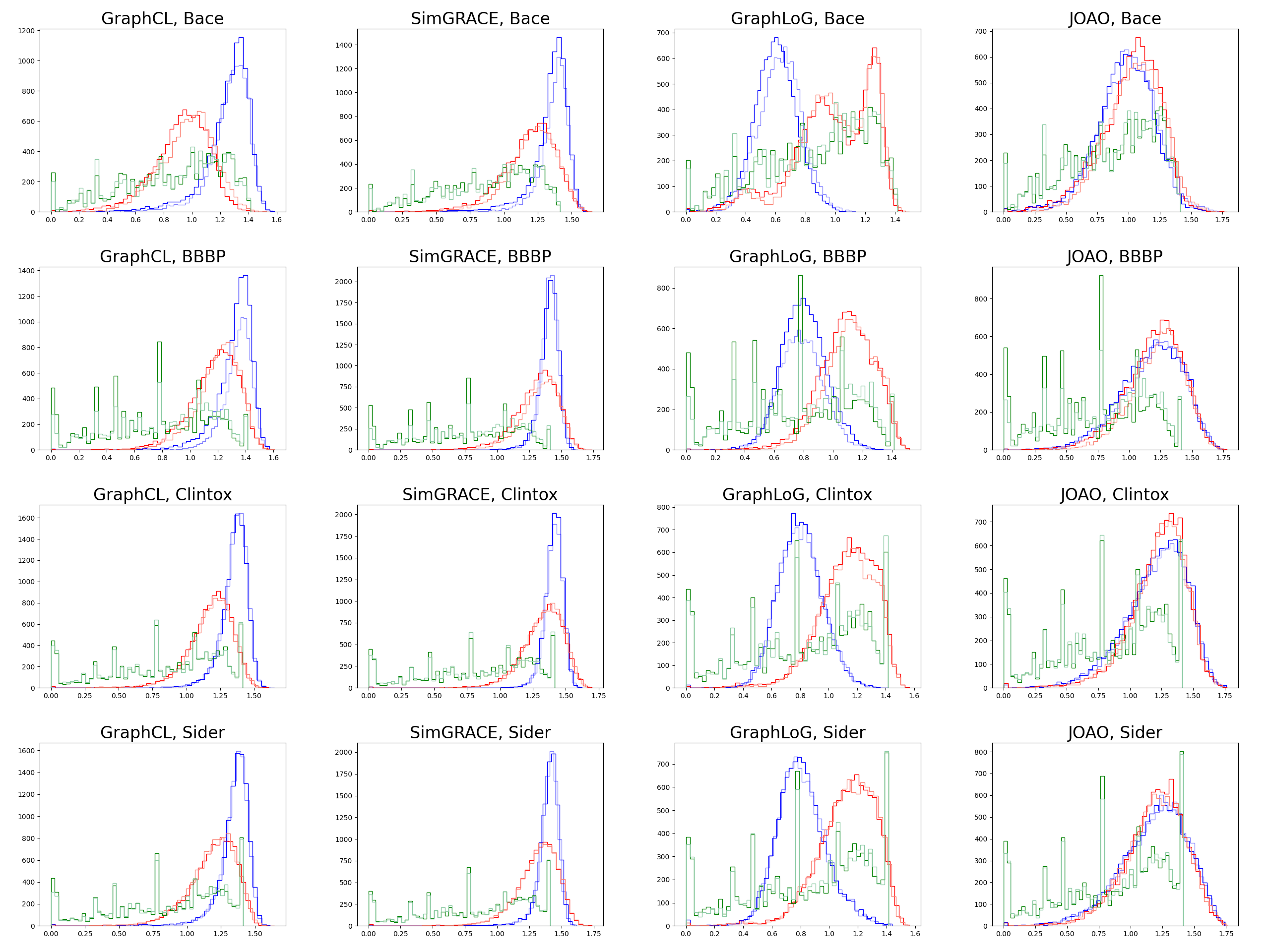}}   \caption{Evaluating alignment in the embedding space influenced by TDL\textsubscript{atom}. Dark/light shades denote positive/negative pairs, respectively. Blue illustrates the embedding of baselines, red depicts the embedding of baselines+TDL\textsubscript{atom}, and green represents PI\textsubscript{atom}.} \label{fig:combined} \end{figure*}

\begin{figure*}[h]   \center{\includegraphics[width=14cm]  {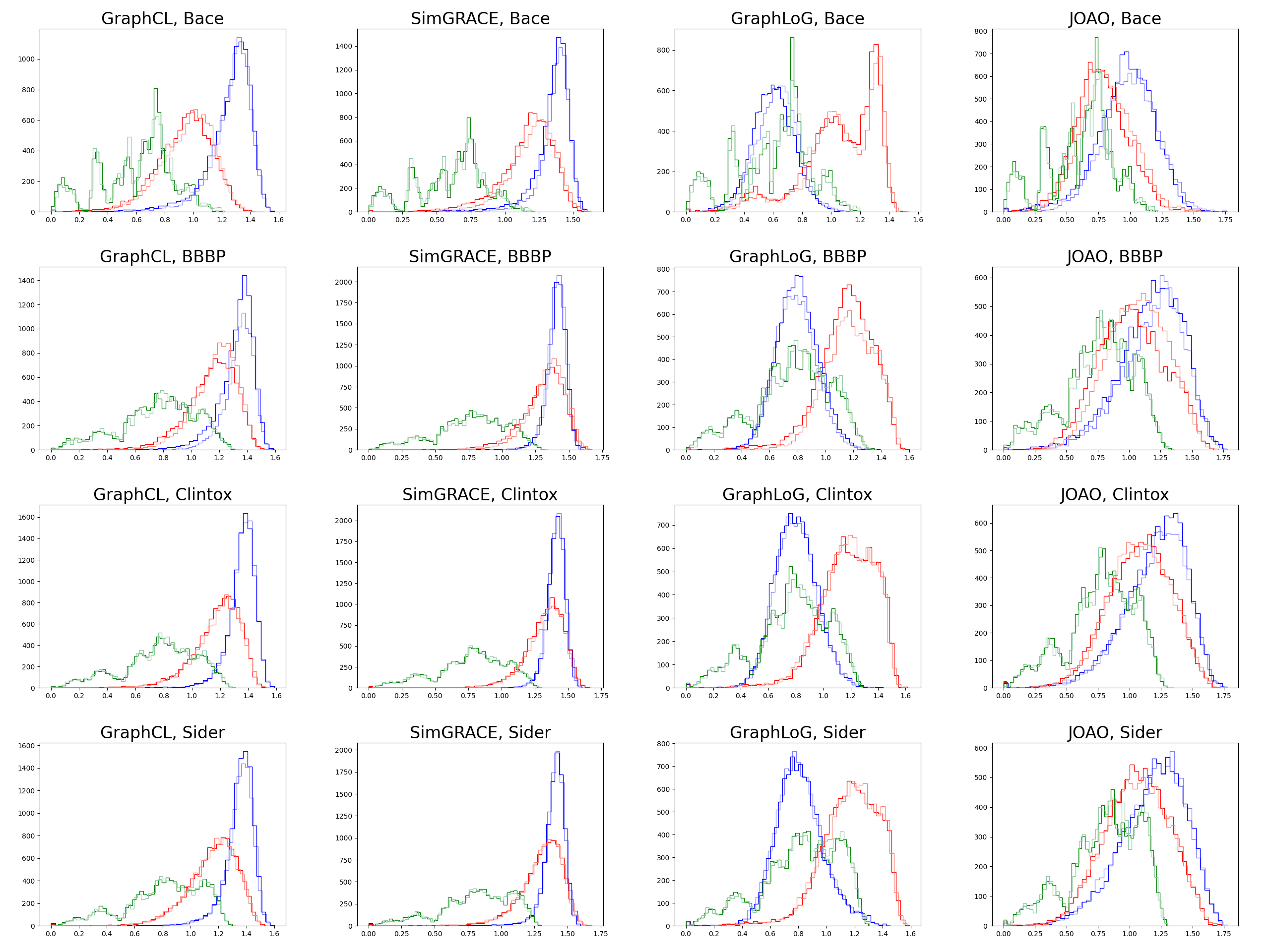}}   \caption{Evaluating alignment in the embedding space influenced by TDL\textsubscript{ToDD}. Dark/light shades denote positive/negative pairs, respectively. Blue illustrates the embedding of baselines, red depicts the embedding of baselines+TDL\textsubscript{ToDD}, and green represents PI\textsubscript{ToDD}.} \label{fig:combined_todd} \end{figure*}

\vspace{-0.1 in}
\begin{figure}[h]
\centering
\subfigure[GraphCL]{
\begin{minipage}[t]{0.5\linewidth}
\centering
\includegraphics[width=1.3in]{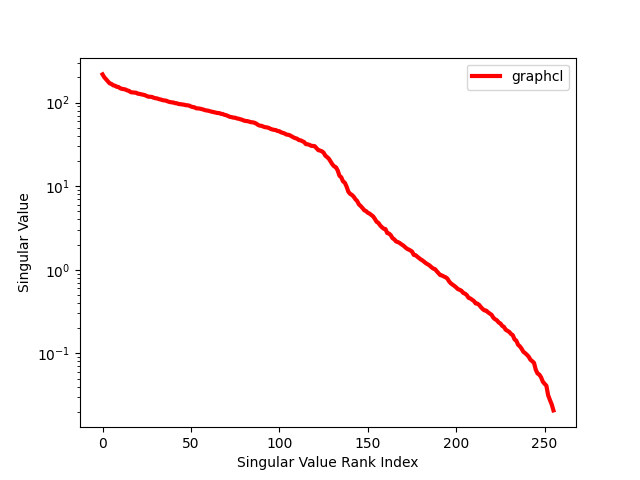}
\end{minipage}%
}%
\subfigure[GraphLoG]{
\begin{minipage}[t]{0.5\linewidth}
\centering
\includegraphics[width=1.3in]{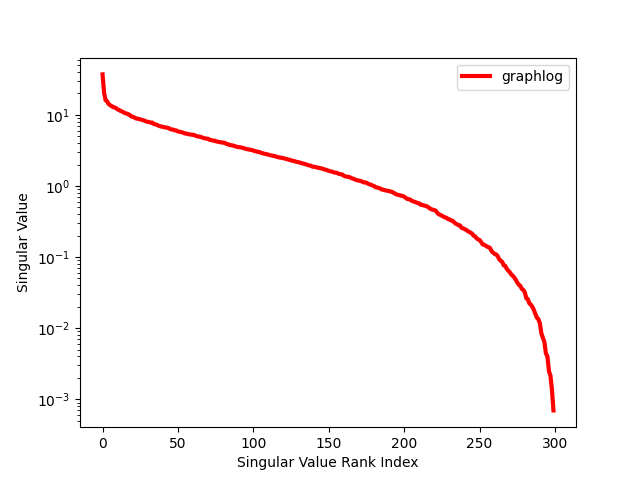}
\end{minipage}%
}%
\caption{Singular values of covariance matrices of the representations; Pre-training datasets.}
\label{fig:sv-pretrain}
\end{figure}


\section{Additional Linear Probing}\label{app:linear}

We continue with linear probing for molecular properties using the more challenging activity cliff dataset. In Table~\ref{tab:linear-ac}, we show that TDL often yields improvements.

\begin{table}[h]
  \caption{Linear evaluation of molecular property prediction using opioids-related datasets in classification task, repeated 30 times with 30 different seeds, ROC-AUC (\%).}
  \label{tab:linear-ac}
  \centering
  \resizebox{\linewidth}{!}{
  \begin{tabular}{ccccccc}
    \toprule
    \multirow{2}{*}{Dataset} & \multicolumn{2}{c}{MDR1} &\multicolumn{2}{c}{CYP2D6} &\multicolumn{2}{c}{CYP3A4} \\
    & ROC $\uparrow$ & PRC  $\uparrow$ & ROC $\uparrow$ & PRC $\uparrow$ & ROC $\uparrow$ & PRC $\uparrow$ \\
    \midrule
    GraphCL &0.832 (0.172) &0.674 (0.256) &0.512 (0.205) &0.019 (0.017) &0.763 (0.147) &0.327 (0.258) \\
    GraphCL + TDL\textsubscript{atom} &0.928 (0.081) &0.736 (0.241) &0.590 (0.251) &0.032 (0.043) &0.817 (0.133) &0.296 (0.183) \\
    \bottomrule
    \midrule
     \multirow{2}{*}{Dataset} &\multicolumn{2}{c}{MOR} &\multicolumn{2}{c}{DOR} &\multicolumn{2}{c}{KOR} \\
    & ROC $\uparrow$ & PRC  $\uparrow$ & ROC $\uparrow$ & PRC $\uparrow$ & ROC $\uparrow$ & PRC $\uparrow$ \\
    \midrule
    GraphCL &0.789 (0.060) &0.595 (0.108) &0.730 (0.065) &0.382 (0.091) &0.801 (0.048) &0.593 (0.099) \\
    GraphCL + TDL\textsubscript{atom} &0.797 (0.050) &0.601 (0.109) &0.760 (0.065) &0.432 (0.117) &0.802 (0.063) &0.598 (0.105) \\
    \bottomrule
    \vspace{-0.2 in}
  \end{tabular}}
\end{table}

\section{Additional Benchmark Results}\label{app:benchmark}
\textbf{Primary Results.}
The main results on 8 molecular property prediction tasks are listed in Table~\ref{tab:main}. Note that all SSL methods in Table~\ref{tab:main} are pre-trained on the same dataset based on ZINC15. We observe that \emph{the average performance of GraphCL+TDL has surpassed these existing SSL methods.} Additionally, we also apply TDL on both AD-GCL and iMolCLR, with results in the table showing that TDL still provides good improvements.
\begin{table}[h]
  \caption{Results for 8 molecular property prediction tasks on ZINC15. For each downstream task, we report the mean (and standard deviation) ROC-AUC of 10 seeds.}
  \label{tab:main}
  \centering
  \resizebox{\linewidth}{!}{
  \begin{tabular}{cccccccccc}
    \toprule
    &Tox21 & ToxCast &Sider &ClinTox &MUV &HIV &BBBP &Bace &Average \\
    \midrule
    \# Molecules &7,831 &8,575 &1,427 &1,478 &93,087 &41,127 &2,039 &1,513 &- \\
    \midrule
    No pretrain &74.6 (0.4) &61.7 (0.5) &58.2 (1.7) &58.4 (6.4) &70.7 (1.8) &75.5 (0.8) &65.7 (3.3) &72.4 (3.8) &67.15 \\
    \midrule
    InfoGraph &73.3 (0.6) &61.8 (0.4) &58.7 (0.6) &75.4 (4.3) &74.4 (1.8) &74.2 (0.9) &68.7 (0.6) &74.3 (2.6) &70.10  \\
    GraphLoG &75.0 (0.6) &63.4 (0.6) &59.3 (0.8) &70.1 (4.6) &75.5 (1.6) &76.1 (0.8) &69.6 (1.6) &82.1 (1.0) &71.43 \\
    JOAO &75.0 (0.3) &62.9 (0.5) &60.0 (0.8) &81.3 (2.5) &71.7 (1.4) &76.7 (1.2) &70.2 (1.0) &77.3 (0.5) &71.89 \\
    SimGRACE &74.4 (0.3) &62.6 (0.7) &60.2 (0.9) &75.5 (2.0) &75.4 (1.3) &75.0 (0.6) &71.2 (1.1) &74.9 (2.0) &71.15 \\
    GraphMAE &75.2 (0.9) &63.6 (0.3) &60.5 (1.2) &76.5 (3.0) &76.4 (2.0) &76.8 (0.6) &71.2 (1.0) &78.2 (1.5) &72.30 \\
    MGSSL &75.2 (0.6) &63.3 (0.5) &61.6 (1.0) &77.1 (4.5) &77.6 (0.4) &75.8 (0.4) &68.8 (0.6) &78.8 (0.9) &72.28 \\
    \midrule
    TDL\textsubscript{atom} &75.8 (0.5) &62.1 (0.5) &62.2 (0.9) &79.1 (3.8) &75.2 (2.3) &76.9 (0.9) &66.5 (1.8) &78.4 (1.1) &72.02 \\
    TDL$^\text{views}_\text{atom}$ &75.7 (0.3) &64.3 (0.5) &61.5 (0.5)	&80.7 (2.9) &73.3 (1.4) &77.6 (0.8) &70.5 (0.9) &80.6 (1.1) &73.02 \\
    \midrule
    EdgePred &76.0 (0.6) &64.1 (0.6) &60.4 (0.7) &64.1 (3.7) &74.1 (2.1) &76.3 (1.0) &67.3 (2.4) &79.9 (0.9) &70.27 \\
    TAE\textsubscript{ahd} + EdgePred &77.2 (0.4) &63.8 (0.5) &60.1 (0.7) &72.2 (2.4) &75.9 (1.7) &76.8 (1.0) &68.1 (0.8) &81.5 (1.4) &\textbf{71.95} \\
    \midrule
    AttrMask &76.7 (0.4) &64.2 (0.5) &61.0 (0.7) &71.8 (4.1) &74.7 (1.4) &77.2 (1.1) &64.3 (2.8) &79.3 (1.6) &71.15 \\
    TAE\textsubscript{ahd} + AttrMask &76.6 (0.2) &64.0 (0.3) &58.2 (1.3) &71.6 (3.7) &75.5 (2.1) &77.2 (0.6) &68.7 (0.7) &81.0 (1.1) &\textbf{71.60} \\
    \midrule
    ContextPred &75.7 (0.7) &63.9 (0.6) &60.9 (0.6) &65.9 (3.8) &75.8 (1.7) &77.3 (1.0) &68.0 (2.0) &79.6 (1.2) &70.89 \\
    TAE\textsubscript{ahd} + ContextPred &76.4 (0.5) &63.2 (0.4) &62.0 (0.7) &74.6 (4.4) &76.7 (1.6) &77.7 (1.2) &68.9 (1.1) &80.7 (1.6) &\textbf{72.53} \\
    \midrule
    GraphCL &73.9 (0.7) &62.4 (0.6) &60.5 (0.9) &76.0 (2.7) &69.8 (2.7) &78.5 (1.2) &69.7 (0.7) &75.4 (1.4) &70.78 \\
    GraphCL + TDL\textsubscript{atom} &75.3 (0.4) &64.4 (0.3) &61.2 (0.6) &83.7 (2.7) &75.7 (0.8) &78.0 (0.9) &70.9 (0.6) &80.5 (0.8) &\textbf{73.71} \\
    \midrule
    AD-GCL &76.5 (0.8) &63.0 (0.7) &63.2 (0.7) &79.7 (3.5) &72.3 (1.6) &78.2 (0.9) &70.0 (1.0) &78.5 (0.8) &72.67 \\
    AD-GCL + TDL\textsubscript{atom} &75.1 (0.5) &64.5 (0.6) &60.9 (1.3) &79.4 (2.8) &74.4 (1.7) &77.4 (0.9) &71.9 (1.1) &82.1 (1.4) &\textbf{73.21} \\
     \midrule
    iMolCLR &75.1 (0.7)	&63.5 (0.4)	&59.4 (1.0)	&74.7 (1.9)	&81.0 (2.6)	&77.3 (1.2)	&69.6 (1.2)	&77.3 (1.0)	&72.24 \\
    iMolCLR + TDL\textsubscript{atom} &75.9 (0.6) &63.7 (0.3) &60.7 (0.8)	&83.8 (1.9)	&75.1 (1.3)	&76.7 (0.7)	&71.2 (0.9)	&78.5 (1.3) &\textbf{73.20} \\
    \bottomrule
  \end{tabular}}
\end{table}

\textbf{Individual Tasks.} In Table~\ref{tab:benchmark-results-ours}, we observe that SimGRACE+TDL performs poorly on Sider and ClinTox. To learn about the influence of multiple tasks, we run SimGRACE and GraphLoG on individual tasks of Sider and ClinTox. Figure~\ref{fig:benchmark-indiv-tasks} shows increases for TDL\textsubscript{atom}. We hence conclude that the task correlation is exploited by the models considerably during fine-tuning in the multi-task setting, such that the improved SSL representation (which likely does not fit every downstream task under consideration) provided by TDL has less impact.

\begin{figure}[t]
\centering
\subfigure[Sider]{
\begin{minipage}[t]{0.23\linewidth}
\centering
\includegraphics[width=1.3in]{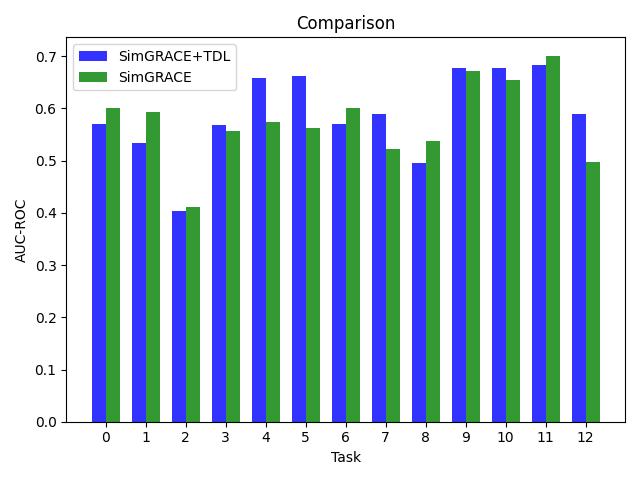}
\end{minipage}%
}%
\subfigure[Sider]{
\begin{minipage}[t]{0.23\linewidth}
\centering
\includegraphics[width=1.3in]{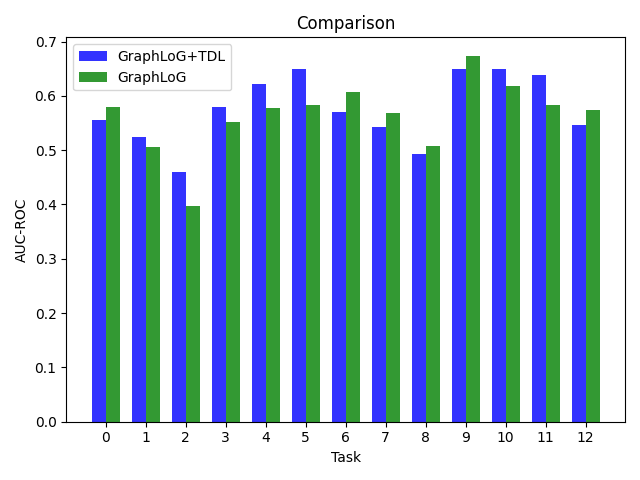}
\end{minipage}%
}%
\subfigure[ClinTox]{
\begin{minipage}[t]{0.23\linewidth}
\centering
\includegraphics[width=1.3in]{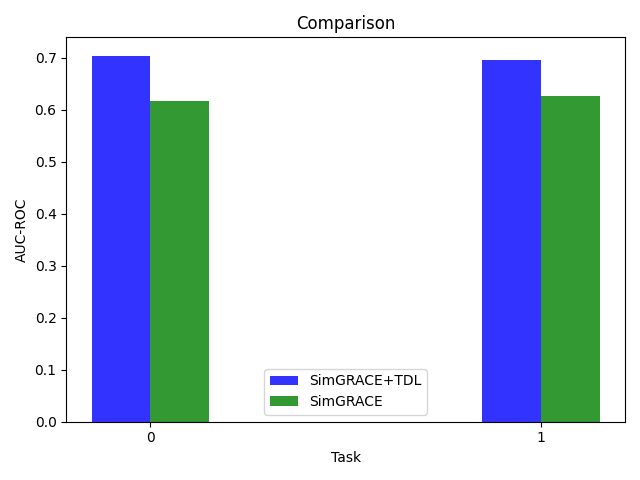}
\end{minipage}%
}%
\subfigure[ClinTox]{
\begin{minipage}[t]{0.23\linewidth}
\centering
\includegraphics[width=1.3in]{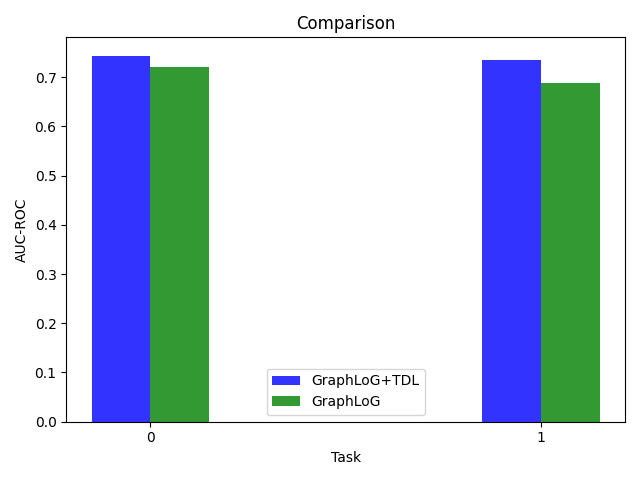}
\end{minipage}%
}%
\caption{Performance on individual tasks of the multi-task data. We
report the mean ROC-AUC of 10 seeds.}
\label{fig:benchmark-indiv-tasks}
\end{figure}

\textbf{Activity Cliff Dataset.} We also evaluate our methods using the opioids-related datasets. We fine-tune GraphCL and GraphCL+TDL and observe that GraphCL+TDL outperformes GraphCL, as shown in the Table~\ref{tab:ac}. This result is interesting in itself since it is often argued that SSL based on similarity in ``just'' the geometric space (i.e., ignoring domain knowledge about, e.g., matching pairs or activity cliffs) may yield problems in practice. However, here, we observe that increased power of SSL representation helps to increase the performance of existing methods, even on AC data - likely also because they have issues that still need to be resolved, such as the dimensional collapse we observed. 

\begin{table}[h]
  \caption{Results for molecular property prediction using opioids-related datasets in classification tasks, repeated 30 times with 30 different seeds, ROC-AUC (\%).}
  \label{tab:ac}
  \centering
  \resizebox{\linewidth}{!}{
  \begin{tabular}{ccccccc}
    \toprule
    \multirow{2}{*}{Dataset} & \multicolumn{2}{c}{MDR1} &\multicolumn{2}{c}{CYP2D6} &\multicolumn{2}{c}{CYP3A4} \\
    & ROC $\uparrow$ & PRC  $\uparrow$ & ROC $\uparrow$ & PRC $\uparrow$ & ROC $\uparrow$ & PRC $\uparrow$ \\
    \midrule
    GraphCL &0.912 (0.152) &0.766 (0.211) &0.558 (0.216) &0.079 (0.143) &0.803 (0.155) &0.310 (0.224) \\
    GraphCL + TDL\textsubscript{atom} &0.934 (0.126) &0.770 (0.233) &0.565 (0.261) &0.082 (0.116) &0.815 (0.159) &0.418 (0.245) \\
    \bottomrule
    \midrule
     \multirow{2}{*}{Dataset} &\multicolumn{2}{c}{MOR} &\multicolumn{2}{c}{DOR} &\multicolumn{2}{c}{KOR} \\
    & ROC $\uparrow$ & PRC  $\uparrow$ & ROC $\uparrow$ & PRC $\uparrow$ & ROC $\uparrow$ & PRC $\uparrow$ \\
    \midrule
    GraphCL &0.861 (0.040) &0.738 (0.105) &0.804 (0.038) &0.532 (0.076) &0.859 (0.044) &0.701 (0.085) \\
    GraphCL + TDL\textsubscript{atom} &0.879 (0.036) &0.741 (0.093) &0.821 (0.049) &0.566 (0.092) &0.861 (0.041) &0.718 (0.058) \\
    \bottomrule
  \end{tabular}}
\end{table}

\textbf{Additional Pre-training Data.} Additionally, we validate the effectiveness of our methods on another pre-training dataset, GEOM (50K) \citep{liu2022pretraining}. For an evaluation, we study TAE and apply TDL on GraphCL and JOAO. We follow all pre-training and fine-tuning settings of \citep{liu2022pretraining}, with the only difference being that we pre-train our models for 200 epochs instead. We compare the results from their original paper. As shown in Table~\ref{class-geom}, TAE alone reaches decent average performance, and surpasses SOTA on ClinTox and HIV. And we report the performance in regressive property prediction in Table~\ref{tab:regression-moleculenet} which shows that TDL enhances GraphCL and JOAO on many datasets. Please note that we considered these experiments only to obtain initial estimates; the GEOM data (which we used to be able to compare to related works) itself is too small for real pre-training if the basleline models are not able to exploit the 3D nature of the conformers.


\begin{table}[h]
  \caption{Results for 8 molecular property prediction tasks on GEOM. For each downstream task, we report the mean (and standard deviation) ROC-AUC of 3 seeds with scaffold splitting. Note that GraphMVP uses 3D geometry (conformers), while other models do not.}
  \label{class-geom}
  \centering
  \resizebox{\linewidth}{!}{
  \begin{tabular}{cccccccccc}
    \toprule
     &Tox21 & ToxCast &Sider &ClinTox &MUV &HIV &BBBP &Bace &Average \\
    \midrule
    \# Molecules &7,831 &8,575 &1,427 &1,478 &93,087 &41,127 &2,039 &1,513 &- \\
    \midrule
    No pretrain &74.6 (0.4) &61.7 (0.5) &58.2 (1.7) &58.4 (6.4) &70.7 (1.8) &75.5 (0.8) &65.7 (3.3) &72.4 (3.8) &67.15 \\
    \midrule
    InfoGraph &73.0 (0.7) &62.0 (0.3) &59.2 (0.2) &75.1 (5.0) &74.0 (1.5) &74.5 (1.8) &69.2 (0.8) &73.9 (2.5) &70.10 \\
    EdgePred &74.5 (0.4) &60.8 (0.5) &56.7 (0.1) &55.8 (6.2) &73.3 (1.6) &75.1 (0.8) &64.5 (3.1) &64.6 (4.7) &65.64 \\
    ContextPred &73.3 (0.5) &62.8 (0.3) &59.3 (1.4) &73.7 (4.0) &72.5 (2.2) &75.8 (1.1) &71.2 (0.9) &78.6 (1.4) &70.89 \\
    AttrMask &74.2 (0.8) &62.5 (0.4) &60.4 (0.6) &68.6 (9.6) &73.9 (1.3) &74.3 (1.3) &70.2 (0.5) &77.2 (1.4) &70.16 \\
    GPT-GNN &75.3 (0.5) &62.2 (0.1) &57.5 (4.2) &57.8 (3.1) &76.1 (2.3) &75.1 (0.2) &64.5 (1.1) &77.6 (0.5) &68.27 \\
    GraphLoG &73.0 (0.3) &62.2 (0.4) &57.4 (2.3) &62.0 (1.8) &73.1 (1.7) &73.4 (0.6) &67.8 (1.7) &78.8 (0.7) &68.47 \\
    G-Contextual &75.2 (0.3) &62.6 (0.3) &58.4 (0.6) &59.9 (8.2) &72.3 (0.9) &75.9 (0.9) &70.3 (1.6) &79.2 (0.3) &69.21 \\
    G-Motif &73.2 (0.8) &62.6 (0.5) &60.6 (1.1) &77.8 (2.0) &73.3 (2.0) &73.8 (1.4) &66.4 (3.4) &73.4 (4.0) &70.14 \\
    GraphCL &75.0 (0.3) &62.8 (0.2) &60.1 (1.3) &78.9 (4.2) &77.1 (1.0) &75.0 (0.4) &67.5 (3.3) &68.7 (7.8) &70.64 \\
    JOAO &74.4 (0.7) &62.7 (0.6) &60.7 (1.0) &66.3 (3.9) &77.0 (2.2) &76.6 (0.5) &66.0 (0.6) &72.9 (2.0) &69.57 \\
    GraphMVP &74.5 (0.4) &62.7 (0.1) &62.3 (1.6) &79.0 (2.5) &75.0 (1.4) &74.8 (1.4) &68.5 (0.2) &76.8 (1.1) &71.69 \\
    GraphMVP-C &74.4 (0.2) &63.1 (0.4) &63.9 (1.2) &77.5 (4.2) &75.0 (1.0) &77.0 (1.2) &72.4 (1.6) &81.2 (0.9) &73.07 \\
    \midrule
    TAE\textsubscript{ahd} &74.9 (0.3) &62.3 (0.3) &60.2 (0.9) &81.9 (2.5) &71.2 (1.4) &77.6 (1.2) &70.9 (3.3) &78.4 (1.6) &72.18 \\
    \bottomrule
  \end{tabular}}
\end{table}

\begin{table}[h]
  \caption{Results for 4 molecular property prediction tasks (regression) on GEOM. For each downstream task, we report the mean (and standard variance) RMSE of 3 seeds with scaffold splitting.}
  \label{tab:regression-moleculenet}
  \centering
  \begin{tabular}{cccccc}
    \toprule
     &ESOL &Lipo &Malaria &CEP &Average \\
    \midrule
    No pretrain &1.178 (0.044) &0.744 (0.007) &1.127 (0.003) &1.254 (0.030) &1.07559 \\
    \midrule
    AttrMask &1.112 (0.048) &0.730 (0.004) &1.119 (0.014) &1.256 (0.000) &1.05419 \\
    ContextPred &1.196 (0.037) &0.702 (0.020) &1.101 (0.015) &1.243 (0.025) &1.06059 \\
    GraphMVP &1.091 (0.021) &0.718 (0.016) &1.114 (0.013) &1.236 (0.023) &1.03968 \\
    \midrule
    TAE\textsubscript{ahd} &1.129 (0.019) &0.723 (0.006) &1.105 (0.014) &1.265 (0.018) &1.05552 \\
    \midrule
    JOAO &1.120 (0.019) &0.708 (0.007) &1.145 (0.010) &1.293 (0.003) &1.06631 \\
    JOAO + TDL\textsubscript{atom} &1.086 (0.013) &0.713 (0.004) &1.120 (0.007) &1.265 (0.008) &1.04613 \\
    \midrule
    GraphCL &1.127 (0.023) &0.722 (0.004) &1.143 (0.015) &1.317 (0.010) &1.07725 \\
    GraphCL + TDL\textsubscript{atom} &1.082 (0.008) &0.707 (0.009) &1.113 (0.007) &1.294 (0.011) &1.04943 \\
    \bottomrule
  \end{tabular}
\end{table}

\begin{figure}[h]
\centering
\centering

\begin{tikzpicture}[scale=0.55,line width=100pt]
\begin{axis}[
width  = 0.5\textwidth,
xlabel = {\# Training Samples}, 
tick label style={font=\small},
legend entries = {GraphLoG+TDL\textsubscript{atom},GraphLoG,SimGRACE+TDL\textsubscript{atom},SimGRACE},
legend cell align=left, 
legend style={font=\small, 
at={(1.00,1.05)}, anchor=south east}, 
xtick =data,
every axis plot/.append style={thick}
]
\addplot [color=graphcltdl, mark=o,]
 plot [error bars/.cd, y dir = both, y explicit]
 table[x =x, y =GraphCLTDL, y error =GraphCLTDLstd,col sep=comma]{csv/bace.csv};
 \addplot [color=graphcl, mark=o,]
 plot [error bars/.cd, y dir = both, y explicit]
 table[x =x, y =GraphCL, y error =GraphCLstd,col sep=comma]{csv/bace.csv};
\addplot [color=simgracetdl, mark=o,]
 plot [error bars/.cd, y dir = both, y explicit]
 table[x =x, y =SimGRACETDL, y error =SimGRACETDLstd,col sep=comma]{csv/bace.csv};
 \addplot [color=simgrace, mark=o,]
 plot [error bars/.cd, y dir = both, y explicit]
 table[x =x, y =SimGRACE, y error =SimGRACEstd,col sep=comma]{csv/bace.csv};
\end{axis}
\end{tikzpicture}\ 
\begin{tikzpicture}[scale=0.55,line width=100pt]
\begin{axis}[
width  = 0.5\textwidth,
xlabel = {\# Training Samples}, 
tick label style={font=\small},
legend entries = {GraphLoG+TDL\textsubscript{atom},GraphLoG,JOAO+TDL\textsubscript{atom},JOAO},
legend cell align=left, 
legend style={font=\small, 
at={(1.00,1.05)}, anchor=south east}, 
xtick =data,
every axis plot/.append style={thick}
]
\addplot [color=graphlogtdl, mark=o,]
 plot [error bars/.cd, y dir = both, y explicit]
 table[x =x, y =GraphLoGTDL, y error =GraphLoGTDLstd,col sep=comma]{csv/bace.csv};
 \addplot [color=graphlog, mark=o,]
 plot [error bars/.cd, y dir = both, y explicit]
 table[x =x, y =GraphLoG, y error =GraphLoGstd,col sep=comma]{csv/bace.csv};
\addplot [color=joaotdl, mark=o,]
 plot [error bars/.cd, y dir = both, y explicit]
 table[x =x, y =JOAOTDL, y error =JOAOTDLstd,col sep=comma]{csv/bace.csv};
 \addplot [color=joao, mark=o,]
 plot [error bars/.cd, y dir = both, y explicit]
 table[x =x, y =JOAO, y error =JOAOstd,col sep=comma]{csv/bace.csv};
\end{axis}
\end{tikzpicture}
\begin{tikzpicture}[scale=0.55,line width=100pt]
\begin{axis}[
width  = 0.5\textwidth,
xlabel = {\# Training Samples}, 
tick label style={font=\small},
legend entries = {GraphLoG+TDL\textsubscript{atom},GraphLoG,SimGRACE+TDL\textsubscript{atom},SimGRACE},
legend cell align=left, 
legend style={font=\small, 
at={(1.00,1.05)}, anchor=south east}, 
xtick =data,
every axis plot/.append style={thick}
]
\addplot [color=graphcltdl, mark=o,]
 plot [error bars/.cd, y dir = both, y explicit]
 table[x =x, y =GraphCLTDL, y error =GraphCLTDLstd,col sep=comma]{csv/clintox.csv};
 \addplot [color=graphcl, mark=o,]
 plot [error bars/.cd, y dir = both, y explicit]
 table[x =x, y =GraphCL, y error =GraphCLstd,col sep=comma]{csv/clintox.csv};
\addplot [color=simgracetdl, mark=o,]
 plot [error bars/.cd, y dir = both, y explicit]
 table[x =x, y =SimGRACETDL, y error =SimGRACETDLstd,col sep=comma]{csv/clintox.csv};
 \addplot [color=simgrace, mark=o,]
 plot [error bars/.cd, y dir = both, y explicit]
 table[x =x, y =SimGRACE, y error =SimGRACEstd,col sep=comma]{csv/clintox.csv};
\end{axis}
\end{tikzpicture}\ 
\begin{tikzpicture}[scale=0.55,line width=100pt]
\begin{axis}[
width  = 0.5\textwidth,
xlabel = {\# Training Samples}, 
tick label style={font=\small},
legend entries = {GraphLoG+TDL\textsubscript{atom},GraphLoG,JOAO+TDL\textsubscript{atom},JOAO},
legend cell align=left, 
legend style={font=\small, 
at={(1.00,1.05)}, anchor=south east}, 
xtick =data,
every axis plot/.append style={thick}
]
\addplot [color=graphlogtdl, mark=o,]
 plot [error bars/.cd, y dir = both, y explicit]
 table[x =x, y =GraphLoGTDL, y error =GraphLoGTDLstd,col sep=comma]{csv/clintox.csv};
 \addplot [color=graphlog, mark=o,]
 plot [error bars/.cd, y dir = both, y explicit]
 table[x =x, y =GraphLoG, y error =GraphLoGstd,col sep=comma]{csv/clintox.csv};
\addplot [color=joaotdl, mark=o,]
 plot [error bars/.cd, y dir = both, y explicit]
 table[x =x, y =JOAOTDL, y error =JOAOTDLstd,col sep=comma]{csv/clintox.csv};
 \addplot [color=joao, mark=o,]
 plot [error bars/.cd, y dir = both, y explicit]
 table[x =x, y =JOAO, y error =JOAOstd,col sep=comma]{csv/clintox.csv};
\end{axis}
\end{tikzpicture}\\

\begin{tikzpicture}[scale=0.55,line width=100pt]
\begin{axis}[
width  = 0.5\textwidth,
xlabel = {\# Training Samples}, 
tick label style={font=\small},
legend entries = {GraphCL+TDL\textsubscript{atom},GraphCL,SimGRACE+TDL\textsubscript{atom},SimGRACE},
legend cell align=left, 
legend style={font=\small, 
at={(1.00,1.05)}, anchor=south east}, 
xtick =data,
every axis plot/.append style={thick}
]
\addplot [color=graphcltdl, mark=o,]
 plot [error bars/.cd, y dir = both, y explicit]
 table[x =x, y =GraphCLTDL, y error =GraphCLTDLstd,col sep=comma]{csv/sider.csv};
 \addplot [color=graphcl, mark=o,]
 plot [error bars/.cd, y dir = both, y explicit]
 table[x =x, y =GraphCL, y error =GraphCLstd,col sep=comma]{csv/sider.csv};
\addplot [color=simgracetdl, mark=o,]
 plot [error bars/.cd, y dir = both, y explicit]
 table[x =x, y =SimGRACETDL, y error =SimGRACETDLstd,col sep=comma]{csv/sider.csv};
 \addplot [color=simgrace, mark=o,]
 plot [error bars/.cd, y dir = both, y explicit]
 table[x =x, y =SimGRACE, y error =SimGRACEstd,col sep=comma]{csv/sider.csv};
\end{axis}
\end{tikzpicture}\ 
\begin{tikzpicture}[scale=0.55,line width=100pt]
\begin{axis}[
width  = 0.5\textwidth,
xlabel = {\# Training Samples}, 
tick label style={font=\small},
legend entries = {GraphLoG+TDL\textsubscript{atom},GraphLoG,JOAO+TDL\textsubscript{atom},JOAO},
legend cell align=left, 
legend style={font=\small, 
at={(1.00,1.05)}, anchor=south east}, 
xtick =data,
every axis plot/.append style={thick}
]
\addplot [color=graphlogtdl, mark=o,]
 plot [error bars/.cd, y dir = both, y explicit]
 table[x =x, y =GraphLoGTDL, y error =GraphLoGTDLstd,col sep=comma]{csv/sider.csv};
 \addplot [color=graphlog, mark=o,]
 plot [error bars/.cd, y dir = both, y explicit]
 table[x =x, y =GraphLoG, y error =GraphLoGstd,col sep=comma]{csv/sider.csv};
\addplot [color=joaotdl, mark=o,]
 plot [error bars/.cd, y dir = both, y explicit]
 table[x =x, y =JOAOTDL, y error =JOAOTDLstd,col sep=comma]{csv/sider.csv};
 \addplot [color=joao, mark=o,]
 plot [error bars/.cd, y dir = both, y explicit]
 table[x =x, y =JOAO, y error =JOAOstd,col sep=comma]{csv/sider.csv};
\end{axis}
\end{tikzpicture}
\begin{tikzpicture}[scale=0.55,line width=100pt]
\begin{axis}[
width  = 0.5\textwidth,
xlabel = {\# Training Samples}, 
tick label style={font=\small},
legend entries = {GraphCL+TDL\textsubscript{atom},GraphCL,SimGRACE+TDL\textsubscript{atom},SimGRACE},
legend cell align=left, 
legend style={font=\small, 
at={(1.00,1.05)}, anchor=south east}, 
xtick =data,
every axis plot/.append style={thick}
]
\addplot [color=graphcltdl, mark=o,]
 plot [error bars/.cd, y dir = both, y explicit]
 table[x =x, y =GraphCLTDL, y error =GraphCLTDLstd,col sep=comma]{csv/bbbp.csv};
 \addplot [color=graphcl, mark=o,]
 plot [error bars/.cd, y dir = both, y explicit]
 table[x =x, y =GraphCL, y error =GraphCLstd,col sep=comma]{csv/bbbp.csv};
\addplot [color=simgracetdl, mark=o,]
 plot [error bars/.cd, y dir = both, y explicit]
 table[x =x, y =SimGRACETDL, y error =SimGRACETDLstd,col sep=comma]{csv/bbbp.csv};
 \addplot [color=simgrace, mark=o,]
 plot [error bars/.cd, y dir = both, y explicit]
 table[x =x, y =SimGRACE, y error =SimGRACEstd,col sep=comma]{csv/bbbp.csv};
\end{axis}
\end{tikzpicture}\ 
\begin{tikzpicture}[scale=0.55,line width=100pt]
\begin{axis}[
width  = 0.5\textwidth,
xlabel = {\# Training Samples}, 
tick label style={font=\small},
legend entries = {GraphLoG+TDL\textsubscript{atom},GraphLoG,JOAO+TDL\textsubscript{atom},JOAO},
legend cell align=left, 
legend style={font=\small, 
at={(1.00,1.05)}, anchor=south east}, 
xtick =data,
every axis plot/.append style={thick}
]
\addplot [color=graphlogtdl, mark=o,]
 plot [error bars/.cd, y dir = both, y explicit]
 table[x =x, y =GraphLoGTDL, y error =GraphLoGTDLstd,col sep=comma]{csv/bbbp.csv};
 \addplot [color=graphlog, mark=o,]
 plot [error bars/.cd, y dir = both, y explicit]
 table[x =x, y =GraphLoG, y error =GraphLoGstd,col sep=comma]{csv/bbbp.csv};
\addplot [color=joaotdl, mark=o,]
 plot [error bars/.cd, y dir = both, y explicit]
 table[x =x, y =JOAOTDL, y error =JOAOTDLstd,col sep=comma]{csv/bbbp.csv};
 \addplot [color=joao, mark=o,]
 plot [error bars/.cd, y dir = both, y explicit]
 table[x =x, y =JOAO, y error =JOAOstd,col sep=comma]{csv/bbbp.csv};
\end{axis}
\end{tikzpicture}
\\~\\

\begin{tikzpicture}[scale=0.55,line width=100pt]
\begin{axis}[
width  = 0.5\textwidth,
xlabel = {\# Training Samples}, 
tick label style={font=\small},
legend entries = {GraphCL+TDL\textsubscript{atom},GraphCL,SimGRACE+TDL\textsubscript{atom},SimGRACE},
legend cell align=left, 
legend style={font=\small, 
at={(1.00,1.05)}, anchor=south east}, 
xtick =data,
every axis plot/.append style={thick}
]
\addplot [color=graphcltdl, mark=o,]
 plot [error bars/.cd, y dir = both, y explicit]
 table[x =x, y =GraphCLTDL, y error =GraphCLTDLstd,col sep=comma]{csv/tox21.csv};
 \addplot [color=graphcl, mark=o,]
 plot [error bars/.cd, y dir = both, y explicit]
 table[x =x, y =GraphCL, y error =GraphCLstd,col sep=comma]{csv/tox21.csv};
\addplot [color=simgracetdl, mark=o,]
 plot [error bars/.cd, y dir = both, y explicit]
 table[x =x, y =SimGRACETDL, y error =SimGRACETDLstd,col sep=comma]{csv/tox21.csv};
 \addplot [color=simgrace, mark=o,]
 plot [error bars/.cd, y dir = both, y explicit]
 table[x =x, y =SimGRACE, y error =SimGRACEstd,col sep=comma]{csv/tox21.csv};
\end{axis}
\end{tikzpicture}\ 
\begin{tikzpicture}[scale=0.55,line width=100pt]
\begin{axis}[
width  = 0.5\textwidth,
xlabel = {\# Training Samples}, 
tick label style={font=\small},
legend entries = {GraphLoG+TDL\textsubscript{atom},GraphLoG,JOAO+TDL\textsubscript{atom},JOAO},
legend cell align=left, 
legend style={font=\small, 
at={(1.00,1.05)}, anchor=south east}, 
xtick =data,
every axis plot/.append style={thick}
]
\addplot [color=graphlogtdl, mark=o,]
 plot [error bars/.cd, y dir = both, y explicit]
 table[x =x, y =GraphLoGTDL, y error =GraphLoGTDLstd,col sep=comma]{csv/tox21.csv};
 \addplot [color=graphlog, mark=o,]
 plot [error bars/.cd, y dir = both, y explicit]
 table[x =x, y =GraphLoG, y error =GraphLoGstd,col sep=comma]{csv/tox21.csv};
\addplot [color=joaotdl, mark=o,]
 plot [error bars/.cd, y dir = both, y explicit]
 table[x =x, y =JOAOTDL, y error =JOAOTDLstd,col sep=comma]{csv/tox21.csv};
 \addplot [color=joao, mark=o,]
 plot [error bars/.cd, y dir = both, y explicit]
 table[x =x, y =JOAO, y error =JOAOstd,col sep=comma]{csv/tox21.csv};
\end{axis}
\end{tikzpicture}
\begin{tikzpicture}[scale=0.55,line width=100pt]
\begin{axis}[
width  = 0.5\textwidth,
xlabel = {\# Training Samples}, 
tick label style={font=\small},
legend entries = {GraphCL+TDL\textsubscript{atom},GraphCL,SimGRACE+TDL\textsubscript{atom},SimGRACE},
legend cell align=left, 
legend style={font=\small, 
at={(1.00,1.05)}, anchor=south east}, 
xtick =data,
every axis plot/.append style={thick}
]
\addplot [color=graphcltdl, mark=o,]
 plot [error bars/.cd, y dir = both, y explicit]
 table[x =x, y =GraphCLTDL, y error =GraphCLTDLstd,col sep=comma]{csv/toxcast.csv};
 \addplot [color=graphcl, mark=o,]
 plot [error bars/.cd, y dir = both, y explicit]
 table[x =x, y =GraphCL, y error =GraphCLstd,col sep=comma]{csv/toxcast.csv};
\addplot [color=simgracetdl, mark=o,]
 plot [error bars/.cd, y dir = both, y explicit]
 table[x =x, y =SimGRACETDL, y error =SimGRACETDLstd,col sep=comma]{csv/toxcast.csv};
 \addplot [color=simgrace, mark=o,]
 plot [error bars/.cd, y dir = both, y explicit]
 table[x =x, y =SimGRACE, y error =SimGRACEstd,col sep=comma]{csv/toxcast.csv};
\end{axis}
\end{tikzpicture}\ 
\begin{tikzpicture}[scale=0.55,line width=100pt]
\begin{axis}[
width  = 0.5\textwidth,
xlabel = {\# Training Samples}, 
tick label style={font=\small},
legend entries = {GraphLoG+TDL\textsubscript{atom},GraphLoG,JOAO+TDL\textsubscript{atom},JOAO},
legend cell align=left, 
legend style={font=\small, 
at={(1.00,1.05)}, anchor=south east}, 
xtick =data,
every axis plot/.append style={thick}
]
\addplot [color=graphlogtdl, mark=o,]
 plot [error bars/.cd, y dir = both, y explicit]
 table[x =x, y =GraphLoGTDL, y error =GraphLoGTDLstd,col sep=comma]{csv/toxcast.csv};
 \addplot [color=graphlog, mark=o,]
 plot [error bars/.cd, y dir = both, y explicit]
 table[x =x, y =GraphLoG, y error =GraphLoGstd,col sep=comma]{csv/toxcast.csv};
\addplot [color=joaotdl, mark=o,]
 plot [error bars/.cd, y dir = both, y explicit]
 table[x =x, y =JOAOTDL, y error =JOAOTDLstd,col sep=comma]{csv/toxcast.csv};
 \addplot [color=joao, mark=o,]
 plot [error bars/.cd, y dir = both, y explicit]
 table[x =x, y =JOAO, y error =JOAOstd,col sep=comma]{csv/toxcast.csv};
\end{axis}
\end{tikzpicture}
\\~\\

\begin{tikzpicture}[scale=0.55,line width=100pt]
\begin{axis}[
width  = 0.5\textwidth,
xlabel = {\# Training Samples}, 
tick label style={font=\small},
legend entries = {GraphCL+TDL\textsubscript{atom},GraphCL,SimGRACE+TDL\textsubscript{atom},SimGRACE},
legend cell align=left, 
legend style={font=\small, 
at={(1.00,1.05)}, anchor=south east}, 
xtick =data,
every axis plot/.append style={thick}
]
\addplot [color=graphcltdl, mark=o,]
 plot [error bars/.cd, y dir = both, y explicit]
 table[x =x, y =GraphCLTDL, y error =GraphCLTDLstd]{csv/hiv.csv};
 \addplot [color=graphcl, mark=o,]
 plot [error bars/.cd, y dir = both, y explicit]
 table[x =x, y =GraphCL, y error =GraphCLstd]{csv/hiv.csv};
\addplot [color=simgracetdl, mark=o,]
 plot [error bars/.cd, y dir = both, y explicit]
 table[x =x, y =SimGRACETDL, y error =SimGRACETDLstd]{csv/hiv.csv};
 \addplot [color=simgrace, mark=o,]
 plot [error bars/.cd, y dir = both, y explicit]
 table[x =x, y =SimGRACE, y error =SimGRACEstd]{csv/hiv.csv};
\end{axis}
\end{tikzpicture}\ 
\begin{tikzpicture}[scale=0.55,line width=100pt]
\begin{axis}[
width  = 0.5\textwidth,
xlabel = {\# Training Samples}, 
tick label style={font=\small},
legend entries = {GraphLoG+TDL\textsubscript{atom},GraphLoG,JOAO+TDL\textsubscript{atom},JOAO},
legend cell align=left, 
legend style={font=\small, 
at={(1.00,1.05)}, anchor=south east}, 
xtick =data,
every axis plot/.append style={thick}
]
\addplot [color=graphlogtdl, mark=o,]
 plot [error bars/.cd, y dir = both, y explicit]
 table[x =x, y =GraphLoGTDL, y error =GraphLoGTDLstd]{csv/hiv.csv};
 \addplot [color=graphlog, mark=o,]
 plot [error bars/.cd, y dir = both, y explicit]
 table[x =x, y =GraphLoG, y error =GraphLoGstd]{csv/hiv.csv};
\addplot [color=joaotdl, mark=o,]
 plot [error bars/.cd, y dir = both, y explicit]
 table[x =x, y =JOAOTDL, y error =JOAOTDLstd]{csv/hiv.csv};
 \addplot [color=joao, mark=o,]
 plot [error bars/.cd, y dir = both, y explicit]
 table[x =x, y =JOAO, y error =JOAOstd]{csv/hiv.csv};
\end{axis}
\end{tikzpicture}
\begin{tikzpicture}[scale=0.55,line width=100pt]
\begin{axis}[
width  = 0.5\textwidth,
xlabel = {\# Training Samples}, 
tick label style={font=\small},
legend entries = {GraphCL+TDL\textsubscript{atom},GraphCL,SimGRACE+TDL\textsubscript{atom},SimGRACE},
legend cell align=left, 
legend style={font=\small, 
at={(1.00,1.05)}, anchor=south east}, 
xtick =data,
every axis plot/.append style={thick}
]
\addplot [color=graphcltdl, mark=o,]
 plot [error bars/.cd, y dir = both, y explicit]
 table[x =x, y =GraphCLTDL, y error =GraphCLTDLstd,col sep=comma]{csv/muv.csv};
 \addplot [color=graphcl, mark=o,]
 plot [error bars/.cd, y dir = both, y explicit]
 table[x =x, y =GraphCL, y error =GraphCLstd,col sep=comma]{csv/muv.csv};
\addplot [color=simgracetdl, mark=o,]
 plot [error bars/.cd, y dir = both, y explicit]
 table[x =x, y =SimGRACETDL, y error =SimGRACETDLstd,col sep=comma]{csv/muv.csv};
 \addplot [color=simgrace, mark=o,]
 plot [error bars/.cd, y dir = both, y explicit]
 table[x =x, y =SimGRACE, y error =SimGRACEstd,col sep=comma]{csv/muv.csv};
\end{axis}
\end{tikzpicture}
\begin{tikzpicture}[scale=0.55,line width=100pt]
\begin{axis}[
width  = 0.5\textwidth,
xlabel = {\# Training Samples}, 
tick label style={font=\small},
legend entries = {GraphLoG+TDL\textsubscript{atom},GraphLoG,JOAO+TDL\textsubscript{atom},JOAO},
legend cell align=left, 
legend style={font=\small, 
at={(1.00,1.05)}, anchor=south east}, 
xtick =data,
every axis plot/.append style={thick}
]
\addplot [color=graphlogtdl, mark=o,]
 plot [error bars/.cd, y dir = both, y explicit]
 table[x =x, y =GraphLoGTDL, y error =GraphLoGTDLstd,col sep=comma]{csv/muv.csv};
 \addplot [color=graphlog, mark=o,]
 plot [error bars/.cd, y dir = both, y explicit]
 table[x =x, y =GraphLoG, y error =GraphLoGstd,col sep=comma]{csv/muv.csv};
\addplot [color=joaotdl, mark=o,]
 plot [error bars/.cd, y dir = both, y explicit]
 table[x =x, y =JOAOTDL, y error =JOAOTDLstd,col sep=comma]{csv/muv.csv};
 \addplot [color=joao, mark=o,]
 plot [error bars/.cd, y dir = both, y explicit]
 table[x =x, y =JOAO, y error =JOAOstd,col sep=comma]{csv/muv.csv};
\end{axis}
\end{tikzpicture}
\centering
\caption{
Performance over smaller datasets. Each datapoint is the mean ROC-AUC of 10 different splits while keeping scaffolds, with standard deviations shown as error bars. From the upper left to the lower right: Bace, ClinTox, Sider,
BBBP, Tox21, ToxCast, HIV, MUV. 
\label{small-data}
}
\end{figure}

\clearpage

 \begin{figure}[h]
\centering
\begin{tikzpicture}[scale=0.55,line width=100pt]
\begin{axis}[
width  = 0.5\textwidth,
xlabel = {\# Training Samples}, 
tick label style={font=\small},
legend entries = {GraphCL+TDL\textsubscript{atom},GraphCL+TDL\textsubscript{ToDD},SimGRACE+TDL\textsubscript{atom},SimGRACE+TDL\textsubscript{ToDD}},
legend cell align=left, 
legend style={font=\small, 
at={(1.00,1.05)}, anchor=south east}, 
xtick =data,
every axis plot/.append style={thick}
]
\addplot [color=graphcl, mark=o,]
 plot [error bars/.cd, y dir = both, y explicit]
 table[x =x, y =GraphCLTDL, y error =GraphCLTDLstd,col sep=comma]{csvsv/bace.csv};
\addplot [color=graphcltdl, mark=o,]
 plot [error bars/.cd, y dir = both, y explicit]
 table[x =x, y =GraphCLTDLvr, y error =GraphCLTDLstdvr,col sep=comma]{csvsv/bace.csv};
\addplot [color=simgrace, mark=o,]
 plot [error bars/.cd, y dir = both, y explicit]
 table[x =x, y =SimGRACETDL, y error =SimGRACETDLstd,col sep=comma]{csvsv/bace.csv};
\addplot [color=simgracetdl, mark=o,]
 plot [error bars/.cd, y dir = both, y explicit]
 table[x =x, y =SimGRACETDLvr, y error =SimGRACETDLstdvr,col sep=comma]{csvsv/bace.csv};

\end{axis}
\end{tikzpicture}
\begin{tikzpicture}[scale=0.55,line width=100pt]
\begin{axis}[
width  = 0.5\textwidth,
xlabel = {\# Training Samples}, 
tick label style={font=\small},
legend entries = {GraphLoG+TDL\textsubscript{atom},GraphLoG+TDL\textsubscript{ToDD},JOAO+TDL\textsubscript{atom},JOAO+TDL\textsubscript{ToDD}},
legend cell align=left, 
legend style={font=\small, 
at={(1.00,1.05)}, anchor=south east}, 
xtick =data,
every axis plot/.append style={thick}
]
\addplot [color=graphlog, mark=o,]
 plot [error bars/.cd, y dir = both, y explicit]
 table[x =x, y =GraphLoGTDL, y error =GraphLoGTDLstd,col sep=comma]{csvsv/bace.csv};
\addplot [color=graphlogtdl, mark=o,]
 plot [error bars/.cd, y dir = both, y explicit]
 table[x =x, y =GraphLoGTDLvr, y error =GraphCLTDLstdvr,col sep=comma]{csvsv/bace.csv};
\addplot [color=joao, mark=o,]
 plot [error bars/.cd, y dir = both, y explicit]
 table[x =x, y =JOAOTDL, y error =JOAOTDLstd,col sep=comma]{csvsv/bace.csv};
\addplot [color=joaotdl, mark=o,]
 plot [error bars/.cd, y dir = both, y explicit]
 table[x =x, y =JOAOTDLvr, y error =JOAOTDLstdvr,col sep=comma]{csvsv/bace.csv};

\end{axis}
\end{tikzpicture}
\begin{tikzpicture}[scale=0.55,line width=100pt]
\begin{axis}[
width  = 0.5\textwidth,
xlabel = {\# Training Samples}, 
tick label style={font=\small},
legend entries = {GraphCL+TDL\textsubscript{atom},GraphCL+TDL\textsubscript{ToDD},SimGRACE+TDL\textsubscript{atom},SimGRACE+TDL\textsubscript{ToDD}},
legend cell align=left, 
legend style={font=\small, 
at={(1.00,1.05)}, anchor=south east}, 
xtick =data,
every axis plot/.append style={thick}
]
\addplot [color=graphcl, mark=o,]
 plot [error bars/.cd, y dir = both, y explicit]
 table[x =x, y =GraphCLTDL, y error =GraphCLTDLstd,col sep=comma]{csvsv/bbbp.csv};
\addplot [color=graphcltdl, mark=o,]
 plot [error bars/.cd, y dir = both, y explicit]
 table[x =x, y =GraphCLTDLvr, y error =GraphCLTDLstdvr,col sep=comma]{csvsv/bbbp.csv};
\addplot [color=simgrace, mark=o,]
 plot [error bars/.cd, y dir = both, y explicit]
 table[x =x, y =SimGRACETDL, y error =SimGRACETDLstd,col sep=comma]{csvsv/bbbp.csv};
\addplot [color=simgracetdl, mark=o,]
 plot [error bars/.cd, y dir = both, y explicit]
 table[x =x, y =SimGRACETDLvr, y error =SimGRACETDLstdvr,col sep=comma]{csvsv/bbbp.csv};

\end{axis}
\end{tikzpicture}
\begin{tikzpicture}[scale=0.55,line width=100pt]
\begin{axis}[
width  = 0.5\textwidth,
xlabel = {\# Training Samples}, 
tick label style={font=\small},
legend entries = {GraphLoG+TDL\textsubscript{atom},GraphLoG+TDL\textsubscript{ToDD},JOAO+TDL\textsubscript{atom},JOAO+TDL\textsubscript{ToDD}},
legend cell align=left, 
legend style={font=\small, 
at={(1.00,1.05)}, anchor=south east}, 
xtick =data,
every axis plot/.append style={thick}
]
\addplot [color=graphlog, mark=o,]
 plot [error bars/.cd, y dir = both, y explicit]
 table[x =x, y =GraphLoGTDL, y error =GraphLoGTDLstd,col sep=comma]{csvsv/bbbp.csv};
\addplot [color=graphlogtdl, mark=o,]
 plot [error bars/.cd, y dir = both, y explicit]
 table[x =x, y =GraphLoGTDLvr, y error =GraphCLTDLstdvr,col sep=comma]{csvsv/bbbp.csv};
\addplot [color=joao, mark=o,]
 plot [error bars/.cd, y dir = both, y explicit]
 table[x =x, y =JOAOTDL, y error =JOAOTDLstd,col sep=comma]{csvsv/bbbp.csv};
\addplot [color=joaotdl, mark=o,]
 plot [error bars/.cd, y dir = both, y explicit]
 table[x =x, y =JOAOTDLvr, y error =JOAOTDLstdvr,col sep=comma]{csvsv/bbbp.csv};

\end{axis}
\end{tikzpicture}
\begin{tikzpicture}[scale=0.55,line width=100pt]
\begin{axis}[
width  = 0.5\textwidth,
xlabel = {\# Training Samples}, 
tick label style={font=\small},
legend entries = {GraphCL+TDL\textsubscript{atom},GraphCL+TDL\textsubscript{ToDD},SimGRACE+TDL\textsubscript{atom},SimGRACE+TDL\textsubscript{ToDD}},
legend cell align=left, 
legend style={font=\small, 
at={(1.00,1.05)}, anchor=south east}, 
xtick =data,
every axis plot/.append style={thick}
]
\addplot [color=graphcl, mark=o,]
 plot [error bars/.cd, y dir = both, y explicit]
 table[x =x, y =GraphCLTDL, y error =GraphCLTDLstd,col sep=comma]{csvsv/clintox.csv};
\addplot [color=graphcltdl, mark=o,]
 plot [error bars/.cd, y dir = both, y explicit]
 table[x =x, y =GraphCLTDLvr, y error =GraphCLTDLstdvr,col sep=comma]{csvsv/clintox.csv};
\addplot [color=simgrace, mark=o,]
 plot [error bars/.cd, y dir = both, y explicit]
 table[x =x, y =SimGRACETDL, y error =SimGRACETDLstd,col sep=comma]{csvsv/clintox.csv};
\addplot [color=simgracetdl, mark=o,]
 plot [error bars/.cd, y dir = both, y explicit]
 table[x =x, y =SimGRACETDLvr, y error =SimGRACETDLstdvr,col sep=comma]{csvsv/clintox.csv};

\end{axis}
\end{tikzpicture}
\begin{tikzpicture}[scale=0.55,line width=100pt]
\begin{axis}[
width  = 0.5\textwidth,
xlabel = {\# Training Samples}, 
tick label style={font=\small},
legend entries = {GraphLoG+TDL\textsubscript{atom},GraphLoG+TDL\textsubscript{ToDD},JOAO+TDL\textsubscript{atom},JOAO+TDL\textsubscript{ToDD}},
legend cell align=left, 
legend style={font=\small, 
at={(1.00,1.05)}, anchor=south east}, 
xtick =data,
every axis plot/.append style={thick}
]
\addplot [color=graphlog, mark=o,]
 plot [error bars/.cd, y dir = both, y explicit]
 table[x =x, y =GraphLoGTDL, y error =GraphLoGTDLstd,col sep=comma]{csvsv/clintox.csv};
\addplot [color=graphlogtdl, mark=o,]
 plot [error bars/.cd, y dir = both, y explicit]
 table[x =x, y =GraphLoGTDLvr, y error =GraphCLTDLstdvr,col sep=comma]{csvsv/clintox.csv};
\addplot [color=joao, mark=o,]
 plot [error bars/.cd, y dir = both, y explicit]
 table[x =x, y =JOAOTDL, y error =JOAOTDLstd,col sep=comma]{csvsv/clintox.csv};
\addplot [color=joaotdl, mark=o,]
 plot [error bars/.cd, y dir = both, y explicit]
 table[x =x, y =JOAOTDLvr, y error =JOAOTDLstdvr,col sep=comma]{csvsv/clintox.csv};

\end{axis}
\end{tikzpicture}
\begin{tikzpicture}[scale=0.55,line width=100pt]
\begin{axis}[
width  = 0.5\textwidth,
xlabel = {\# Training Samples}, 
tick label style={font=\small},
legend entries = {GraphCL+TDL\textsubscript{atom},GraphCL+TDL\textsubscript{ToDD},SimGRACE+TDL\textsubscript{atom},SimGRACE+TDL\textsubscript{ToDD}},
legend cell align=left, 
legend style={font=\small, 
at={(1.00,1.05)}, anchor=south east}, 
xtick =data,
every axis plot/.append style={thick}
]
\addplot [color=graphcl, mark=o,]
 plot [error bars/.cd, y dir = both, y explicit]
 table[x =x, y =GraphCLTDL, y error =GraphCLTDLstd,col sep=comma]{csvsv/sider.csv};
\addplot [color=graphcltdl, mark=o,]
 plot [error bars/.cd, y dir = both, y explicit]
 table[x =x, y =GraphCLTDLvr, y error =GraphCLTDLstdvr,col sep=comma]{csvsv/sider.csv};
\addplot [color=simgrace, mark=o,]
 plot [error bars/.cd, y dir = both, y explicit]
 table[x =x, y =SimGRACETDL, y error =SimGRACETDLstd,col sep=comma]{csvsv/sider.csv};
\addplot [color=simgracetdl, mark=o,]
 plot [error bars/.cd, y dir = both, y explicit]
 table[x =x, y =SimGRACETDLvr, y error =SimGRACETDLstdvr,col sep=comma]{csvsv/sider.csv};

\end{axis}
\end{tikzpicture}
\begin{tikzpicture}[scale=0.55,line width=100pt]
\begin{axis}[
width  = 0.5\textwidth,
xlabel = {\# Training Samples}, 
tick label style={font=\small},
legend entries = {GraphLoG+TDL\textsubscript{atom},GraphLoG+TDL\textsubscript{ToDD},JOAO+TDL\textsubscript{atom},JOAO+TDL\textsubscript{ToDD}},
legend cell align=left, 
legend style={font=\small, 
at={(1.00,1.05)}, anchor=south east}, 
xtick =data,
every axis plot/.append style={thick}
]
\addplot [color=graphlog, mark=o,]
 plot [error bars/.cd, y dir = both, y explicit]
 table[x =x, y =GraphLoGTDL, y error =GraphLoGTDLstd,col sep=comma]{csvsv/sider.csv};
\addplot [color=graphlogtdl, mark=o,]
 plot [error bars/.cd, y dir = both, y explicit]
 table[x =x, y =GraphLoGTDLvr, y error =GraphCLTDLstdvr,col sep=comma]{csvsv/sider.csv};
\addplot [color=joao, mark=o,]
 plot [error bars/.cd, y dir = both, y explicit]
 table[x =x, y =JOAOTDL, y error =JOAOTDLstd,col sep=comma]{csvsv/sider.csv};
\addplot [color=joaotdl, mark=o,]
 plot [error bars/.cd, y dir = both, y explicit]
 table[x =x, y =JOAOTDLvr, y error =JOAOTDLstdvr,col sep=comma]{csvsv/sider.csv};

\end{axis}
\end{tikzpicture}
\\~\\
\caption{Performance over smaller datasets. Each datapoint is the mean ROC-AUC of 10 different
splits while keeping scaffolds, with standard deviations shown as error bars. From the upper left to
the lower right: Bace, BBBP, Clintox, Sider.}
\label{fig:small-data-ml-todd}
\end{figure}
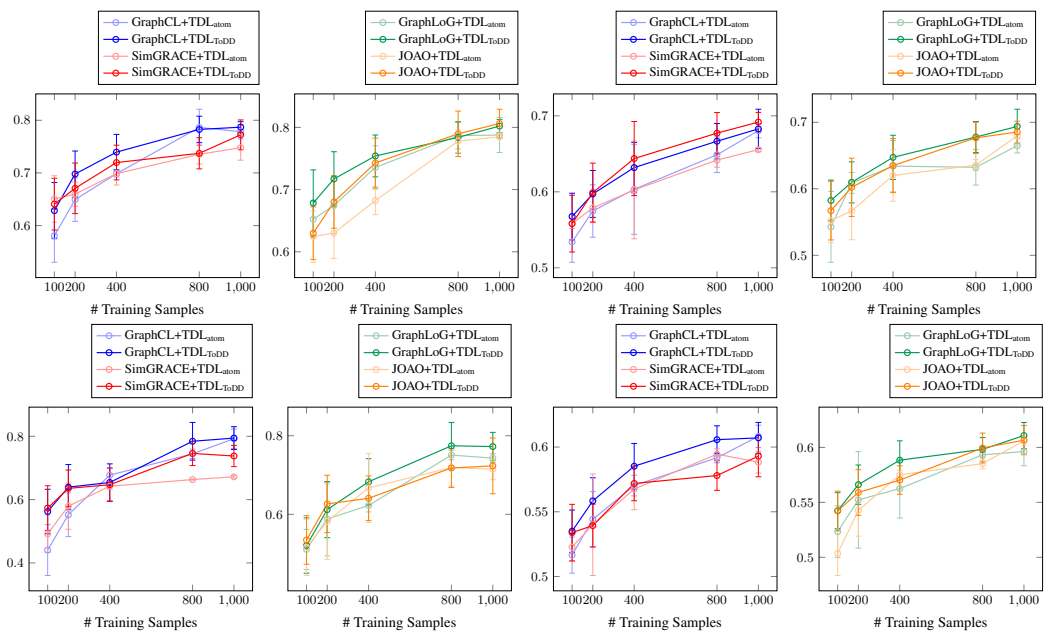

\section{Low-Data Scenario}\label{app:small}
Here we perform a more detailed experiment to illustrate how model performances change with increasing numbers of training samples. We fine-tune multiple models (GraphCL, JOAO, GraphLoG and SimGRACE) and TDL versions on 8 molecular property prediction datasets with training sets of different size. The experiment of each low-data size is run with 10 different subsets of the original training sets while keeping scaffolds. Figure~\ref{small-data} and Figure~\ref{fig:small-data-ml-todd} show that TDL often yields remarkable improvements. Specifically, we observe that TDL's impact seems to depend on two factors: (1)~in how far it is able to resolve issues of the baseline (e.g., the dimensional collapse of GraphCL), and (2)~in how far the PIs used for TDL ``suit'' the dataset considered (see also Figure~\ref{fig:pi-space-uselful-fp-sim-2}). We observe that the latter yields a task which needs to be solved together with domain experts (i.e., finding the best knowledge to use to create PIs).

\clearpage
\section{Unsupervised Learning}\label{app:unsupervised}

\textbf{Datasets.} In the unsupervised context, we adopted various molecular benchmarks from TUDataset \citep{morris2020tudataset}, including NCI1, MUTAG, PROTEINS, and DD. Table~\ref{tu-dataset} shows statistics for datasets.

\begin{table}[h]
  \caption{Summary for molecular datasets from the benchmark TUDataset.}
  \label{tu-dataset}
  \centering
  \begin{tabular}{ccccc}
    \toprule
    Dataset &\# Molecules &\# Class & Avg. \# Nodes & Avg. \# Edges\\
    \midrule
    NCI1 &4,110 &2 &29.87 &32.30 \\
    MUTAG &188 &2 &17.93 &19.79 \\
    PROTEINS &1,113 &2 &39.06 &72.82 \\
    DD &1,178 &2 &284.32 &715.66 \\
    \bottomrule
  \end{tabular}
\end{table}

\textbf{Configurations.} For the unsupervised graph classification task, we first train a representation model contrastively using unlabeled data, then fix the representation model and train the classifier using labeled data. Following GraphCL \citep{you2020graph}, we employ a 5-layer GIN with a hidden size of 128 as our representation model and utilize an SVM as our classifier. The GIN is trained with a batch size of 128 and a learning rate of 0.001. Regarding graph representation learning, models are trained for 20 epochs and tested every 10 epochs. We conduct a 10-fold cross-validation on every dataset. For each fold, we utilize 90\% of the total data as the unlabeled data for contrastive pre-training and the remaining 10\% as the labeled testing data. Every experiment is repeated 5 times using different random seeds, with mean and standard deviation of accuracies (\%) reported. We apply TDL on GraphCL and SimGRACE. The temperature parameter~$\tau$ for TDL here is chosen from $\{0.1, 0.2, 0.5, 1.0, 10.0\}$. Besides adding a TDL to the loss function in CL approaches, we did not make any modifications to the models.

\textbf{Results.} Table~\ref{tab:unsupervised} displays the comparison among different models for unsupervised learning, revealing that a general performance improvement is achieved with the TDL. We also compute the normalized Euclidean distance of embeddings from these unsupervised datasets, and the histogram is illustrated in Figure~\ref{fig:unsupervised}. The findings are consistent with our prior observations in transfer learning; however, the impact may not be as substantial due to the smaller number of training samples.

\begin{table}[h]
  \caption{Results of unsupervised learning. For each datasets, we report the mean (and standard deviation) accuracies of 5 seeds.}
  \label{tab:unsupervised}
  \centering
  \begin{tabular}{ccccc}
    \toprule
     &NCI1 &MUTAG &DD &PROTEINS \\
    \midrule
    GraphCL &77.87 (0.41)& 86.80 (1.34)& 78.62 (0.40)& 74.39 (0.45) \\
    GraphCL + TDL\textsubscript{atom} &\textbf{80.06} (0.37)& \textbf{89.12} (0.79)& \textbf{79.88} (0.47)& \textbf{75.59} (0.48) \\
    \midrule
    SimGRACE &79.12 (0.44)& 89.01 (1.31)& 77.44 (1.11)& 75.35 (0.09) \\
    SimGRACE + TDL\textsubscript{atom} &\textbf{80.08} (0.31)& \textbf{89.47} (1.09)& \textbf{79.01} (0.84)& \textbf{75.39} (0.57) \\
    \bottomrule
  \end{tabular}
\end{table}

\begin{figure*}[h]   \center{\includegraphics[width=14cm]  {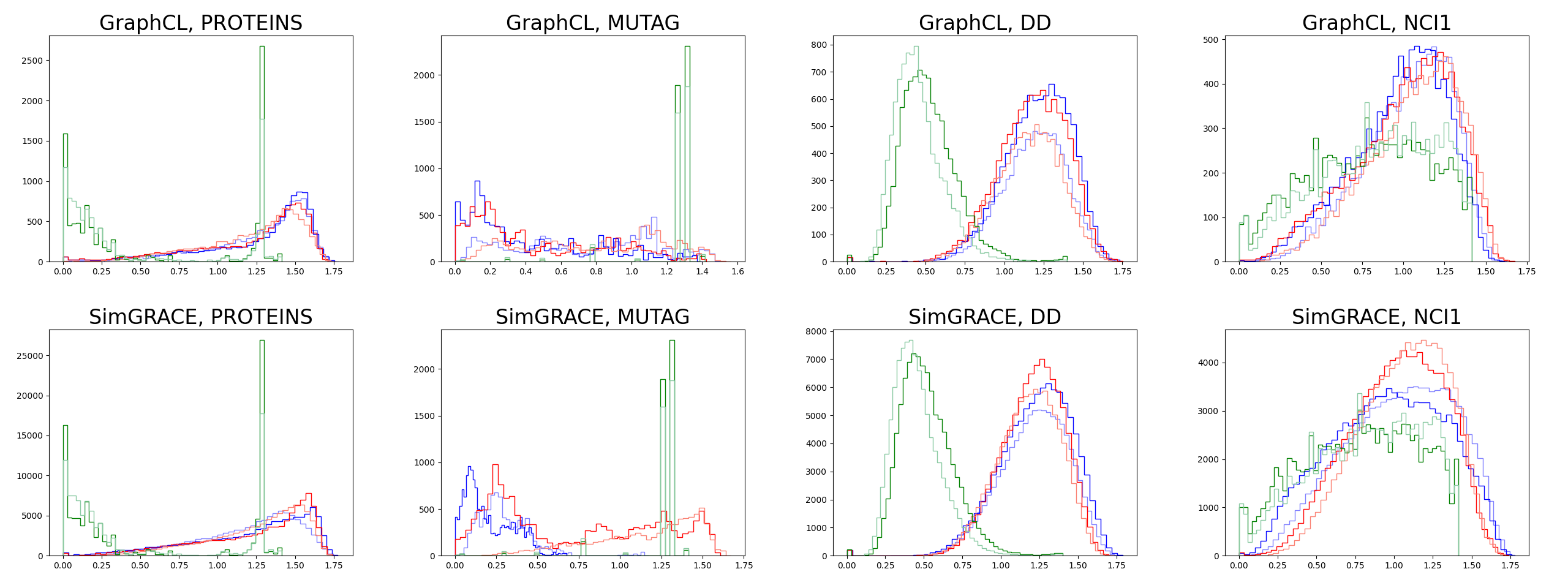}}   \caption{Evaluating alignment in unsupervised datasets. Dark/light shades denote positive/negative pairs, respectively. Blue illustrates the embedding of baselines, red depicts the embedding of baselines+TDL\textsubscript{atom}, and green represents PI\textsubscript{atom}.} \label{fig:unsupervised} \end{figure*}

\section{Ablation Studies}
\label{sec-ablation}
\textbf{Different Filtrations - Geometry.} Firstly, we consider detailed ablations of the filtration functions used in the graph filtration step of TAE\textsubscript{ahd}. That is, we use individual PIs (i.e., each PI is/was constructed based on a separate filtration function) instead of concatenation as reconstruction targets for TAE\textsubscript{ahd}. Consequently, we obtain 3 individual TAEs based on different PIs. We also conduct an experiment that involves integrating these 3 TAEs during the fine-tuning of downstream tasks. More specifically, we individually fine-tune these models, concatenate their embeddings after the readout, and finally project through a projection head for downstream task prediction. 
As demonstrated in Table~\ref{tab:ablation}, concatenation during fine-tuning (FT) does indeed combine the advantages of individual filtrations. These filtrations are very generic and the method is flexible in terms of what is useful downstream. That is, this approach would allow domain experts to flexibly use their own filtrations in combinations and together with existing methods, in dependence on the specific downstream data. This also suits the well-known fact that specific downstream data may benefit from specific 
domain knowledge.

For TDL, the filtration based on atom number provided overall best performance; we ran all experiments on PIs concatenated as above as well. We plan to investigate more in this direction in the future.

\textbf{Different Filtrations - Domain Knowledge.} 
Furthermore, we show results about incorporating different kinds of domain knowledge, here, atomic radius and electron affinity, as filtration functions. For TAE, we conduct two experiments: (1) We pre-train TAE using individual PIs generated by atomic radius and electron affinity as reconstruction targets; (2) We concatenate these 5 PIs (with the previous 3 PIs) as reconstruction targets. For TDL, we use these two individual PIs as topological fingerprints and apply TDL on GraphCL. Table~\ref{tab:ablation} shows that these two domain knowledge-based filtration functions achieve commendable average performance.
Yet, in summary, we do not observe considerable differences. Our hypothesis is that, similar to the atomic number, these kinds of domain knowledge capture the difference between the atoms, and hence are not too different overall. Based on this investigation, we plan to experiment with more complex sources of domain knowledge (e.g., the polarity of atoms, which is important information for the possible relationships between atoms).

\textbf{Different Filtrations - Multiple Dimensions}.
Here we consider the multidimensional Vietoris-Rips (multi-VR) filtration \citep{demir2022todd}. And we use the atomic number, heat kernel signature with $t = 0.1$ and node degree as filtration functions in the first dimension, respectively. Then we compute PDs and then vectorize them to get 3 PIs. Finally, we concatenate these 3 PIs as reconstruction targets to pre-train TAE.


\textbf{No Filtrations.} 
Please observe first that our study's (initial) research question has been the exploration of the usefulness of PH and topological fingerprints for molecular SSL.
Based on our investigation (see also Figure~\ref{fig:pi-space-uselful-fp-sim-2}), there is one very interesting outcome of our research and follow-up research question related to TDL: TDL is an instance of a more general proposal ``XDL'', a novel loss function for pre-training which is exploits distances between training samples. We conducted some initial experiments only using the Tanimoto distance between standard ECFP fingerprints and obtain  decent results; see Table~\ref{tab:ablation}. In future research, we will investigate the generality and potential impact of XDL more (e.g., combining ECFPs and topological fingerprints).

\textbf{Different Vectorizations of PDs}. Persistence Vectorizations transform the obtained Persistent Homology information (PDs) into either a functional or a feature vector form. This representation is more suitable for ML tools compared to PDs. In our study, we employ Persistence Images for vectorization. Apart from this, we also explored other stable vectorizations for TAE, such as Persistence Landscapes \citep{bubenik2015statistical} and Silhouettes \citep{chazal2014stochastic}. As shown in Table~\ref{ablation-vectorizations}, ablation experiments demonstrate that Persistence Images outperform the other two vectorization techniques.

\textbf{Hyperparameters Analysis}. We further perform a sensitivity analysis on the parameter $\lambda$ of GraphCL + TDL\textsubscript{ToDD} across 8 downstream molecular tasks. The results are presented in Table~\ref{hyperparameters}. It is noteworthy that as $\lambda$ decreases, the weight of TDL also decreases, leading to a noticeable decline in performance. This observation underscores the effectiveness of TDL.

\begin{table}[h]
  \caption{Ablation study results for 8 molecular property prediction tasks on ZINC15. For each downstream task, we report the mean (and standard deviation) ROC-AUC of 10 seeds.}
  \label{tab:ablation}
  \centering
  \resizebox{\linewidth}{!}{
  \begin{tabular}{cccccccccc}
    \toprule
    &Tox21 & ToxCast &Sider &ClinTox &MUV &HIV &BBBP &Bace &Average \\
    \midrule
    No pretrain &74.6 (0.4) &61.7 (0.5) &58.2 (1.7) &58.4 (6.4) &70.7 (1.8) &75.5 (0.8) &65.7 (3.3) &72.4 (3.8) &67.15 \\
    \midrule
    TAE\textsubscript{ahd} &75.2 (0.8) &63.1 (0.3) &61.9 (0.8) &80.6 (1.9) &74.6 (1.8) &73.5 (2.1) &67.5 (1.1) &82.5 (1.1) &72.36 \\
    TAE\textsubscript{atom} &74.6 (0.3) &61.8 (0.6) &62.5 (0.7) &71.0 (3.8) &75.7 (2.1) &75.5 (0.8)  &67.0 (1.1) &78.6 (1.3) &70.84 \\
    TAE\textsubscript{hks} &75.5 (0.3) &62.7 (0.3) &59.0 (0.6) &79.3 (3.0) &75.5 (1.4) &73.5 (1.2) &68.6 (0.5) &82.5 (0.8) &72.08 \\
    TAE\textsubscript{degree} &75.6 (0.6) &62.8 (0.5) &58.2 (0.8) &80.0 (1.3) &70.7 (2.0) &73.4 (1.6) &68.1 (0.8) &81.9 (0.6) &71.34 \\
    TAE\textsubscript{Concatenation during FT} &75.7 (0.6) &64.2 (0.4) &61.9 (1.2) &80.2 (2.1) &74.4 (2.9) &75.1 (1.1)  &68.1 (0.8) &82.1 (1.1) &72.71 \\
    TAE\textsubscript{Atomic radius} &76.5 (0.6) &63.3 (0.7) &63.2 (1.4) &76.2 (2.6) &71.9 (2.7) &74.3 (1.3) &67.4 (1.3) &80.9 (1.7) &71.71 \\
    TAE\textsubscript{Electron affinity} &75.6 (0.6) &63.0 (0.7) &64.0 (0.7) &78.5 (1.9) &72.6 (2.8) &76.3 (1.6) &68.9 (1.0) &81.5 (1.3) &72.55 \\
    TAE\textsubscript{Concatenation of 5 PIs} &75.9 (0.4) &63.8 (0.3) &62.5 (0.6) &81.6 (2.4) &75.6 (1.4) &74.2 (0.9) &66.6 (0.9) &82.7 (1.2) &72.86 \\
    \midrule
    TAE\textsubscript{Multi-VR filtration} &75.8 (0.6) &63.4 (0.6) &62.0 (0.7) &76.8 (2.6) &75.1 (1.2) &76.4 (0.6) &69.9 (0.9) &81.7 (1.3) &72.64 \\
    TAE\textsubscript{ToDD} &76.8 (0.9) &64.0 (0.5) &61.9 (0.8) &79.3 (3.6) &75.8 (3.2) &75.9 (1.1) &70.4 (0.8) &81.6 (1.4) &73.22 \\
    \midrule
        GraphCL &75.0 (0.3) &62.8 (0.2) &60.1 (1.3) &78.9 (4.2) &77.1 (1.0) &75.0 (0.4) &67.5 (3.3) &68.7 (7.8) &70.64 \\
    GraphCL + TDL\textsubscript{atom} &75.3 (0.4) &64.4 (0.3) &61.2 (0.6) &83.7 (2.7) &75.7 (0.8) &78.0 (0.9) &70.9 (0.6) &80.5 (0.8) &73.71 \\
    GraphCL + TDL\textsubscript{Atomic radius} &74.3 (0.3) &63.2 (0.3) &61.3 (0.4) &85.5 (1.5) &77.6 (2.4) &77.3 (1.3) &70.7 (0.6) &74.3 (1.3) &73.02 \\
    GraphCL + TDL\textsubscript{Electron affinity} &75.1 (0.3) &63.3 (0.4) &60.2 (0.6) &84.9 (3.0) &76.0 (1.4) &78.8 (1.9) &70.1 (0.6) &79.5 (1.1) &73.50 \\
     GraphCL + TDL\textsubscript{ToDD} &75.2 (0.7) &64.2 (0.3) &61.5 (0.4) &85.2 (1.8) &75.9 (2.1) &77.9 (0.8) &69.9 (0.9) &81.2 (1.9) &73.88 \\
    \midrule
    GraphCL + TDL\textsubscript{ECFP} &75.3 (0.4) &64.0 (0.2) &61.1 (0.6) &82.6 (2.8) &75.0 (1.1) &76.4 (0.9) &70.1 (0.7) &78.3 (1.7) &72.85 \\
    \bottomrule
  \end{tabular}}
\end{table}

\begin{table}[h]
  \caption{Comparison of the different vectorizations.}
  \label{ablation-vectorizations}
  \centering
     \resizebox{\linewidth}{!}{
  \begin{tabular}{cccccccccc}
    \toprule
    &Tox21 & ToxCast &Sider &ClinTox &MUV &HIV &BBBP &Bace &Average \\
    \midrule
    TAE\textsubscript{ahd} (Persistence Images) &75.2 (0.8) &63.1 (0.3) &61.9 (0.8) &80.6 (1.9) &74.6 (1.8) &73.5 (2.1) &67.5 (1.1) &82.5 (1.1) &72.36 \\
    TAE\textsubscript{ahd} (Persistence Landscapes) &75.6 (0.5) &63.9 (0.5) &59.5 (0.8) &73.5 (2.9) &73.6 (0.9) &74.0 (1.4) &69.7 (1.6) &79.8 (1.4) &71.21 \\
    TAE\textsubscript{ahd} (Silhouettes) &74.7 (0.6) &63.4 (0.6) &58.4 (0.7) &74.1 (2.4) &72.2 (1.6) &75.8 (1.4) &68.9 (0.9) &79.5 (1.4) &70.88 \\
    \bottomrule
  \end{tabular}}
\end{table}

\begin{table}[h]
  \caption{Sensitivity analysis on the parameter $\lambda$.}
  \label{hyperparameters}
  \centering
     \resizebox{\linewidth}{!}{
  \begin{tabular}{cccccccccc}
    \toprule
    GraphCL + TDL\textsubscript{ToDD} &Tox21 & ToxCast &Sider &ClinTox &MUV &HIV &BBBP &Bace &Average \\
    \midrule
$\lambda=0.1$& 74.2 (0.8)& 63.1 (0.2)& 60.8 (0.6)& 76.4 (5.8)& 73.3 (1.7)& 77.2 (1.6)& 70.2 (0.2)& 71.6 (0.9)& 70.85 \\
$\lambda=0.5$& 74.0 (0.5)& 63.2 (0.3)& 60.0 (0.4)& 78.6 (0.1)& 73.6 (1.4)& 77.2 (0.6)& 69.9 (0.3)& 76.2 (1.7)& 71.59 \\
$\lambda=1.0$& 75.2 (0.7)& 64.2 (0.3)& 61.5 (0.4)& 85.2 (1.8)& 75.9 (2.1)& 77.9 (0.8)& 69.9 (0.9)& 81.2 (1.9)& 73.88 \\
$\lambda=2.0$& 74.8 (0.3)& 63.6 (0.3)& 61.7 (0.7)& 85.8 (1.7)& 74.9 (0.8)& 77.2 (1.0)& 71.0 (0.7)& 81.8 (1.2)& 73.85 \\
    \bottomrule
  \end{tabular}}
\end{table}

\textbf{Different GNN Backbones}. We apply TDL on GraphCL. As shown in Table~\ref{ablation-backbone}, we \emph{verify the generality of TDL} by experimenting with three popular GNN models: GIN \citep{xupowerful}, GCN \citep{kipfsemi}, and GraphSAGE \citep{hamilton2017inductive}. Pre-training with GIN achieves the best performance across datasets, hence our results confirm what has been observed in related works.

\begin{table}[h]
  \caption{Comparison of the pre-training gains, represented in terms of averaged ROC-AUC (\%), obtained by using different GNN architectures on 8 datasets. }
  \label{ablation-backbone}
  \centering
 
  \begin{tabular}{cccc}
    \toprule
    Model & GCN & GIN & GraphSAGE \\
    \midrule
    No pretrain &68.77  &67.15  &68.32  \\
    GraphCL &69.32  &70.78  &70.08 \\
    GraphCL + TDL\textsubscript{atom} &72.53  &73.71  &71.16 \\
    \bottomrule
  \end{tabular}
\end{table}

\newpage

\section{PIs w/o SSL}\label{app:PIs}
Here we provide initial results for a comparison with 2 non pre-training ML models, i.e. support vector machine (SVM) and XGB \citep{chen2016xgboost} to test the regular prediction on 5 molecular property prediction tasks using PIs as topological fingerprints. And we also conduct smaller data experiments on SVM compared with TAE and TAE+ContextPred over 4 downstream datasets. Note that we performed a hyperparameter search to produce these ML results. As can be observed in Table~\ref{tab:PI_method} and Figure~\ref{fig:small-data-ml}, the SSL methods provide large benefits. We are certainly aware of the fact that there are more complex methods to use these PIs, yet \emph{this basic comparison gives a good hint of how SSL based on PH compares to supervised methods more generally.}

\begin{table}[h]
  \caption{Results of non pre-training method for molecular property prediction tasks on ZINC15, ROC-AUC (\%).}
  \label{tab:PI_method}
  \centering
  \begin{tabular}{cccccccccc}
    \toprule
    &Tox21 &Sider &ClinTox &BBBP &Bace \\
    \midrule
    SVM &53.9 &53.8 &60.1 &56.6 &72.3 \\
    XGB &64.7 &58.5 &55.7 &61.6 &73.8 \\
    \midrule
     ContextPred &75.7 &60.9 &65.9 &68.0 &79.6 \\
    TAE\textsubscript{ahd} + ContextPred &\textbf{76.4} &\textbf{62.0} &\textbf{74.6} &\textbf{68.9} &\textbf{80.7} \\
    \bottomrule
  \end{tabular}
\end{table}

 \begin{figure}[h]
\centering
\begin{tikzpicture}[scale=0.55,line width=100pt]
\begin{axis}[
width  = 0.5\textwidth,
xlabel = {\# Training Samples}, 
tick label style={font=\small},
legend entries = {ContextPred,TAE\textsubscript{ahd}+ContextPred,SVM},
legend cell align=left, 
legend style={font=\small, 
at={(1.00,1.05)}, anchor=south east}, 
xtick =data,
every axis plot/.append style={thick}
]
\addplot [color=pipcontextpred, mark=o,]
 plot [error bars/.cd, y dir = both, y explicit]
 table[x =x, y =pipcontextpred, y error =pipcontextpredstd,col sep=comma]{csv/bace.csv};
 \addplot [color=contextpred, mark=o,]
 plot [error bars/.cd, y dir = both, y explicit]
 table[x =x, y =contextpred, y error =contextpredstd,col sep=comma]{csv/bace.csv};
\addplot [color=SVM, mark=o,]
 plot [error bars/.cd, y dir = both, y explicit]
 table[x =x, y =SVM, y error =SVMstd,col sep=comma]{csv/bace.csv};

\end{axis}
\end{tikzpicture}
\begin{tikzpicture}[scale=0.55,line width=100pt]
\begin{axis}[
width  = 0.5\textwidth,
xlabel = {\# Training Samples}, 
tick label style={font=\small},
legend entries = {ContextPred,TAE\textsubscript{ahd}+ContextPred,SVM},
legend cell align=left, 
legend style={font=\small, 
at={(1.00,1.05)}, anchor=south east}, 
xtick =data,
every axis plot/.append style={thick}
]
\addplot [color=pipcontextpred, mark=o,]
 plot [error bars/.cd, y dir = both, y explicit]
 table[x =x, y =pipcontextpred, y error =pipcontextpredstd,col sep=comma]{csv/bbbp.csv};
 \addplot [color=contextpred, mark=o,]
 plot [error bars/.cd, y dir = both, y explicit]
 table[x =x, y =contextpred, y error =contextpredstd,col sep=comma]{csv/bbbp.csv};
\addplot [color=SVM, mark=o,]
 plot [error bars/.cd, y dir = both, y explicit]
 table[x =x, y =SVM, y error =SVMstd,col sep=comma]{csv/bbbp.csv};

\end{axis}
\end{tikzpicture}
\begin{tikzpicture}[scale=0.55,line width=100pt]
\begin{axis}[
width  = 0.5\textwidth,
xlabel = {\# Training Samples}, 
tick label style={font=\small},
legend entries = {ContextPred,TAE\textsubscript{ahd}+ContextPred,SVM},
legend cell align=left, 
legend style={font=\small, 
at={(1.00,1.05)}, anchor=south east}, 
xtick =data,
every axis plot/.append style={thick}
]
\addplot [color=pipcontextpred, mark=o,]
 plot [error bars/.cd, y dir = both, y explicit]
 table[x =x, y =pipcontextpred, y error =pipcontextpredstd,col sep=comma]{csv/clintox.csv};
 \addplot [color=contextpred, mark=o,]
 plot [error bars/.cd, y dir = both, y explicit]
 table[x =x, y =contextpred, y error =contextpredstd,col sep=comma]{csv/clintox.csv};
\addplot [color=SVM, mark=o,]
 plot [error bars/.cd, y dir = both, y explicit]
 table[x =x, y =SVM, y error =SVMstd,col sep=comma]{csv/clintox.csv};

\end{axis}
\end{tikzpicture}
\begin{tikzpicture}[scale=0.55,line width=100pt]
\begin{axis}[
width  = 0.5\textwidth,
xlabel = {\# Training Samples}, 
tick label style={font=\small},
legend entries = {ContextPred,TAE\textsubscript{ahd}+ContextPred,SVM},
legend cell align=left, 
legend style={font=\small, 
at={(1.00,1.05)}, anchor=south east}, 
xtick =data,
every axis plot/.append style={thick}
]
\addplot [color=pipcontextpred, mark=o,]
 plot [error bars/.cd, y dir = both, y explicit]
 table[x =x, y =pipcontextpred, y error =pipcontextpredstd,col sep=comma]{csv/sider.csv};
 \addplot [color=contextpred, mark=o,]
 plot [error bars/.cd, y dir = both, y explicit]
 table[x =x, y =contextpred, y error =contextpredstd,col sep=comma]{csv/sider.csv};
\addplot [color=SVM, mark=o,]
 plot [error bars/.cd, y dir = both, y explicit]
 table[x =x, y =SVM, y error =SVMstd,col sep=comma]{csv/sider.csv};

\end{axis}
\end{tikzpicture}
\\~\\
\caption{Performance of SVM over smaller datas. Each datapoint is the mean ROC-AUC of 10 different splits while keeping scaffolds, with standard deviations shown as error bars. From the left to the right: Bace, BBBP, ClinTox, Sider.}
\label{fig:small-data-ml}
\end{figure}
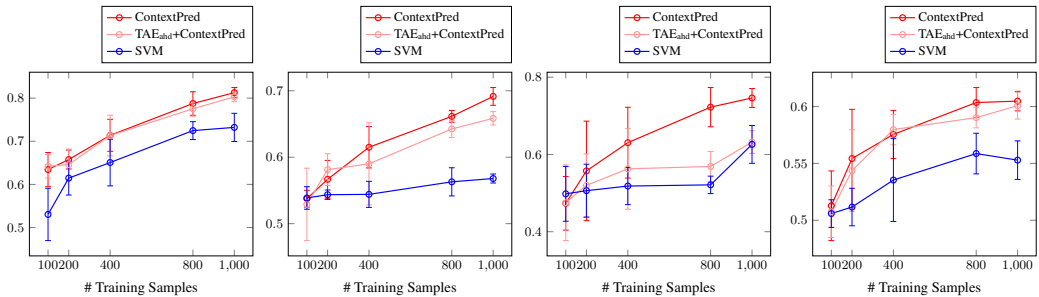

\section{Theoretical Analysis of TDL}\label{app:analysis}
In this section, we delve into a theoretical analysis of TDL, which was initially introduced in Section~\ref{subsec:tdl}. We encourage readers to refer to Section~\ref{subsec:tdl} for any specific notation, terminology, or formulation details.

Recall TDL is defined as follows:
\begin{linenomath*}
$$
\Lmc_{\tdli}=\frac{1}{N-1}\sum_{\substack{m \in \left[ 1,N \right] ,\\m \ne n}}{-\log}\frac{\exp \left( sim\left( z_n,z_{m } \right) /\tau \right)}{\sum_{\substack{k\in \left[ 1,N \right] ,\\k\ne n}}{\mathbb{I}_{\left[ dis\left( I_n,I_k \right) \ge dis\left( I_n,I_{m} \right) \right]}}\cdot \exp \left( sim\left( z_n,z_k \right) /\tau \right)}
$$
\end{linenomath*}

For the sake of convenience, we re-index the subscripts of samples, ensuring that the subscripts $d_i$ satisfy the sorted distances from the n-th sample, i.e., $dis(I_n,I_n)\le dis(I_n,I_{d_1})\le dis(I_n,I_{d2})\le dis(I_n,I_{d_3})\le ...\le dis(I_n,I_{d_{N-1}})$. Let $sim\left( z_n,z_{m} \right) ={z_n}z_{m}$ to streamline the derivations. We thus obtain
\begin{linenomath*}
$$
\Lmc_{\tdli}=
\frac{1}{N-1}\sum_{j\in \left[ 1,N-1 \right]}{-\log \frac{\mathrm{e}^{\left( z_n\cdot z_{d_j}/\tau \right)}}{\sum_{k\in \left[ j,N-1 \right]}{\mathrm{e}^{\left( z_n\cdot z_{d_k}/\tau \right)}}}}
$$
\end{linenomath*}

We then analyze the gradients with respect to different samples. 
\begin{theorem} \label{th1}
    The gradient of TDL$_n$ with respect to the sample $d_i$ in latent space is formulated as:
    $$
    \nabla _{z_{d_i}}\Lmc_{\tdli}=\frac{1}{N-1}\left( \sum_{j\in \left[ 1,i \right]}{\frac{e^{z_n\cdot z_{d_i}/\tau}}{\sum_{k\in \left[ j,N-1 \right]}{\mathrm{e}^{z_n\cdot z_{d_k}/\tau}}}-1} \right) \cdot \frac{z_n}{\tau}.
    $$
    In particular, $\nabla _{z_{d_1}}\Lmc_{\tdli} \propto -z_n$ and $\nabla _{z_{d_{N-1}}}\Lmc_{\tdli} \propto z_n$.
\end{theorem}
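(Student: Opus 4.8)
The plan is to compute $\nabla_{z_{d_i}}\Lmc_{\tdli}$ directly from the re-indexed form of the loss
\[
\Lmc_{\tdli}=
\frac{1}{N-1}\sum_{j\in[1,N-1]}{-\log \frac{\mathrm{e}^{z_n\cdot z_{d_j}/\tau}}{\sum_{k\in[j,N-1]}\mathrm{e}^{z_n\cdot z_{d_k}/\tau}}},
\]
term by term over $j$. First I would fix $i$ and note that $z_{d_i}$ appears in the $j$-th summand exactly when $i$ lies in the index set of the denominator, i.e.\ when $j\le i$ (since the denominator runs over $k\in[j,N-1]$), and additionally in the numerator only when $j=i$. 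So I would split the $j$-sum into the single term $j=i$ (numerator plus denominator contribution) and the terms $j<i$ (denominator only). For a fixed $j$, writing $L_j=-\log\mathrm{e}^{z_n\cdot z_{d_j}/\tau}+\log\sum_{k\in[j,N-1]}\mathrm{e}^{z_n\cdot z_{d_k}/\tau}$, the gradient of the log-sum-exp part with respect to $z_{d_i}$ (for $i\ge j$) is the familiar softmax weight
\[
\frac{\mathrm{e}^{z_n\cdot z_{d_i}/\tau}}{\sum_{k\in[j,N-1]}\mathrm{e}^{z_n\cdot z_{d_k}/\tau}}\cdot\frac{z_n}{\tau},
\]
and the gradient of the $-z_n\cdot z_{d_j}/\tau$ term is $-z_n/\tau$ when $j=i$ and $0$ otherwise.

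Collecting these pieces, the $j=i$ summand contributes $\bigl(\text{softmax weight}_{i,i}-1\bigr)z_n/\tau$ and each $j<i$ summand contributes $\bigl(\text{softmax weight}_{i,j}\bigr)z_n/\tau$, where $\text{softmax weight}_{i,j}=\mathrm{e}^{z_n\cdot z_{d_i}/\tau}/\sum_{k\in[j,N-1]}\mathrm{e}^{z_n\cdot z_{d_k}/\tau}$. Summing over $j\in[1,i]$ and pulling out the common factor $\tfrac{1}{N-1}\cdot\tfrac{z_n}{\tau}$ gives exactly
\[
\nabla_{z_{d_i}}\Lmc_{\tdli}=\frac{1}{N-1}\left(\sum_{j\in[1,i]}\frac{\mathrm{e}^{z_n\cdot z_{d_i}/\tau}}{\sum_{k\in[j,N-1]}\mathrm{e}^{z_n\cdot z_{d_k}/\tau}}-1\right)\frac{z_n}{\tau},
\]
which is the claimed formula (the single $-1$ comes only from the $j=i$ numerator term).

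For the two special cases, I would just read off $i=1$ and $i=N-1$. When $i=1$ the outer sum over $j$ has a single term $j=1$, and the softmax weight there lies strictly in $(0,1)$, so the scalar coefficient is negative and $\nabla_{z_{d_1}}\Lmc_{\tdli}\propto -z_n$. When $i=N-1$, the term $j=N-1$ in the sum contributes the weight $\mathrm{e}^{z_n\cdot z_{d_{N-1}}/\tau}/\mathrm{e}^{z_n\cdot z_{d_{N-1}}/\tau}=1$ (the denominator index set $[N-1,N-1]$ is the singleton $\{N-1\}$), which exactly cancels the $-1$; every remaining term $j<N-1$ contributes a strictly positive weight, so the coefficient is positive and $\nabla_{z_{d_{N-1}}}\Lmc_{\tdli}\propto z_n$.

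I do not expect a genuine obstacle here; the argument is a bookkeeping exercise in chain rule plus the standard log-sum-exp derivative. The one place to be careful is the index tracking: correctly identifying that $z_{d_i}$ enters the $j$-th term precisely for $j\le i$ (and the numerator only at $j=i$), so that the upper limit of the outer sum is $i$ rather than $N-1$, and making sure no off-by-one error creeps in at the boundary $j=i$. A secondary subtlety worth a sentence is that strict positivity of each softmax weight (hence the sign claims) uses that the exponentials are all positive and finite, which holds since the similarities $z_n\cdot z_{d_k}$ are finite real numbers; and for the $i=N-1$ case one should note the weight at $j=N-1$ equals exactly $1$, not just something close to it.
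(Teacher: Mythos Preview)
Your proposal is correct and follows essentially the same approach as the paper: both split the loss into per-$j$ summands, compute the three cases $i<j$, $i=j$, $i>j$ via the log-sum-exp derivative, and sum over $j\in[1,i]$. Your treatment of $i=N-1$ is in fact slightly more explicit than the paper's, since you spell out that the $j=N-1$ softmax weight equals exactly $1$ and cancels the $-1$, whereas the paper simply writes the resulting sum over $j\in[1,N-2]$ without comment.
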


\begin{proof}
The results follow from direct computation. Let $
\Lmc_{\text{TDL}_{n,d_j}}=-\log \frac{\mathrm{e}^{\left( z_n\cdot z_{d_j}/\tau \right)}}{\sum_{k\in \left[ j,N-1 \right]}{\mathrm{e}^{\left( z_n\cdot z_{d_k}/\tau \right)}}}
$, then we have:
$$
\Lmc_{\tdli}=\frac{1}{N-1}\sum_{j\in \left[ 1,N-1 \right]}{-\log \frac{\mathrm{e}^{\left( z_n\cdot z_{d_j}/\tau \right)}}{\sum_{k\in \left[ j,N-1 \right]}{\mathrm{e}^{\left( z_n\cdot z_{d_k}/\tau \right)}}}}=\frac{1}{N-1}\sum_{j\in \left[ 1,N-1 \right]}\Lmc_{\text{TDL}_{n,d_j}}
$$
Next, we discuss $\nabla _{z_{d_i}}\Lmc_{\text{TDL}_{n,d_j}}$:
$$
\nabla _{z_{d_i}}\Lmc_{\text{TDL}_{n,d_j}}=\begin{cases}
	0 & i<j\\
	\left( \frac{e^{z_n\cdot z_{d_i}/\tau}}{\sum_{k\in \left[ j,N-1 \right]}{\mathrm{e}^{z_n\cdot z_{d_k}/\tau}}}-1 \right) \cdot \frac{z_n}{\tau} & i=j\\
	\left( \frac{e^{z_n\cdot z_{d_i}/\tau}}{\sum_{k\in \left[ j,N-1 \right]}{\mathrm{e}^{z_n\cdot z_{d_k}/\tau}}} \right) \cdot \frac{z_n}{\tau} & i>j\\
\end{cases}
$$
Therefore, we conclude:
$$
    \nabla _{z_{d_i}}\Lmc_{\tdli}=\frac{1}{N-1}\nabla _{z_{d_i}}\sum_{j\in \left[ 1,N-1 \right]}\Lmc_{\text{TDL}_{n,d_j}}=\frac{1}{N-1}\left( \sum_{j\in \left[ 1,i \right]}{\frac{e^{z_n\cdot z_{d_i}/\tau}}{\sum_{k\in \left[ j,N-1 \right]}{\mathrm{e}^{z_n\cdot z_{d_k}/\tau}}}-1} \right) \cdot \frac{z_n}{\tau}
    $$

Then we have: 
$$\nabla _{z_{d_1}}\Lmc_{\tdli}=
\frac{1}{N-1}\left(\frac{e^{z_n\cdot z_{d_1}/\tau}}{\sum_{k\in \left[ 1,N-1 \right]}{\mathrm{e}^{z_n\cdot z_{d_k}/\tau}}}-1\right) \cdot \frac{z_n}{\tau}$$
The quantity $
\left(\frac{e^{z_n\cdot z_{d_1}/\tau}}{\sum_{k\in \left[ 1,N-1 \right]}{\mathrm{e}^{z_n\cdot z_{d_k}/\tau}}}-1 \right)
<0$ is a strictly negative scalar, allowing us to conclude the derivative $\nabla _{z_{d_1}}\Lmc_{\tdli}$ is proportional to $-z_n$.

Similarly, we have:

$$\nabla _{z_{d_{N-1}}}\Lmc_{\tdli}=\frac{1}{N-1}\left(\sum_{j\in \left[ 1,N-2 \right]}{\frac{e^{z_n\cdot z_{d_{N-1}}/\tau}}{\sum_{k\in \left[ j,N-1 \right]}{\mathrm{e}^{z_n\cdot z_{d_k}/\tau}}}}\right) \cdot \frac{z_n}{\tau}$$

Since $\left(\sum_{j\in \left[ 1,N-2 \right]}{\frac{e^{z_n\cdot z_{d_{N-1}}/\tau}}{\sum_{k\in \left[ j,N-1 \right]}{\mathrm{e}^{z_n\cdot z_{d_k}/\tau}}}}\right)>0$ we conclude in this case that the derivative $\nabla _{z_{d_{N-1}}}\Lmc_{\tdli}$ points in the direction $z_n$.
\end{proof}

The stability provided by certain topological fingerprints serves as a mathematically grounded, well-studied, and efficient proxy for stability with respect to graphs. TDL as our objective is based on the stability of these topological fingerprints. Consequently, for each sample, we can determine a ranking of all other samples. Lemma~\ref{th1} provides the gradient with respect to different samples. Specifically, samples that are distant from sample \(n \) in the topological space point in the direction \( -z_n \) (negative gradient direction), while those close to sample \( n \) point in the direction \( z_n \). Consequently, molecules with more similar topological features are brought closer to each other in the embedding space. 

\section{Potential of PH for Molecular ML}\label{app:potential}
Persistent homology (PH) has emerged as a crucial method in topological data analysis, demonstrating significant potential in decoding complex patterns within molecular data, including protein prediction \citep{cang2017topologynet,cang2017analysis,nguyen2019mathematical,xia2014persistent,liu2022dowker}, virtual screening \citep{cang2018representability}, and drug design \citep{liu2021hypergraph}. The many works on PH in chemistry show that the domain is convinced by the usefulness of topological fingerprints already, and we therefore believe that this kind of knowledge should be investigated in the context of SSL, assuming that the latter becomes more important with the advancement of foundation models. 



\section{Limitations \& Broader Impact}\label{app:impact}
\textbf{Broader Impact.}
Graph data has a pervasive impact in many diverse fields and the popularity certainly increases the danger of an application of the models with negative effects (even if just due to ignorance).
As pointed out in this work w.r.t.\ smaller data, we have to be aware of the assumptions we make in our research to inform readers on how the results translate to the real world.
Our goal is to advance biomedicine, chemistry, and material science, where our research hopefully provides benefits one day, for both the current society and future generations.

\textbf{Limitations.}
We have tried our best in addressing common issues, 
by providing a very detailed analysis, by considering \tdl on top of various baselines, by also reporting mixed results, {and by providing all code}.
However, due to time and resource constraints, and the broadness of our study, we were not able to experiment with a variety of topological fingerprints,
analyze individual baselines (e.g., the dimensional collapse) in more detail, or complete the theoretical investigations. 

\end{document}